\definecolor {processblue}{cmyk}{0.96,0,0,0}
\tikzstyle{int}=[draw, fill=blue!20, minimum size=2em]
\tikzstyle{init} = [pin edge={to-,thin,black}]
\pgfplotsset{compat=1.14}
\newtheorem{defn}{Definition}[section]
\newtheorem{lem}{Lemma}[section]
\newtheorem{rem}{Remark}[section]
\newtheorem{thm}{Theorem}[section]
\newtheorem{cor}{Corollary}[section]
\newcommand{\ex}[2]{{\ifx&#1& \mathbb{E} \else {\mathbb{E}_{#1}} \fi \left[#2\right]}}
\newcommand{\indp}{\perp\!\!\!\perp} 
\newcommand{\pr}[1]{\left(#1\right)}
\newcommand{\br}[1]{\left[#1\right]}
\newcommand{\abs}[1]{\left|#1\right|}
\newcommand{\kl}[2]{\mathrm{D_{KL}}\left(#1 ||#2 \right)}
\theoremstyle{plain}
\theoremstyle{definition}
\theoremstyle{remark}
\icmltitlerunning{Tighter Information-Theoretic Generalization Bounds from Supersamples}
\begin{document}


\twocolumn[
\icmltitle{
Tighter Information-Theoretic Generalization Bounds from Supersamples
}



\icmlsetsymbol{equal}{*}

\begin{icmlauthorlist}
\icmlauthor{Ziqiao Wang}{yyy}
\icmlauthor{Yongyi Mao}{yyy}
\end{icmlauthorlist}

\icmlaffiliation{yyy}{Department of Electrical Engineering and Computer Science, University of Ottawa, Ottawa, Canada}

\icmlcorrespondingauthor{Ziqiao Wang}{zwang286@uottawa.ca}
\icmlcorrespondingauthor{Yongyi Mao}{ymao@uottawa.ca}

\icmlkeywords{Machine Learning, ICML}

\vskip 0.3in
]


\printAffiliationsAndNotice{}  

\begin{abstract}
In this work, we present a variety of novel information-theoretic generalization bounds for learning algorithms, from the supersample setting of \citet{steinke2020reasoning}\textemdash the setting of the ``conditional mutual information'' framework. Our development exploits projecting the loss pair (obtained from a training instance and a testing instance) down to a single number and  correlating loss values with a Rademacher sequence (and its shifted variants). The presented bounds include square-root bounds, fast-rate bounds, including those based on variance and sharpness, and bounds for interpolating algorithms etc. We show theoretically or empirically that these bounds are tighter than all information-theoretic bounds known to date on the same supersample setting.
\end{abstract}
\section{Introduction}
Using information-theoretic bounds to analyze the generalization properties of a learning algorithm has attracted increasing attention since the seminal works of \cite{russo2016controlling,russo2019much,xu2017information}. One major advantage of such bounds is that the information-theoretic quantities, e.g., the mutual information (MI) between the training sample and the trained parameter weights, are both distribution-dependent and algorithm-dependent. This makes them an ideal tool to characterize the generalization properties of a learning algorithm, particularly when the traditional algorithm-independent learning-theoretic tools (e.g., VC-dimension \cite{SLT98Vapnik} and Rademacher complexity \cite{bartlett2002rademacher}) appear inadequate. For example, \citet{zhang2017understanding,zhang2021understanding} show that the high-capacity deep neural networks can still generalize well,  contradicting the traditional wisdom in statistical learning theory that suggests
complex models tend to overfit the training data and perform poorly on unseen data \cite{SLT98Vapnik}. In contrast, the information-theoretic bounds have experimentally demonstrated that they are capable of 
tracking the generalization behaviour of modern neural networks \cite{negrea2019information,wang2021analyzing,harutyunyan2021informationtheoretic,wang2022generalization,wang2022two,hellstrom2022a}.

The original information-theoretic bound of \citet{xu2017information} has been extended or improved in many different ways, such as the chaining method \cite{asadi2018chaining,hafez2020conditioning,zhou2022stochastic,clerico22a}, the random subset or individual technique \cite{negrea2019information,bu2019tightening,haghifam2020sharpened,rodriguez2021random,zhou2022individually} and so on. 
Remarkably, \citet{steinke2020reasoning} has developed generalization bounds based on a conditional mutual information (CMI) measure obtained for a ``supersample'' setting. Specifically, the supersample is an $n\times 2$ matrix of data instances. In each row, one instance is selected at random for training and the other is masked out for testing. The authors then show that the CMI of the mask variables and the learned weights conditioned on the supersample can be used to upper-bound the generalization error. 
Although better behaving than the unconditional weight-based MI bounds (e.g., having boundedness guaranty), the CMI bounds can be difficult to measure for high-dimensional weights, which limits their application. To overcome such difficulty, functional CMI ($f$-CMI) bounds are proposed by \citet{harutyunyan2021informationtheoretic}, where the weight variable in CMI is replaced by the predictions for the supersample. In this case, each prediction pair is a two-dimensional discrete random variable, making the CMI easier to measure and also a tighter bound. 
More recently, \citet{hellstrom2022a} uses loss pairs to replace the predictions in $f$-CMI and obtain even tighter CMI bounds, known as evaluated CMI (e-CMI) bounds. In fact, the earliest version of e-CMI bound appeared in \citet{steinke2020reasoning}. The notion was also exploited in later works \cite{haghifam2021towards,haghifam2022understanding,haghifam2022limitations}. Note that e-CMI still measures the dependence between an one-dimensional variable (mask) and a two-dimensional variable (loss pair). In this work, we show that it is possible to further tighten the CMI bounds, using MI terms involving only two one-dimensional variables.

Our development is restricted to the supersample setting of \citet{steinke2020reasoning}, on which we establish novel CMI/MI bounds which are all easy to measure and tighter than the existing bounds in the same setting. Specifically, 
{\bf 1)} 
we first show that the loss pair used in e-CMI can be replaced by the loss difference, giving rise to a disintegrated CMI bound (Theorem~\ref{thm:bound-LD-cimi}) and an unconditional MI bound (Theorem~\ref{thm:bound-LD-ucimi}). Both are tighter than the previous square-root CMI bounds, all within the context of the same supersample construction. 
In particular, the obtained unconditional MI term can be interpreted as the \emph{achievable rate} over a memoryless channel in
communications.
We then show that in the interpolating regime (i.e., training error being zero) and under zero-one loss, the generalization error of the learning algorithm can be precisely expressed by the averaged communication rate (Theorem~\ref{thm:channel-interpolating}). In other words, we obtain the ``tightest bound'' of generalization error in this setting. We also establish a novel chained MI bound (Theorem~\ref{thm:bound-chaining-LD}) that is particularly advantageous for 
continuous and unbounded losses. 
{\bf 2)} 
Following a symmetric argument for Rademacher process, similar to \citet{zhivotovskiy2018localization}, we explicitly exploit the symmetric structure of expected generalization error by correlating losses with a Rademacher sequence and obtain a novel MI bound involving single losses (Theorem~\ref{bound-single-loss-sq}). 
Using the communication perspective, we show that the MI quantities in the bound are upper-bounded by the entropy function evaluated at half of the testing error
(Theorem~\ref{thm:channel-capacity-z}). 
{\bf 3)} 
By correlating losses with a shifted Rademacher sequence, we give novel fast-rate MI bounds of the weighted generalization error (Theorem~\ref{thm:bound-fast-rate-rademacher}). 
{\bf 4)} 
In order to enhance the fast-rate bound in the non-zero training error regime, we extend our analysis by deriving two additional bounds: a variance-based MI bound (Theorem~\ref{thm:bound-variance}) and a sharpness-based MI bound (Theorem~\ref{thm:bound-flatness}). These novel bounds also incorporate symmetric arguments, as shown in Lemma~\ref{lem:vaiance-symmetric} and Lemma~\ref{lem:flatness-symmetric}, respectively.
{\bf 5)} 
Experimental results show that our bounds nicely track the generalization dynamics of both linear models and non-linear neural networks, and our fast-rate bounds are tighter than the binary KL bound proposed in \citet{hellstrom2022a}, the tightest information-theoretic bound known to date for small, non-zero training error. 
{\bf 6)} 
As a by-product, we also develop a novel Wasserstein distance based bound (Theorem~\ref{thm:bound-LD-wass}). 

Proofs, additional analysis and experimental results are included in Appendix. 

\section{Preliminaries}
\subsection{Probability and Information Theory Notation}
Unless otherwise noted,  a random variable will be denoted by a capitalized letter, and  its realization by the corresponding lower-case letter. Let $P_X$ denote the distribution of a random variable $X$ and let $P_{X|Y}$ be the conditional distribution of $X$ conditioned on $Y$, which, upon conditioning on a specific realization, is denoted by $P_{X|Y=y}$ or simply $P_{X|y}$. 
Similarly, $\mathbb{E}_{X}$ is the expectation taken over $X\sim P_X$ and $\mathbb{E}_{X|Y=y}$ (or $\mathbb{E}_{X|y}$)is the expectation taken over  $X\sim P_{X|Y=y}$.
Let $H(\cdot)$ be the entropy and let $\mathrm{D_{KL}}(P||Q)$  denote the KL divergence of $P$ with respect to $Q$. Let $I(X;Y)$
be the mutual information (MI) between $X$ and $Y$, and $I(X;Y|Z)$ the conditional mutual information between $X$ and $Y$ conditioned on $Z$. Following \citep{negrea2019information}, we refer to $I^z(X;Y)\triangleq \mathrm{D_{KL}}(P_{X,Y|Z=z}||P_{X|Z=z}P_{Y|Z=z})$ as the disintegrated mutual information, and note that $I(X;Y|Z)=\ex{Z}{I^{Z}(X;Y)}$. Also, we use $\mathbb{W}(\cdot,\cdot)$ to denote the Wasserstein distance (formal definition is given in Appendix).
Throughout the paper, logarithm takes base $e$, making the unit of mutual information {\em nat}.

\subsection{Generalization Error}
We consider the supervised learning setting. Let $\mathcal{Z}=\mathcal{X}\times\mathcal{Y}$ be the domain of the instances, where  $\mathcal{X}$ and $\mathcal{Y}$ are input and label spaces respectively. A model of interest prescribes a family  
$\mathcal {F}$ of predictors, ${\mathcal F}\subset {\mathcal Y}^{\mathcal X}$, 
where each $f\in {\mathcal F}$ is parameterized by a vector $w$ in some space ${\cal W}$. We may write $f$ as $f_w$ as needed.
Let $\mu$ be the distribution of the instance and let $S=\{Z_i\}_{i=1}^n\overset{i.i.d}{\sim} \mu$ be the training sample. There is a learning algorithm $\mathcal{A}:\mathcal{Z}^n\rightarrow \mathcal{W}$, which takes the training sample $S$ as the input and outputs a hypothesis $W\in\mathcal{W}$, giving rise to a predictor $f_W\in \mathcal {F}$ that predicts label $Y$ for input $X$. Note that the algorithm $\mathcal {A}$ is characterized by a conditional distribution $P_{W|S}$. Suppose that the quality of the output hypothesis $W$ is evaluated by a loss function $\ell:\mathcal{W}\times\mathcal{Z}\rightarrow \mathbb{R}_0^+$. 
Then for a given $w$, we define the population risk $L_\mu(w)\triangleq \ex{Z'}{\ell(w,Z')}$, where $Z'\sim\mu$ is a testing instance. The quantity $L_\mu=\ex{W}{L_\mu(W)}$ is then the expected population risk. In practice, we cannot access the data distribution $\mu$, so we usually use the empirical risk as a proxy of the population risk, which is defined as $L_S(w)\triangleq \frac{1}{n}\sum_{i=1}^n\ell(w,Z_i)$ for a fixed $w$. Similarly, $L_n=\ex{W,S}{L_S(W)}$ is the expected empirical risk, where the expectation is taken over 
$P_{W,S}=\mu^n\otimes P_{W|S}$. Thus, $\mathrm{Err}\triangleq L_\mu-L_n$ is the expected generalization error.

\subsection{Supersample Setting}
The CMI framework for bounding generalization errors is first introduced in \citet{steinke2020reasoning}.
Let $\widetilde{Z}\in \mathcal{Z}^{n\times 2}$ be an $n\times 2$ matrix, serving as ``supersample'', where every entry is drawn i.i.d. from $\mu$. For notational convenience, we assume that the columns of $\widetilde{Z}$ are indexed by $\{0, 1\}$ instead of by $\{1, 2\}$. 
We further denote the $i$th row of $\widetilde{Z}$ as $\widetilde{Z}_i$ with entries $(\widetilde{Z}_{i,0},\widetilde{Z}_{i,1})$.
Let $U=(U_1, U_2, \ldots, U_n)^T\sim \mathrm{Unif}(\{0,1\}^n)$, independent of $\widetilde{Z}$, be used to select a training set $S$ from $\widetilde{Z}$:
$U_i=0$ dictates that $\widetilde{Z}_{i, 0}$ in $\widetilde{Z}$ be included in the training set $S$, and $\widetilde{Z}_{i, 1}$ be used for testing; $U_i$=1 dictates the opposite.  
Then, the constructed training sample $S$ is  equivalent to $\widetilde{Z}_U=\{\widetilde{Z}_{i,U_i}\}_{i=1}^n$. Let $\overline{U}_i=1-U_i$, then the testing sample is $\widetilde{Z}_{\overline{U}}=\{\widetilde{Z}_{i,\overline{U}_i}\}_{i=1}^n$. 
In addition, let $L_{i,0}\triangleq {\ell(\mathcal{A}(\widetilde{Z}_{U}),\widetilde{Z}_{i,0})}$ and $L_{i,1}$ defined similarly. We use $L_i=(L_{i,0},L_{i,1})$ to denote the loss pair in the $i$th row and $\Delta L_i=L_{i,1}-L_{i,0}$ be the difference in the pair. To avoid clutter, 
we might use the superscripts $+$ and $-$ to respectively replace the subscripts $0$ and $1$ in our notations, 
 e.g., $\widetilde{Z}^+_{i}=\widetilde{Z}_{i,0}$, $\widetilde{Z}^-_{i}=\widetilde{Z}_{i,1}$, $L^+_{i}=L_{i,0}$ and $L^-_{i}=L_{i,1}$.

\section{Generalization Bounds via Loss Difference}
\label{sec:loss-difference}

\subsection{Loss-Difference CMI Bound}
Using the loss difference, we first present the following square-root CMI bound.
\begin{thm}
    \label{thm:bound-LD-cimi}
    Assume that the loss is bounded between $[0,1]$, we have
    \[
    \left|\mathrm{Err}\right| \leq\frac{1}{n}\sum_{i=1}^n\mathbb{E}_{\widetilde{Z}}\sqrt{2I^{\widetilde{Z}}(\Delta L_i;U_i)}\leq\frac{1}{n}\sum_{i=1}^n\sqrt{2I(\Delta L_i;U_i|{\widetilde{Z}})}.
    \]
\end{thm}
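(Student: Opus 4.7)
The plan is to rewrite the expected generalization error as a sum of ``symmetric'' terms coupling $U_i$ with $\Delta L_i$, and then bound each term via a standard change-of-measure / sub-Gaussian argument on the disintegrated product measure.

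First I would establish the key decomposition
\[
\mathrm{Err} = \frac{1}{n}\sum_{i=1}^n \mathbb{E}\bigl[(1-2U_i)\,\Delta L_i\bigr].
\]
To see this, note that $\widetilde{Z}_{i,\overline{U}_i}$ is a fresh i.i.d.\ draw from $\mu$ that is independent of $W=\mathcal{A}(\widetilde{Z}_U)$, so $L_\mu = \mathbb{E}[L_{i,\overline{U}_i}]$ for every $i$, while $L_n = \frac{1}{n}\sum_i \mathbb{E}[L_{i,U_i}]$. On the row level, $L_{i,\overline{U}_i}-L_{i,U_i}$ equals $\Delta L_i$ when $U_i=0$ and $-\Delta L_i$ when $U_i=1$, i.e.\ $(1-2U_i)\Delta L_i$.

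Next I would disintegrate on $\widetilde{Z}=\tilde z$ and study $\mathbb{E}[(1-2U_i)\Delta L_i\mid \widetilde{Z}=\tilde z]$. Since $U_i\sim\mathrm{Unif}\{0,1\}$ is independent of $\widetilde{Z}$, under the product measure $P_{U_i}\otimes P_{\Delta L_i\mid \tilde z}$ the quantity $(1-2U_i)\Delta L_i$ has zero mean (the Rademacher factor $1-2U_i$ has zero mean and is independent of $\Delta L_i$ under the product). Since $L_{i,0},L_{i,1}\in[0,1]$, we have $(1-2U_i)\Delta L_i\in[-1,1]$, so this variable is $1$-sub-Gaussian under the product. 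Invoking the standard MI-based change-of-measure inequality — for $f(X,Y)$ that is $\sigma$-sub-Gaussian and zero-mean under $P_X\otimes P_Y$, $\bigl|\mathbb{E}_{P_{X,Y}}[f]-\mathbb{E}_{P_X\otimes P_Y}[f]\bigr|\le\sqrt{2\sigma^2 I(X;Y)}$ — with $X=U_i$, $Y=\Delta L_i$, and $\sigma=1$ yields
\[
\bigl|\mathbb{E}[(1-2U_i)\Delta L_i\mid \widetilde{Z}=\tilde z]\bigr|\le \sqrt{2\,I^{\tilde z}(U_i;\Delta L_i)}.
\]
Note $I^{\tilde z}(U_i;\Delta L_i)=I^{\tilde z}(\Delta L_i;U_i)$, matching the statement.

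Finally, I would take absolute values in the decomposition, push them inside the expectation by the triangle inequality, and average over $\widetilde{Z}$ to get the first (disintegrated) bound. The second inequality follows immediately by Jensen's inequality applied to the concave function $\sqrt{\cdot}$, together with the identity $I(\Delta L_i;U_i\mid\widetilde{Z})=\mathbb{E}_{\widetilde{Z}}[I^{\widetilde{Z}}(\Delta L_i;U_i)]$. The main thing to be careful about is the zero-mean-under-the-product step: one must exploit that $U_i\indp \widetilde Z$ (even though $\Delta L_i$ depends on all of $\widetilde Z$ and on $U$ through the algorithm), which is exactly why disintegrating on $\widetilde Z$ rather than on $(\widetilde Z,U_{-i})$ suffices. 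The rest is routine.
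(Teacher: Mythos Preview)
Your proposal is correct and follows essentially the same route as the paper: decompose $\mathrm{Err}$ into the per-row terms $\mathbb{E}[(-1)^{U_i}\Delta L_i]$ (the paper writes $(-1)^{U_i}$ where you write $1-2U_i$), disintegrate on $\widetilde{Z}$, and bound each term by the $1$-sub-Gaussian change-of-measure inequality against the product measure, then apply Jensen for the conditional form. The only cosmetic difference is that the paper unrolls the Donsker--Varadhan/Hoeffding argument inline rather than citing the packaged Xu--Raginsky lemma you invoke.
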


Noting the Markov chain $U-W-f_W(\widetilde{Z}_i)-L_i-\Delta L_i$ (conditioned on $\widetilde{Z}$) and due to the data-processing inequality (DPI), this ``loss-difference CMI'' (or ``ld-CMI'') bound in Theorem~\ref{thm:bound-LD-cimi} (the second bound) is tighter than the bound in the previous works \citep{steinke2020reasoning,haghifam2020sharpened,harutyunyan2021informationtheoretic,hellstrom2022a}, namely,
    $
    \underbrace{I(\Delta L_i;U_i|{\widetilde{Z}})}_{\rm ld-CMI} \leq 
    \underbrace{I(L_i;U_i|{\widetilde{Z}})}_{\rm e-CMI}\leq \underbrace{I(f_W(\widetilde{Z}_i);U_i|{\widetilde{Z}})}_{f-{\rm CMI}} \leq \underbrace{I(W;U_i|{\widetilde{Z}})}_{\rm CMI}
    $. 
    It is remarkable that the ld-CMI bound can be significantly tighter. To see this, 
we re-express the loss function $\ell$ as a function on ${\cal Y}^2={\cal Y}\times {\cal Y}$, where $l(y, y')$ is  the loss value of the predicted label $y$ with respect to true label $y'$. We say that two elements $(y_1, y'_1)$ and $(y_2, y'_2)$ in ${\cal Y}^2$ are {\em loss-equivalent} and write 
$(y_1, y'_1) \equiv_\ell (y_2, y'_2)$ if $\ell(y_1, y'_1) = \ell(y_2, y'_2)$. It is straight-forward to verify that $\equiv_\ell$ is an equivalence relation on ${\cal Y}^2$. Let ${\cal L}$ denote the image of ${\cal Y}^2$ under $\ell$.
The quotient space ${\cal Y}^2/\equiv_\ell$, or the set of equivalence classes modulo $\equiv_\ell$, has a one-to-one correspondence with ${\cal L}$, under which we may identify ${\cal Y}^2/\equiv_\ell$ with ${\cal L}$. Furthermore, we say that two loss pairs
$(\ell_A, \ell'_A)$ and $(\ell_B, \ell'_B)$ in ${\cal L}^2={\cal L}\times {\cal L}$ are {\em loss-difference-equivalent} and write 
$(\ell_A, \ell'_A) \equiv_\Delta (\ell_B, \ell'_B)$ if $\ell_A - \ell'_A = \ell_B- \ell'_B$. Then $\equiv_\Delta$ is likewise an equivalence relation on ${\cal L}^2$, which induces the quotient space ${\cal L}^2/\equiv_\Delta$. Note that $f_W(\widetilde{Z}_i)$ is a random variable on ${\cal Y}^4={\cal Y}^2\times {\cal Y}^2$ whereas $\Delta L_i$ is a essentially a random variable on ${\cal L}^2/\equiv_\Delta$, which can be identified with $\left({\cal Y}^2/\equiv_\ell\right)^2/\equiv_\Delta$ under the aforementioned one-to-one correspondence. There can be a significant reduction of space size from ${\cal Y}^4$ to $\left({\cal Y}^2/\equiv_\ell\right)^2/\equiv_\Delta$ when ${\cal Y}$ or ${\cal L}$ is large (assuming they are finite, to fix ideas). Thus, $\Delta L_i$ reveals much less information about $U_i$ than
$f_w(\widetilde{Z}_i)$ does, making the term $I(\Delta L_i;U_i|{\widetilde{Z}})$ significantly smaller than $I(f_W(\widetilde{Z}_i);U_i|{\widetilde{Z}})$ and suggesting that 
the ld-CMI bound 
can be much tighter than the $f$-CMI bound. A similar argument can be made comparing the ld-CMI and the e-CMI bounds.



It is noteworthy that the loss-difference CMI bound 
is easier to compute than the $f$-CMI and e-CMI bounds, 
since $\Delta L_i$ is a scalar. Interestingly, when regarding $\Delta L_i$ as a (scaled) one-dimensional projection of $L_i$ on a particular direction, the term $I(\Delta L_i;U_i|{\widetilde{Z}})$ shares some similarity with the notion of \emph{Sliced Mutual Information} (SMI) recently proposed in \citep{goldfeld2021sliced,goldfeld2022ksliced}; the difference is that SMI requires averaging over a random direction of projection.

\begin{figure}[htbp]
\centering
\tikzstyle{signal}=[draw=none,fill=none]
\tikzstyle{spre}=[semithick, <-]
\tikzstyle{tpre}=[thick, <-]
\tikzstyle{vpre}=[very thick, <-]
\tikzstyle{upre}=[ultra thick, <-]

\begin{subfigure}{0.23\textwidth}
\centering
\begin{tikzpicture}[bend angle=45]
  \node[signal] (u0) {$0$};
  \node[signal] (u1) [below of=u0, node distance=1.6cm] {$1$};
  \node[signal, right of=u0, node distance=3cm] (l1) {$1$};
\draw [semithick, ->] (u0) -- (l1) node[font =\scriptsize, midway,above] {$1-\alpha_i-\epsilon_i$};
\draw [semithick, ->] (u1) -- (l1) node[font =\scriptsize, pos=.2,above] {$\epsilon_i$};
  \node[signal, below of=l1, node distance=0.8cm] (l0) {$0$};
  \draw [semithick, ->] (u0) -- (l0) node[font =\scriptsize, midway,above] {$\alpha_i$};
\draw [semithick, ->] (u1) -- (l0) node[font =\scriptsize, midway,below] {$\alpha_i$};
  \node[signal, right of=u1, node distance=3cm] (l2) {$-1$};
  \draw [semithick, ->] (u1) -- (l2) node[font =\scriptsize, midway,below] {$1-\alpha_i-\epsilon_i$};
\draw [semithick, ->] (u0) -- (l2) node[font =\scriptsize, pos=.2,below] {$\epsilon_i$};
\end{tikzpicture}
\label{fig:bs-be}
\end{subfigure}
\begin{subfigure}{0.23\textwidth}
\centering
\begin{tikzpicture}[bend angle=45]
  \node[signal] (u0) {$0$};
  \node[signal] (u1) [below of=u0, node distance=1.6cm] {$1$};
  \node[signal, right of=u0, node distance=3cm] (l1) {$0$};
\draw [semithick, ->] (u0) -- (l1) node[font =\scriptsize, midway,above] {$1-p_i$};
\draw [semithick, ->] (u1) -- (l1) node[font =\scriptsize, pos=.3,left] {$q_i$};
  \node[signal, right of=u1, node distance=3cm] (l2) {$1$};
  \draw [semithick, ->] (u1) -- (l2) node[font =\scriptsize, midway,below] {$1-q_i$};
\draw [semithick, ->] (u0) -- (l2) node[font =\scriptsize, pos=.3,left] {$p_i$};
\end{tikzpicture}
\label{fig:ba-z}
\end{subfigure}
\caption{Left: channel from $U_i$ to $\Delta L_i$. Right: channel from $U_i$ to $L^+_i$. Zero-one loss assumed.
}
\label{fig:binarychannel}
\end{figure}
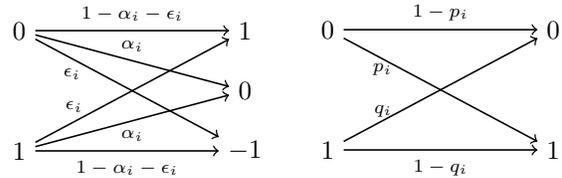
\subsection{Loss-Difference MI Bound}

Under the setting of supersample as above, we can also obtain a generalization bound based on the  loss-difference MI without conditioning on the supersample.  
\begin{thm}
    \label{thm:bound-LD-ucimi}
    Assume that $\ell(\cdot,\cdot)\in[0,1]$, then
    \[
    \left|\mathrm{Err}\right| \leq\frac{1}{n}\sum_{i=1}^n\sqrt{2I(\Delta L_i;U_i)}.
    \]
\end{thm}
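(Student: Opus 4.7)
\textbf{Proof Plan for Theorem~\ref{thm:bound-LD-ucimi}.}
My plan is to reduce the expected generalization error to a sum of expectations over the joint distribution $P_{\Delta L_i, U_i}$ with zero-mean counterparts under the product of marginals, and then to invoke the standard sub-Gaussian MI bound of \citet{xu2017information}. The key identity I will exploit is that the sign-flip $(1-2U_i) \in \{-1, +1\}$ converts the loss difference $\Delta L_i = L_{i,1}-L_{i,0}$ into the signed ``test-minus-train'' gap:
\[
(1-2U_i)\Delta L_i \;=\; L_{i,\overline{U}_i}-L_{i,U_i}.
\]

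First, I will rewrite $\mathrm{Err}=L_\mu-L_n$ in the supersample notation. Since $S=\widetilde Z_U$ and $W$ is conditionally independent of $\widetilde Z_{i,\overline{U}_i}$ given the training sample, exchangeability gives $\mathbb{E}[L_{i,\overline{U}_i}]=L_\mu$ and $\mathbb{E}[L_{i,U_i}]=L_n$ for every $i$. Combined with the identity above, this yields
\[
\mathrm{Err}=\frac{1}{n}\sum_{i=1}^n\mathbb{E}_{P_{\Delta L_i,U_i}}\!\bigl[(1-2U_i)\Delta L_i\bigr].
\]

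Next, I will show that each summand equals the gap between joint and product-marginal expectations of the same functional. Under $P_{\Delta L_i}\otimes P_{U_i}$, the independent copy $U_i'$ is uniform on $\{0,1\}$, so $\mathbb{E}[1-2U_i']=0$ and therefore $\mathbb{E}_{P_{\Delta L_i}\otimes P_{U_i}}[(1-2U_i)\Delta L_i]=0$. Hence each summand coincides with a ``MI-type'' discrepancy. Since $\ell(\cdot,\cdot)\in[0,1]$, we have $\Delta L_i\in[-1,1]$ and thus $(1-2U_i)\Delta L_i\in[-1,1]$; by Hoeffding's lemma it is $1$-sub-Gaussian under the product measure.

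Applying the Donsker--Varadhan based bound of \citet{xu2017information} (equivalently the argument used to derive the standard square-root MI bound), for every $i$,
\[
\bigl|\mathbb{E}_{P_{\Delta L_i,U_i}}[(1-2U_i)\Delta L_i]\bigr| \;\le\; \sqrt{2\,I(\Delta L_i;U_i)}.
\]
Finally, I finish via the triangle inequality:
\[
|\mathrm{Err}|\le\frac{1}{n}\sum_{i=1}^n\bigl|\mathbb{E}[(1-2U_i)\Delta L_i]\bigr|\le\frac{1}{n}\sum_{i=1}^n\sqrt{2\,I(\Delta L_i;U_i)}.
\]

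I do not anticipate any significant obstacle here, since every ingredient is already developed in the excerpt. The only subtle point to verify carefully is the mean-zero property under the product, which relies on $U_i$ being marginally $\mathrm{Unif}(\{0,1\})$ and independent of $\widetilde Z$ in the supersample construction; this ensures that in the product coupling $\mathbb{E}[1-2U_i']=0$ regardless of the (possibly complicated) marginal of $\Delta L_i$. The rest is a direct application of the sub-Gaussian MI inequality with $\sigma=1$ coming from $[-1,1]$-boundedness.
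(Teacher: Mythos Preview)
Your proposal is correct and is essentially identical to the paper's own proof: both rewrite $\mathrm{Err}$ as $\frac{1}{n}\sum_i \mathbb{E}[(-1)^{U_i}\Delta L_i]$ (your $1-2U_i$ is the same sign-flip), observe that this functional has zero mean under $P_{\Delta L_i}\otimes P_{U_i}$ and is $1$-sub-Gaussian by $[-1,1]$-boundedness, then apply the Xu--Raginsky lemma and the triangle inequality.
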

    By the independence of $U_i$ and $\widetilde{Z}$,  $I(\Delta L_i;U_i)\leq I(\Delta L_i;U_i)+I(U_i;\widetilde{Z}|\Delta L_i)=I(\Delta L_i;U_i|\widetilde{Z})$. Then the bound in Theorem~\ref{thm:bound-LD-ucimi} is tighter than 
    ld-CMI bound in Theorem~\ref{thm:bound-LD-cimi}, although not directly comparable to the first bound in Theorem~\ref{thm:bound-LD-cimi}.



It is interesting to relate the MI $I(\Delta L_i; U_i)$ to a communication setting where $P_{\Delta L_i|U_i}$ specifies a memoryless channel with input $U_i$ and output $\Delta L_i$. Then $I(\Delta L_i; U_i)$ is the {\em rate of reliable communication} over this channel achievable with the input distribution $P_{U_i}$ (which is ${\rm Bern}(\frac{1}{2})$ by the construction of $U$) \cite{shannon1948mathematical}.  Consider the special case where $\ell(\cdot, \cdot)$ is the {\em zero-one loss}, i.e., 
$\ell(w,z)= \mathbbm{1}_{f_w(x)\neq y}$. In this case, $\Delta L_i\in\{-1,0,1\}$, and the channel is shown in Figure~\ref{fig:binarychannel} (left), in which $\epsilon_i$ and $\alpha_i$ are transition probabilities as shown on the respective transition edges. In particular, recalling $\Delta L_i=L^-_{i}-L^+_{i}$, we see that $\alpha_i$ is the probability that in $\widetilde{Z}_i$ the instance selected from training has the same loss value as that selected for testing, and that $\epsilon_i$ is the probability that the training instance in $\widetilde{Z}_i$ has a higher loss value than the testing instance. It follows that 
any {\em interpolating algorithm}, namely, one that achieves zero training error must have $\epsilon_i=0$ for each $i$. The following theorem can then be proved.

\begin{thm}
    \label{thm:channel-interpolating} Under zero-one loss and for any interpolating algorithm ${\cal A}$,  $I(\Delta L_i;U_i)=(1-\alpha_i)\ln{2}$ nats for each $i$, and $\abs{\mathrm{Err}}=L_\mu=\sum_{i=1}^n\frac{I(\Delta L_i;U_i)}{n\ln{2}}$.
\end{thm}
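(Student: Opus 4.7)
The plan is to directly evaluate the channel $P_{\Delta L_i \mid U_i}$ induced by an interpolating algorithm and then relate it to $L_\mu$. First I would unpack the interpolation assumption: under zero-one loss, when $U_i=0$ (so $\widetilde{Z}_{i,0}$ is in the training set) we must have $L^+_i=0$, hence $\Delta L_i=L^-_i\in\{0,1\}$; symmetrically, when $U_i=1$ we get $L^-_i=0$ and $\Delta L_i=-L^+_i\in\{-1,0\}$. This rules out the ``crossing'' edges in the left panel of Figure~\ref{fig:binarychannel}, forcing $\epsilon_i=0$. Let $\alpha_i\triangleq P(\Delta L_i=0\mid U_i=0)$; by the symmetric roles of the two columns in the supersample construction, $P(\Delta L_i=0\mid U_i=1)=\alpha_i$ as well, so the channel is completely specified by $\alpha_i$.

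Next I would compute $I(\Delta L_i;U_i)=H(\Delta L_i)-H(\Delta L_i\mid U_i)$. Since $U_i\sim\mathrm{Bern}(1/2)$, marginalization yields $P(\Delta L_i=0)=\alpha_i$ and $P(\Delta L_i=\pm 1)=(1-\alpha_i)/2$, so
\[
H(\Delta L_i)=-\alpha_i\ln\alpha_i-(1-\alpha_i)\ln\tfrac{1-\alpha_i}{2}=h(\alpha_i)+(1-\alpha_i)\ln 2,
\]
where $h(\alpha_i)\triangleq-\alpha_i\ln\alpha_i-(1-\alpha_i)\ln(1-\alpha_i)$. On the other hand, $H(\Delta L_i\mid U_i=u)=h(\alpha_i)$ for both $u\in\{0,1\}$ by the channel law just derived, so $H(\Delta L_i\mid U_i)=h(\alpha_i)$. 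Subtracting gives $I(\Delta L_i;U_i)=(1-\alpha_i)\ln 2$ nats, which is the first claim.

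For the second claim, interpolation forces $L_n=0$, so $\mathrm{Err}=L_\mu\geq 0$ and $|\mathrm{Err}|=L_\mu$. To match $L_\mu$ with $1-\alpha_i$, I would use that $W=\mathcal{A}(\widetilde{Z}_U)$ depends on row $i$ only through $\widetilde{Z}_{i,U_i}$, hence $W\perp\widetilde{Z}_{i,\bar U_i}\mid U_i$; since $\widetilde{Z}_{i,\bar U_i}\sim\mu$ regardless of $U_i$, this yields $L_\mu=\mathbb{E}[\ell(W,\widetilde{Z}_{i,\bar U_i})]=\mathbb{E}[L_{i,\bar U_i}]$. Under interpolation, $L_{i,\bar U_i}=|\Delta L_i|$, so $\mathbb{E}[L_{i,\bar U_i}\mid U_i=u]=P(|\Delta L_i|=1\mid U_i=u)=1-\alpha_i$ for both $u$, giving $L_\mu=1-\alpha_i$ for every $i$. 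Averaging over $i$ and substituting the first claim then yields $L_\mu=\frac{1}{n}\sum_{i=1}^n(1-\alpha_i)=\frac{1}{n\ln 2}\sum_{i=1}^n I(\Delta L_i;U_i)$. The whole argument is essentially bookkeeping; the one point requiring care is translating the interpolation assumption into the channel structure cleanly and checking the conditional independence $W\perp\widetilde{Z}_{i,\bar U_i}\mid U_i$ used to express $L_\mu$ in terms of the held-out loss.
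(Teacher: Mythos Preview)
Your proof is correct and follows essentially the same route as the paper: identify the channel structure under interpolation ($\epsilon_i=0$), compute $I(\Delta L_i;U_i)=(1-\alpha_i)\ln 2$, and then show the per-row held-out error equals $1-\alpha_i$. The only minor difference is that you compute the mutual information directly via $H(\Delta L_i)-H(\Delta L_i\mid U_i)$, whereas the paper invokes its channel-capacity lemma (Lemma~\ref{lem:channel-capacity}) with $\epsilon_i=0$; your direct computation is slightly more self-contained but otherwise equivalent.
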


    In this case, the generalization error is exactly determined by the communication rate over the channel in Figure~\ref{fig:binarychannel} (left) averaged over all such channels, making Theorem~\ref{thm:channel-interpolating} the obviously the ``tightest bound'' of generalization error in the ``interpolating regime''. It is of course also tighter than the interpolating bound in \citet{hellstrom2022a}, which may be alternatively seen from 
    $I(\Delta L_i;U_i)\leq I(L_i;U_i|\widetilde{Z})$.
    Note that \citet{haghifam2022understanding} also gives a MI quantity that can determine the generalization error in the interpolating case, although their leave-one-out MI is between an $n+1$-dimensional random variable and an one-dimensional random variable, and its corresponding bound is
    established without exploiting the communication perspective.


Furthermore, it is possible to establish further tightened loss-difference MI bounds for more general loss functions than those required in Theorem \ref{thm:bound-LD-ucimi}. Specifically, the loss function can be unbounded and continuous, as presented in next theorem, where we apply 
the chaining technique \cite{asadi2018chaining,hafez2020conditioning,zhou2022stochastic,clerico22a} and the obtained bound consists of   
MI terms between $U_i$ and the successively quantized versions of $\Delta L_i$. 
To that end,  let $\mathrm{Err}^i(\Delta \ell_i)\triangleq (-1)^{U_i}\Delta \ell_i$ and let $\Gamma\subseteq \mathbb{R}$ be the range of $\Delta \ell$. Then $\{\mathrm{Err}^i(\Delta \ell_i)\}_{\Delta \ell_i\in\Gamma}$ is a random process\footnote{\label{fn-chain}Some prerequisite definitions of the chaining technique (such as \emph{stochastic chain}, \emph{separable process} and \emph{sub-Gaussian process})
     are give in the Appendix~\ref{sec:defns and lemmas}.}, applying the \emph{stochastic chaining} method \cite{zhou2022stochastic} gives the following chained MI bound.

\begin{thm}
    \label{thm:bound-chaining-LD}
    For each $i\in [n]$, we assume $\{\Delta L_{i,k}\}_{k=k_0}^\infty$ is a stochastic chain\textsuperscript{\ref{fn-chain}}  of 
    $(\{\mathrm{Err}^i(\Delta \ell_i)\}_{\Delta \ell_i\in\Gamma},\Delta L_i)$, then
    \[
    \mathrm{Err}\leq \frac{1}{n}\sum_{i=1}^n\sum_{k=k_0}^\infty\sqrt{2\ex{}{
    |\Delta L_{i,k}-\Delta L_{i,k-1}|^2
    }I(\Delta L_{i,k};U_i)},
    \]
    where $\Delta L_{i,k}$ is the $k$th level of quantization of $\Delta L_{i}$, the RHS expectation is taken over $(\Delta L_{i,k},\Delta L_{i,k-1})$. 
\end{thm}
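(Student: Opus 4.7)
The plan is to telescope $\Delta L_i$ along the stochastic chain and then bound each telescoping increment by a Cauchy--Schwarz step combined with the binary-entropy Pinsker inequality, using the data-processing inequality to replace the mutual information of the increment by that of the quantization itself.

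\emph{Step 1: symmetric rewrite and telescoping.} A case analysis on $U_i\in\{0,1\}$ (as already used to derive Theorem~\ref{thm:bound-LD-ucimi}) gives $\mathrm{Err}=\frac{1}{n}\sum_{i=1}^n \mathbb{E}[(-1)^{U_i}\Delta L_i]$. Taking $\Delta L_{i,k_0-1}$ to be a deterministic root of the chain, $\mathbb{E}[(-1)^{U_i}\Delta L_{i,k_0-1}]=0$ because $U_i$ is independent of the constant and $\mathbb{E}[(-1)^{U_i}]=0$, so one may write
\[
(-1)^{U_i}\Delta L_i=\sum_{k\ge k_0}(-1)^{U_i}V_{i,k}, \qquad V_{i,k}\triangleq \Delta L_{i,k}-\Delta L_{i,k-1},
\]
assuming the chain converges in a sense that permits interchange of $\mathbb{E}$ with $\sum_k$.

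\emph{Step 2: bound on each increment.} Fix $k$ and let $p(v)\triangleq\Pr[U_i=0\mid V_{i,k}=v]$. Conditioning on $V_{i,k}$ and using that $U_i$ is uniform on $\{0,1\}$,
\[
\mathbb{E}[(-1)^{U_i}V_{i,k}]=\mathbb{E}\!\left[V_{i,k}\,(2p(V_{i,k})-1)\right],
\]
so Cauchy--Schwarz yields $|\mathbb{E}[(-1)^{U_i}V_{i,k}]|\le\sqrt{\mathbb{E}[V_{i,k}^2]}\cdot\sqrt{\mathbb{E}[(2p(V_{i,k})-1)^2]}$. The binary-entropy Pinsker inequality $\log 2 - h(p)\ge 2(p-\tfrac12)^2$, applied pointwise and integrated against $V_{i,k}$, gives $\mathbb{E}[(2p(V_{i,k})-1)^2]\le 2\,I(V_{i,k};U_i)$, since $I(V_{i,k};U_i)=\log 2-\mathbb{E}[h(p(V_{i,k}))]$. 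Because the chain construction makes $\Delta L_{i,k-1}$ a deterministic coarsening of $\Delta L_{i,k}$, the increment $V_{i,k}$ is itself a function of $\Delta L_{i,k}$, and DPI upgrades this to $I(V_{i,k};U_i)\le I(\Delta L_{i,k};U_i)$. Combining,
\[
|\mathbb{E}[(-1)^{U_i}V_{i,k}]|\le\sqrt{2\,\mathbb{E}[V_{i,k}^2]\,I(\Delta L_{i,k};U_i)}.
\]
Substituting into the telescoping identity, applying the triangle inequality across the sum in $k$, and averaging over $i$ produces the stated bound.

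\emph{Main obstacle.} The algebra of Step~2 is a short, by-now standard combination of Cauchy--Schwarz and binary Pinsker; the real work is in Step~1. One must show that the stochastic chain converges to $\Delta L_i$ in an $L^1$ sense that legitimizes both the termwise inequality and the interchange of $\mathbb{E}$ with $\sum_k$, and one needs the nestedness of the refinement (each $\Delta L_{i,k-1}$ being determined by $\Delta L_{i,k}$) for the DPI upgrade in Step~2 to be valid. For unbounded or continuous losses these are precisely the structural properties that the ``stochastic chain'' formalism recalled in Appendix~\ref{sec:defns and lemmas} is designed to provide (separability plus a controlled refinement schedule), so invoking that framework dispatches the remaining technicalities.
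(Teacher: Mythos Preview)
Your Step~2 takes a different, more elementary route than the paper: where the paper applies the Donsker--Varadhan representation together with Hoeffding's lemma to the pair $(\Delta L_{i,k},\Delta L_{i,k-1})$ and then collapses $I(\Delta L_{i,k},\Delta L_{i,k-1};U_i)$ to $I(\Delta L_{i,k};U_i)$ via the chain rule of mutual information, you condition on the increment $V_{i,k}$, compute the posterior $p(V_{i,k})$, and combine Cauchy--Schwarz with Pinsker's inequality for Bernoulli distributions. Both routes land on the same square-root bound per increment, and yours avoids the variational machinery entirely.

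However, your justification for the DPI step rests on a misreading of the stochastic-chain definition. You write that ``the chain construction makes $\Delta L_{i,k-1}$ a deterministic coarsening of $\Delta L_{i,k}$,'' but Definition~\ref{defn:stochastic-chain} only asserts the Markov chain $U_i-\Delta L_i-\Delta L_{i,k}-\Delta L_{i,k-1}$; it does not require $\Delta L_{i,k-1}$ to be a function of $\Delta L_{i,k}$. The same confusion appears in Step~1, where you take $\Delta L_{i,k_0-1}$ to be a deterministic constant, whereas the definition only guarantees $\mathbb{E}[(-1)^{U_i}\Delta L_{i,k_0}]=0$. Fortunately the Markov property is enough to rescue your DPI step: since $U_i\perp\!\!\!\perp\Delta L_{i,k-1}\mid\Delta L_{i,k}$ and $V_{i,k}$ is a function of the pair $(\Delta L_{i,k},\Delta L_{i,k-1})$, one still has $I(V_{i,k};U_i)\leq I(\Delta L_{i,k},\Delta L_{i,k-1};U_i)=I(\Delta L_{i,k};U_i)$. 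Likewise, no $L^1$ interchange is needed in Step~1: telescope finitely, take expectations, and invoke properties (1) and (2) of the chain directly, as the paper does. As written, your argument conflates the general stochastic chain with the deterministic hierarchical partition of Corollary~\ref{cor:bound-hierarchical-chain}; once that is corrected, the proof goes through.
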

    Notice that the bound is expressed as MI terms each involving
    $U_i$ and $\Delta L_{i, k}$, both being discrete random variables. 
    This has not arose in the previous chained weight-based MI bounds where they either contain the continuous random variable $S$ \cite{asadi2018chaining,zhou2022stochastic,clerico22a} or are conditioned on the continuous random variable $\widetilde{Z}$ \cite{hafez2020conditioning}. Additionally, by the master definition of MI \citep[Eq.(8.54)]{thomas2006elements}, we know that $I(\Delta L_i;U_i)=\sup_k I(\Delta L_{i,k};U_i)$,
    and $I(\Delta L_{i,k};U_i)\to I(\Delta L_i;U_i)$ when $k\to \infty$.

For bounded loss,  the diameter $\mathrm{diam}(\Gamma)$ is finite, we can use hierarchical partitions as in \citet{asadi2018chaining} to construct a deterministic sequence of $\{\Delta L_{i,k}\}_{k=k_0}^\infty$. This is deferred to Corollary~\ref{cor:bound-hierarchical-chain} in Appendix.


\subsection{Loss-Difference Bound Beyond CMI and MI}

It is possible to develop generalization bounds based on the loss differences in the supersample using distances or divergences beyond the information-theoretic measures. Here we present such a bound based on  Wasserstein distance. As investigated in the previous literature \citep{rodriguez2021tighter}, Wasserstein distance usually gives a tighter bounds than the mutual information.

\begin{thm}
    \label{thm:bound-LD-wass}
    Assume that $\ell(\cdot,\cdot)\in[0,1]$, then 
    \[
    \left|\mathrm{Err}\right| \leq\frac{1}{n}\sum_{i=1}^n\ex{U_i}{\mathbb{W}(P_{\Delta L_i|U_i},P_{\Delta L_i})}.
    \]
\end{thm}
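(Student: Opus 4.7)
The plan is to use Kantorovich–Rubinstein duality after expressing the generalization error in a symmetric form built from $\Delta L_i$ and $U_i$. The Wasserstein distance here should be the $1$-Wasserstein distance with respect to the Euclidean metric on $\mathbb{R}$, which is well-defined because $\Delta L_i \in [-1,1]$ when $\ell(\cdot,\cdot) \in [0,1]$.

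First I would rewrite the generalization error in per-row form. By the supersample construction, the training loss at row $i$ is $L_{i,U_i}$ and the test loss is $L_{i,\overline{U}_i}$, so the per-row signed loss gap is $L_{i,\overline{U}_i} - L_{i,U_i} = (-1)^{U_i}(L_{i,1} - L_{i,0}) = (-1)^{U_i}\Delta L_i$. Taking expectations gives
\[
\mathrm{Err} \;=\; \frac{1}{n}\sum_{i=1}^n \mathbb{E}\!\left[(-1)^{U_i}\Delta L_i\right].
\]
Next, I would establish the key symmetry $\mathbb{E}[\Delta L_i] = 0$. Swapping the two entries $\widetilde{Z}_{i,0}$ and $\widetilde{Z}_{i,1}$ of row $i$ while simultaneously flipping $U_i \mapsto \overline{U}_i$ leaves $\widetilde{Z}_U$, and hence $W$, unchanged; this swap exchanges $L_{i,0}$ with $L_{i,1}$, so $(U_i, L_{i,0}, L_{i,1})$ and $(\overline{U}_i, L_{i,1}, L_{i,0})$ are equal in distribution. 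Marginalising over $U_i$ then shows $L_{i,0}$ and $L_{i,1}$ share the same law, hence $\mathbb{E}[\Delta L_i] = 0$, and therefore also $\mathbb{E}_{\Delta L_i \sim P_{\Delta L_i}}[(-1)^{u_i}\Delta L_i] = 0$ for every fixed $u_i \in \{0,1\}$.

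With that zero-mean centering in hand, I can rewrite each inner expectation as a difference between two measures:
\[
\mathbb{E}_{\Delta L_i \mid U_i = u_i}\!\left[(-1)^{u_i}\Delta L_i\right] \;=\; \mathbb{E}_{\Delta L_i \sim P_{\Delta L_i\mid U_i = u_i}}[f_{u_i}(\Delta L_i)] - \mathbb{E}_{\Delta L_i \sim P_{\Delta L_i}}[f_{u_i}(\Delta L_i)],
\]
where $f_{u_i}(x) := (-1)^{u_i} x$. Since $|f_{u_i}(x) - f_{u_i}(x')| = |x - x'|$, the function $f_{u_i}$ is $1$-Lipschitz on $\mathbb{R}$. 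Kantorovich–Rubinstein duality therefore gives
\[
\left|\mathbb{E}_{\Delta L_i \mid U_i = u_i}\!\left[(-1)^{u_i}\Delta L_i\right]\right| \;\leq\; \mathbb{W}\!\left(P_{\Delta L_i \mid U_i = u_i},\, P_{\Delta L_i}\right).
\]

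To finish, I would take the absolute value of $\mathrm{Err}$, bring it inside the sum and inside the expectation over $U_i$ by the triangle inequality, and apply the displayed Wasserstein bound row by row to obtain
\[
|\mathrm{Err}| \;\leq\; \frac{1}{n}\sum_{i=1}^n \mathbb{E}_{U_i}\!\left[\mathbb{W}\!\left(P_{\Delta L_i \mid U_i},\, P_{\Delta L_i}\right)\right].
\]
The only nontrivial step is the symmetry argument that $\mathbb{E}[\Delta L_i] = 0$; once that centering is available, everything else is a direct Kantorovich–Rubinstein application with $f_{u_i}$ $1$-Lipschitz. The boundedness assumption $\ell \in [0,1]$ is used only to ensure that $\Delta L_i$ has a finite first moment so that $\mathbb{W}$ is finite and the duality applies.
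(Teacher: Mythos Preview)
Your proof is correct and follows essentially the same route as the paper: express $\mathrm{Err}$ as $\frac{1}{n}\sum_i \mathbb{E}[(-1)^{U_i}\Delta L_i]$, subtract a zero reference term involving the marginal $P_{\Delta L_i}$, and apply Kantorovich--Rubinstein duality with the $1$-Lipschitz function $x\mapsto(-1)^{u_i}x$. The only minor difference is in the centering step: the paper obtains the zero reference via $\mathbb{E}_{U_i}[(-1)^{U_i}]=0$ (using an independent copy $\Delta L_i'\sim P_{\Delta L_i}$ with $\Delta L_i'\perp U_i$), whereas you invoke the symmetry $\mathbb{E}[\Delta L_i]=0$---both are valid and lead to the same bound.
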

Unlike the results in \citet{rodriguez2021tighter}, here we do not require the loss to be Lipschitz continuous.




\section{Generalization Bounds via Correlating with Rademacher Sequence}
\label{sec:single-loss}
 We have so far obtained tighter square-root MI bounds based on the information measures (and their variants) between the loss difference $\Delta L_i$ and the mask variable $U_i$. However, the loss difference may not be used to obtain the fast-rate generalization bound where the square root function is removed \citep{grunwald2021pac,hellstrom2021fast,hellstrom2022a}. This is because deriving the fast-rate bound usually relies on a weighted generalization error, for which one loses the center-symmetric structure of the standard generalization error. Specifically, knowing $\Delta L_i$ and $U_i$ is sufficient to determine the generalization error at $i$th position by $(-1)^{U_i}\Delta L_i$. However, for the weighted generalization error at $i$th row defined by $E^i_{C_1}=L_{i,\overline{U}_i}-C_1L_{i,{U}_i}$ (for some constant $C_1>0$), having $U_i$ and a weighted loss difference $\Delta_{C_1} L_i=L_i^--C_1L_i^+$, does not allow its recovering from $(-1)^{U_i}\Delta_{C_1} L_i$ since  $L_i^+-C_1L_i^-\neq C_1L_i^+-L_i^-$ in general. Indeed, knowing both $L_i^--C_1L_i^+$ and $ L_i^+-C_1L_i^-$ requires knowing $L_i$. Then in order to obtain fast-rate bounds, we need to give up the loss difference and return to the original e-CMI as in \citet{hellstrom2022a}.

Therefore, if we still want to use a MI between two one-dimensional random variables to bound the error,  we need to find another trick. This motivates us to use a Radamecher viewpoint to derive the CMI bounds.
Before we handle the fast-rate CMI bound for the weighted generalization error, we again consider the standard generalization error.

\subsection{Single-Loss MI Bounds}
Although the CMI setting, particularly its construction of the ``ghost sample'', is conceptually related to the Rademacher complexity \citep{bartlett2002rademacher}, the information-theoretic generalization bounds in previous literature do not explicitly exploit this connection.
Fortunately, both information-theoretic bounds \cite{negrea2019information,hellstrom2020generalization,hellstrom2021fast,hellstrom2022a} and the Rademacher viewpoint \cite{kakade2008complexity,yang2019fast} are shown connected to the PAC-Bayes bounds, we thus derive a variant of e-CMI bound by invoking a similar symmetric argument with \cite{zhivotovskiy2018localization,yang2019fast}.

We first note the following lemma.
\begin{lem}
    \label{lem:standard-symmetric}
    The expected generalization error $\mathrm{Err}=\frac{2}{n}\sum_{i=1}^n\ex{L^+_{i},\varepsilon_i}{\varepsilon_iL^+_{i}}$,
    where $\varepsilon_i = (-1)^{\overline{U}_i}$. 
\end{lem}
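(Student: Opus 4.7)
The plan is to turn the generalization error into a Rademacher-style average by exploiting the built-in column-swap symmetry of the supersample construction. The payoff is that the two terms $\varepsilon_i L_i^+$ and $-\varepsilon_i L_i^-$ end up having the same expectation, so only one of them needs to be kept, and a factor of $2$ appears.

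First I would rewrite the per-row generalization error in a sign-factored form. By definition of $S=\widetilde{Z}_U$, the testing instance in row $i$ is $\widetilde{Z}_{i,\overline{U}_i}$, so
\[
\mathrm{Err}=\frac{1}{n}\sum_{i=1}^n \mathbb{E}\bigl[L_{i,\overline{U}_i}-L_{i,U_i}\bigr].
\]
A direct case split on $U_i\in\{0,1\}$ gives $L_{i,\overline{U}_i}-L_{i,U_i}=\varepsilon_i\,(L_{i,0}-L_{i,1})=\varepsilon_i L_i^+-\varepsilon_i L_i^-$, since $\varepsilon_i=(-1)^{\overline{U}_i}=-(-1)^{U_i}$ produces exactly the correct sign in each of the two cases.

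Next I would invoke the symmetry of the supersample. Because the entries of $\widetilde{Z}$ are i.i.d.\ and $U$ is independent of $\widetilde{Z}$ with $U_i\sim\mathrm{Bern}(1/2)$, the map that simultaneously swaps $\widetilde{Z}_{i,0}\leftrightarrow\widetilde{Z}_{i,1}$ and flips $U_i\mapsto 1-U_i$ is a distribution-preserving involution. Crucially, this swap leaves the selected training sample $\widetilde{Z}_U$ unchanged, hence leaves the hypothesis $W=\mathcal{A}(\widetilde{Z}_U)$ unchanged, and therefore sends $(L_i^+,L_i^-)$ to $(L_i^-,L_i^+)$ while flipping $\varepsilon_i$. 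Consequently
\[
(L_i^+,L_i^-,\varepsilon_i)\;\stackrel{d}{=}\;(L_i^-,L_i^+,-\varepsilon_i),
\]
and taking expectations of the coordinate $\varepsilon_i L_i^-$ against this identity yields $\mathbb{E}[\varepsilon_i L_i^-]=-\mathbb{E}[\varepsilon_i L_i^+]$.

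Finally I would substitute this back: $\mathbb{E}[\varepsilon_i L_i^+-\varepsilon_i L_i^-]=2\,\mathbb{E}[\varepsilon_i L_i^+]$, and summing over $i$ gives the claimed identity. The only subtle step is the symmetry argument in the middle; everything else is bookkeeping. I expect the main obstacle to be writing the involution precisely enough that one sees $W$ (and thus each $L_i^\pm$) is genuinely preserved by the joint swap-and-flip, as opposed to merely the marginal distribution of $\widetilde{Z}$ being invariant under the swap alone.
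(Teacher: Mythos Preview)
Your proposal is correct and follows essentially the same approach as the paper: decompose $\mathrm{Err}$ into $\mathbb{E}[\varepsilon_i L_i^+]-\mathbb{E}[\varepsilon_i L_i^-]$ and then use the column-swap symmetry of the supersample to show these two expectations are negatives of each other. Your framing via a distribution-preserving involution that fixes the training sample (and hence $W$) is a clean way of packaging what the paper writes out explicitly as the equalities $P_{L_i^-\mid U_i=0}=P_{L_i^+\mid U_i=1}$ and $P_{L_i^-\mid U_i=1}=P_{L_i^+\mid U_i=0}$.
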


Note that $\varepsilon_{1:n}=\{\varepsilon_i\}_{i=1}^n$ is a sequence of Rademacher random variables, and the lemma suggests that
$
\mathrm{Err}=
2\mathbb{E}_{\widetilde{Z}^+_{1:n}}\mathbb{E}_{\varepsilon_{1:n}}\ex{L^+_{1:n}|\varepsilon_{1:n},\widetilde{Z}^+_{1:n}}{\frac{1}{n}\sum_{i=1}^n \varepsilon_iL^+_{i}},
$
where $\widetilde{Z}^+_{1:n}=\{\widetilde{Z}^+_{i}\}_{i=1}^n$ and $L^+_{1:n}=\{L^+_{i}\}_{i=1}^n$.
Then, recall that the Rademacher complexity is defined as 
$
\mathfrak{R}_n(\mathcal{W})\triangleq \mathbb{E}_{S}\ex{\varepsilon_{1:n}}{\sup_{w\in \mathcal{W}}{\frac{1}{n}\sum_{i=1}^n \varepsilon_i\ell(w,Z_i)}}
$ \cite{bartlett2002rademacher}.
Notably, the expected generalization error can be viewed, up to a scale factor 2, as an ``average'' version of the Rademacher complexity. While $\mathrm{Err}$ considers the average correlation between the loss sequence and the Rademacher sequence, the Rademacher complexity measures the worst such correlation. Thus, $\mathrm{Err}\leq 2\mathfrak{R}_n(\mathcal{W})$.

Based on Lemma~\ref{lem:standard-symmetric}, we have the following bound. 
\begin{thm}
    \label{bound-single-loss-sq}
    Assume $\ell(\cdot)\in[0,1]$, we have
    \[
    \left|\mathrm{Err}\right| \leq\frac{2}{n}\sum_{i=1}^n\sqrt{2I(L^+_{i};U_i)}\leq \frac{2}{n}\sum_{i=1}^n\sqrt{2I(f_W(X^+_{i});U_i|\widetilde{Z})}.
    \]
\end{thm}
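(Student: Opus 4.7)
The plan is to combine Lemma~\ref{lem:standard-symmetric} with the standard mutual-information-based bound on the deviation of expectations (the Donsker--Varadhan / sub-Gaussian bound), applied to the correlation $\varepsilon_i L_i^+$, and then to use independence plus data-processing to pass from $I(L_i^+;U_i)$ to $I(f_W(X_i^+);U_i\mid \widetilde{Z})$.

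\textbf{Step 1 (rewrite via the symmetric identity).} By Lemma~\ref{lem:standard-symmetric}, $\mathrm{Err}=\frac{2}{n}\sum_{i=1}^n \mathbb{E}_{L^+_i,\varepsilon_i}[\varepsilon_i L^+_i]$, where $\varepsilon_i=(-1)^{\overline{U}_i}=-(-1)^{U_i}$ is a bijective function of $U_i$. In particular $P_{\varepsilon_i}=\mathrm{Unif}\{-1,+1\}$ and $I(L^+_i;\varepsilon_i)=I(L^+_i;U_i)$. Under the product measure $P_{L^+_i}\otimes P_{\varepsilon_i}$, independence of $\varepsilon_i$ and $L^+_i$ together with $\mathbb{E}[\varepsilon_i]=0$ gives $\mathbb{E}_{P_{L^+_i}\otimes P_{\varepsilon_i}}[\varepsilon_i L^+_i]=0$. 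Hence each summand in $\mathrm{Err}$ is the deviation of the joint expectation of $\varepsilon_i L^+_i$ from its product-measure expectation.

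\textbf{Step 2 (sub-Gaussianity and the MI deviation bound).} Conditioned on any realization of $L^+_i\in[0,1]$, under the product measure the variable $\varepsilon_i L^+_i$ takes values $\pm L^+_i$ with equal probability, and so it is $L^+_i$-sub-Gaussian, hence at most $1$-sub-Gaussian overall. By the standard lemma (Donsker--Varadhan applied to $\lambda\varepsilon_i L^+_i$, followed by optimization in $\lambda$, as used in \citet{xu2017information}),
\[
\bigl|\mathbb{E}_{P_{L^+_i,\varepsilon_i}}[\varepsilon_i L^+_i]-\mathbb{E}_{P_{L^+_i}\otimes P_{\varepsilon_i}}[\varepsilon_i L^+_i]\bigr|\le \sqrt{2\,I(L^+_i;\varepsilon_i)}=\sqrt{2\,I(L^+_i;U_i)}.
\]
Averaging over $i$ and using Step 1 yields $|\mathrm{Err}|\le \frac{2}{n}\sum_{i=1}^n \sqrt{2I(L^+_i;U_i)}$, which is the first inequality.

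\textbf{Step 3 (from unconditional to conditional MI).} Since $U_i\indp \widetilde{Z}$, we have $I(U_i;\widetilde{Z})=0$, and the chain rule applied two ways to $I(U_i;L^+_i,\widetilde{Z})$ gives
\[
I(U_i;L^+_i\mid\widetilde{Z})=I(U_i;L^+_i)+I(U_i;\widetilde{Z}\mid L^+_i)\ge I(U_i;L^+_i).
\]
Conditionally on $\widetilde{Z}$, there is a Markov chain $U_i\to W\to f_W(X^+_i)\to L^+_i$, so the data-processing inequality gives $I(L^+_i;U_i\mid\widetilde{Z})\le I(f_W(X^+_i);U_i\mid \widetilde{Z})$. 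Chaining these two inequalities and applying monotonicity of $\sqrt{\cdot}$ yields the second inequality of the theorem.

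\textbf{Anticipated main obstacle.} The only subtle point is Step 3: one has to notice that, unlike the usual setting in which the conditional CMI upper-bounds its unconditional counterpart via $I(X;Y\mid Z)\ge I(X;Y)$ only when $X\indp Z$, here the required direction holds precisely because $U_i\indp \widetilde{Z}$ by construction of the supersample. The sub-Gaussian/MI step itself is routine once Lemma~\ref{lem:standard-symmetric} reduces the problem to bounding a correlation between $L^+_i$ and a Rademacher variable under the joint distribution.
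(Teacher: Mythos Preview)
Your proposal is correct and follows essentially the same route as the paper: both invoke Lemma~\ref{lem:standard-symmetric}, apply the sub-Gaussian/Donsker--Varadhan deviation bound (Lemma~\ref{lem:subgaussian-dv}) to the Rademacher-weighted loss, and then upgrade $I(L_i^+;U_i)$ to $I(f_W(X_i^+);U_i\mid\widetilde{Z})$ via the independence $U_i\indp\widetilde{Z}$ plus DPI. The only cosmetic difference is that the paper applies the sub-Gaussian lemma to $2(-1)^{U_i}L_i^+$ with variance proxy $2$, whereas you apply it to $\varepsilon_iL_i^+$ with proxy $1$ and carry the factor of $2$ separately; the resulting bounds are identical.
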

The variable $U_i$ in the above MI/CMI terms can obviously be replaced by $\varepsilon_i$. Thus the theorem can be interpreted as using a different notion of
``average correlation'', namely mutual information, between losses (or predictions) and Rademacher noises to bound the original notion of average correlation (as stated in Lemma  \ref{lem:standard-symmetric} and  discussed earlier).


This bound may not be directly comparable to others due to the undesired constant of $2$ outside of the square root function in the bound. We will soon see that $I(L^+_{i};U_i)$ based bound will be more useful when the square root is removed.

For the zero-one loss, 
the dependence between $U_i$ and $L^+_{i}$ is characterized by the communication channel given in Figure~\ref{fig:binarychannel} (right). In this case, 
$U_i=0$ indicates $\widetilde{Z}^+_{i}$ is selected for training, then $p_i$ is the error rate on this training instance. Similarly, when $U_i=1$,  $\widetilde{Z}^+_{i}$ is used for testing, then $1-q_i$ is the error rate on this testing instance. In practice, we usually have $p_i<1-p_i$ since $L^+_{i}$ is more likely to be zero when $\widetilde{Z}^+_{i}$ is a training instance, and 
we may also have $p_i<1-q_i$  since $L^+_{i}$ is more likely to be one when $\widetilde{Z}^+_{i}$ is a testing instance compared with the case when $\widetilde{Z}^+_{i}$ is used in training. When $p_i=0$, this channel reduces to a $Z$-channel \cite{thomas2006elements}. This corresponds to an interpolating algorithm, 
for which we have the following theorem.
\begin{thm}
    \label{thm:channel-capacity-z}
    For zero-one loss and any interpolating algorithm, we have $\frac{1}{n}\sum_{i=1}^nI(L^+_{i};U_i)\leq H(\frac{L_\mu}{2})$. 
\end{thm}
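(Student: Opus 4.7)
The plan is to bound each $I(L_i^+;U_i)$ by $H(L_i^+)$ and then compute the marginal law of $L_i^+$ under the interpolating assumption. Since the loss is zero-one, $L_i^+\in\{0,1\}$ and $H(L_i^+)$ is just the binary entropy of $p_i\triangleq P(L_i^+=1)=\mathbb{E}[L_i^+]$, so it suffices to show $p_i=L_\mu/2$ for every $i$: then $I(L_i^+;U_i)\leq H(L_i^+)=H(L_\mu/2)$ and averaging over $i$ immediately yields the claim.

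To compute $p_i$, I would split on $U_i$. On the event $\{U_i=0\}$ the instance $\widetilde{Z}_i^+=\widetilde{Z}_{i,0}$ is included in the training set, so the interpolating hypothesis $W$ satisfies $\ell(W,\widetilde{Z}_i^+)=0$, hence $L_i^+=0$ almost surely. On $\{U_i=1\}$ the training set $\widetilde{Z}_U$ does not involve $\widetilde{Z}_i^+$; because the entries of $\widetilde{Z}$ are i.i.d., this gives $\widetilde{Z}_i^+\indp W$ given $U_i=1$. Moreover, conditioning on any value of $U$ does not alter the distribution of $W$, since for every $u\in\{0,1\}^n$ the selected training entries $\{\widetilde{Z}_{j,u_j}\}_{j=1}^n$ are $n$ i.i.d.\ draws from $\mu$. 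Combining these two facts,
\[
\mathbb{E}[L_i^+\mid U_i=1]=\mathbb{E}_{W}\mathbb{E}_{Z'\sim\mu}\ell(W,Z')=\mathbb{E}_W[L_\mu(W)]=L_\mu,
\]
so that $p_i=\tfrac{1}{2}\cdot 0+\tfrac{1}{2}\cdot L_\mu=L_\mu/2$, as required.

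Putting the pieces together gives $I(L_i^+;U_i)\leq H(L_i^+)=H(L_\mu/2)$ for every $i$, and then averaging proves the theorem. The main step worth spelling out is the conditional-independence argument $\widetilde{Z}_i^+\indp W\mid U_i=1$ together with the $U$-invariance of the marginal law of $W$; everything else is just combining an entropy upper bound with a one-line marginal computation. As a side remark that may be useful for context, the same reasoning pins down the Z-channel in Figure~\ref{fig:binarychannel} (right) for an interpolating algorithm: $p_i=0$ and $q_i=1-L_\mu$ for every $i$, which gives the sharper identity $I(L_i^+;U_i)=H(L_\mu/2)-\tfrac{1}{2}H(L_\mu)$; the stated bound is then obtained by discarding the nonnegative term $\tfrac{1}{2}H(L_\mu)=H(L_i^+\mid U_i)$.
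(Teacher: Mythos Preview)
Your proof is correct and follows essentially the same skeleton as the paper's: both bound $I(L_i^+;U_i)\le H(L_i^+)$ and compute $P(L_i^+=1)=\tfrac{1-q_i}{2}$ using the interpolating assumption (so $P(L_i^+=1\mid U_i=0)=0$). The one genuine difference is the last step. The paper does \emph{not} argue that the per-index test errors $1-q_i$ are all equal to $L_\mu$; it only uses $L_\mu=\frac{1}{n}\sum_i(1-q_i)$ and then invokes concavity of the binary entropy to pass from $\frac{1}{n}\sum_i H\big(\tfrac{1-q_i}{2}\big)$ to $H\big(\tfrac{L_\mu}{2}\big)$. You instead prove directly, via the conditional-independence/symmetry argument, that $\mathbb{E}[L_i^+\mid U_i=1]=L_\mu$ for every $i$, which makes the Jensen step unnecessary and, as you note, also yields the exact identity $I(L_i^+;U_i)=H(\tfrac{L_\mu}{2})-\tfrac12 H(L_\mu)$. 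Your route is slightly sharper and cleaner; the paper's route is a bit more agnostic in that it never needs the claim that all $q_i$ coincide (a fact the paper does acknowledge in a later remark but does not use here).
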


When the loss is not discrete, we can again obtain a chained MI bound by quantizing the continuous random variable $L_i^+$, which is given in Theorem~\ref{thm:chaining-single-loss-bound} in Appendix~\ref{sec:chianed-single-loss}. 

\subsection{Fast-Rate MI Bound}
We are now in a position to discuss the weighted generalization error, ${\mathrm{Err}}_{C_1}\triangleq L_\mu-(1+C_1)L_n$, where $C_1$ is a prescribed constant. This notion is important for obtaining the fast-rate PAC-Bayes bounds \cite{catoni2007pac}. 

To bound this weighted generalization error, similar to Lemma~\ref{lem:standard-symmetric}, we have the following symmetry argument.
\begin{lem}
    \label{lem:weighted-symmetric}
    The weighted generalization error can be rewritten as 
    \[
    \mathrm{Err}_{C_1}=\frac{2+C_1}{n}\sum_{i=1}^n\ex{L^+_{i},\tilde{\varepsilon}_i}{ \tilde{\varepsilon}_iL^+_{i}},
    \]
    where $\tilde{\varepsilon}_i=(-1)^{\overline{U}_i}
    -\frac{C_1}{C_1+2}$ is a \emph{shifted} Rademacher variable with mean $-\frac{C_1}{C_1+2}$.
\end{lem}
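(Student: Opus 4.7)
The plan is to reduce this to a short computation by decomposing the shifted Rademacher variable as $\tilde{\varepsilon}_i = \varepsilon_i - \frac{C_1}{C_1+2}$, where $\varepsilon_i = (-1)^{\overline{U}_i}$ is the ordinary Rademacher variable from Lemma~\ref{lem:standard-symmetric}. By linearity of expectation,
\[
\ex{L^+_i,\tilde{\varepsilon}_i}{\tilde{\varepsilon}_i L^+_i} = \ex{L^+_i,\varepsilon_i}{\varepsilon_i L^+_i} - \tfrac{C_1}{C_1+2}\,\ex{}{L^+_i},
\]
so everything reduces to identifying the two expectations on the right in terms of $L_\mu$ and $L_n$.

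First I would handle the Rademacher piece by directly appealing to Lemma~\ref{lem:standard-symmetric}, which yields $\frac{1}{n}\sum_{i=1}^n\ex{}{\varepsilon_i L^+_i} = \frac{1}{2}(L_\mu - L_n)$. For the unsigned piece, I would condition on $U_i$: on $\{U_i=0\}$, $\widetilde{Z}^+_i = \widetilde{Z}_{i,0}$ is the training instance in row $i$, and by exchangeability of the rows of $\widetilde{Z}$ and independence of $U$ from $\widetilde{Z}$, $\ex{}{L^+_i \mid U_i=0} = \ex{}{L_S(W)} = L_n$; on $\{U_i=1\}$, $\widetilde{Z}^+_i$ is the testing instance, which is independent of the training sample $\widetilde{Z}_U$ and hence of $W$, giving $\ex{}{L^+_i \mid U_i=1} = L_\mu$. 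Averaging over $U_i \sim \mathrm{Bern}(1/2)$ then yields $\ex{}{L^+_i} = \tfrac{1}{2}(L_\mu + L_n)$.

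Putting the pieces together, I would compute
\[
\ex{}{\tilde{\varepsilon}_i L^+_i} = \tfrac{1}{2}(L_\mu - L_n) - \tfrac{C_1}{2(C_1+2)}(L_\mu + L_n) = \tfrac{1}{C_1+2}\bigl(L_\mu - (1+C_1)L_n\bigr),
\]
after clearing the common factor $2(C_1+2)$ in the numerator. Multiplying by $(C_1+2)/n$ and summing over $i$ produces $L_\mu - (1+C_1)L_n = \mathrm{Err}_{C_1}$, which is exactly the claim.

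There is no real obstacle here — the statement is an algebraic identity once the two conditional expectations of $L^+_i$ are in hand. The only point to be careful about is ensuring that the exchangeability argument used to conclude $\ex{}{L^+_i \mid U_i=0} = L_n$ is legitimate; this is where the symmetric construction of the supersample is invoked, and it is the same symmetry that underlies Lemma~\ref{lem:standard-symmetric}, so no new assumptions are required.
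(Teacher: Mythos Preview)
Your argument is correct and follows essentially the same decomposition as the paper, which writes $L_{i,\overline{U}_i}-(1+C_1)L_{i,U_i}=(1+\tfrac{C_1}{2})(L_{i,\overline{U}_i}-L_{i,U_i})-\tfrac{C_1}{2}(L_{i,\overline{U}_i}+L_{i,U_i})$ and then invokes the column symmetry $P_{L_i^-}=P_{L_i^+}$; you simply organize it more modularly by calling Lemma~\ref{lem:standard-symmetric} directly for the Rademacher piece. One small caveat: the per-$i$ identity $\ex{}{L_i^+\mid U_i=0}=L_n$ need not hold for each $i$ if the algorithm is order-sensitive (this uses row exchangeability, which is \emph{not} the symmetry underlying Lemma~\ref{lem:standard-symmetric}), but the averaged identity $\tfrac{1}{n}\sum_i\ex{}{L_i^+\mid U_i=0}=L_n$ always holds, and that is all your final step requires.
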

The relationship between ${\rm Err}$ and Rademacher complexity also likewise extends to that between $\mathrm{Err}_{C_1}$ and ``shifted Rademacher complexity" defined as
    $\widetilde{\mathfrak{R}}_n(\mathcal{W})\triangleq \mathbb{E}_{S}\ex{\tilde{\varepsilon}_{1:n}}{\sup_{w\in \mathcal{W}}{\frac{1}{n}\sum_{i=1}^n \tilde{\varepsilon}_i\ell(w,Z_i)}}$, namely $\mathrm{Err}_{C_1}\leq 2\widetilde{\mathfrak{R}}_n(\mathcal{W})$.

Then, we are ready to present the following bounds.
\begin{thm}
    \label{thm:bound-fast-rate-rademacher}
    Let $\ell(\cdot,\cdot)\in[0,1]$. There exist $C_1,C_2>0$ 
    such that
    \begin{align}    
         L_\mu\leq& (1+C_1)L_n+ \sum_{i=1}^n \frac{I(L^+_{i};U_i)}{C_2n},\label{ineq:bound-fast-rate-general}\\
         L_\mu \leq& L_n+\sum_{i=1}^n \frac{4I(L^+_{i};U_i)}{n}+4\sqrt{\sum_{i=1}^n \frac{L_nI(L^+_{i};U_i)}{n}}. \label{ineq:bound-fast-rate-general-optimal}
    \end{align}
    Furthermore, if $\mathcal{A}$ is an interpolating algorithm, we have
    \begin{align}
    \label{ineq:bound-fast-rate-interpolate}
    L_\mu\leq \sum_{i=1}^n \frac{2I(L^+_{i};U_i)}{n\ln{2}}. 
    \end{align}
\end{thm}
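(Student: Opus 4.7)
The plan is to combine the symmetric decomposition of $\mathrm{Err}_{C_1}$ from Lemma~\ref{lem:weighted-symmetric} with a per-coordinate Donsker--Varadhan (DV) change-of-measure inequality, and then specialise the MGF estimate on the right-hand side to each of the three claims. For any $\lambda>0$ and each $i$, DV yields
\[
\lambda\,\mathbb{E}[\tilde{\varepsilon}_i L^+_i] \;\le\; I(L^+_i;U_i) + \log \mathbb{E}\!\left[e^{\lambda \tilde{\varepsilon}_i L^+_i}\right]_{\mathrm{indep}},
\]
where the log-MGF is evaluated under the product of marginals $P_{L^+_i}\otimes P_{U_i}$. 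Summing in $i$ and applying Lemma~\ref{lem:weighted-symmetric} converts the LHS into $\tfrac{n\lambda}{2+C_1}\mathrm{Err}_{C_1}$; what remains is to control the total log-MGF, using whenever needed the identity $\sum_i \mathbb{E}[L^+_i]=\tfrac{n}{2}(L_n+L_\mu)$, which follows from $\mathbb{E}[L^+_i\mid U_i=0]=L_n$, $\mathbb{E}[L^+_i\mid U_i=1]=L_\mu$, and row symmetry.

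For \eqref{ineq:bound-fast-rate-general}, the key estimate is $\cosh(\lambda x)\le 1+(\cosh\lambda-1)x$ on $x\in[0,1]$; combined with the shift $-C_1/(C_1+2)$ of $\tilde{\varepsilon}_i$ this yields $\log \mathbb{E}_{\tilde{\varepsilon}_i}[e^{\lambda \tilde{\varepsilon}_i L^+_i}] \le L^+_i\bigl(\cosh\lambda-1-\tfrac{\lambda C_1}{C_1+2}\bigr)$, so any $(\lambda,C_1)$ with $\cosh\lambda-1\le\lambda C_1/(C_1+2)$ makes the log-MGF non-positive and delivers \eqref{ineq:bound-fast-rate-general} with $C_2=\lambda/(2+C_1)$; such a pair clearly exists (take $\lambda$ small). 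For \eqref{ineq:bound-fast-rate-general-optimal}, I would work with the unshifted Rademacher ($C_1=0$) and the Hoeffding-style estimate $\cosh(\lambda x)\le e^{\lambda^2 x/2}$, valid on $x\in[0,1]$ because $x^2\le x$. Summing the DV inequalities gives, with $M=\tfrac{1}{n}\sum_i I(L^+_i;U_i)$,
\[
L_\mu-L_n \;\le\; \tfrac{\lambda}{2}(L_\mu+L_n)+\tfrac{2M}{\lambda} \quad\text{for every } \lambda>0.
\]
Minimising over $\lambda$ produces $(L_\mu-L_n)^2\le 4M(L_\mu+L_n) = 8L_n M + 4M(L_\mu-L_n)$; solving the quadratic in $(L_\mu-L_n)$ and using $\sqrt{M^2+2L_n M}\le M+2\sqrt{L_n M}$ recasts the bound as $L_\mu-L_n\le 4M+4\sqrt{L_n M}$, which is \eqref{ineq:bound-fast-rate-general-optimal}.

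For \eqref{ineq:bound-fast-rate-interpolate}, the generic MGF route is lossy (optimising $2/(\lambda-\cosh\lambda+1)$ gives only $\approx 4.28$ rather than $2/\ln 2\approx 2.885$), so I would instead exploit the explicit channel of Figure~\ref{fig:binarychannel} (right) at $p_i=0$: $L^+_i\mid U_i=0\equiv 0$ and $L^+_i\mid U_i=1\sim\mathrm{Bern}(L_{\mu,i})$ with $L_{\mu,i}=1-q_i$. Direct computation gives $I(L^+_i;U_i)=H(L_{\mu,i}/2)-\tfrac{1}{2}H(L_{\mu,i})$ in nats, and differentiating in $\alpha=L_{\mu,i}$ yields $\tfrac{1}{2}\ln\bigl((2-\alpha)/(1-\alpha)\bigr)\ge\tfrac{\ln 2}{2}$ on $[0,1]$. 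Since $I=0$ at $\alpha=0$, the mean-value theorem gives $I(L^+_i;U_i)\ge(\ln 2)L_{\mu,i}/2$; averaging over $i$ and rearranging then yields \eqref{ineq:bound-fast-rate-interpolate}. The main obstacle will be the chain of algebraic loosenings in \eqref{ineq:bound-fast-rate-general-optimal}: getting the constants to land exactly at the advertised $4M+4\sqrt{L_n M}$ rather than the slightly tighter $4M+2\sqrt{2L_n M}$ requires the right application of $\sqrt{a+b}\le\sqrt{a}+\sqrt{b}$ in the quadratic-solution step.
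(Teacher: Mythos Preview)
Your DV-plus-MGF skeleton for \eqref{ineq:bound-fast-rate-general} is correct and close to the paper's, but two of the three specialisations contain genuine gaps.

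\textbf{Gap in \eqref{ineq:bound-fast-rate-general-optimal}.} The pointwise bound $\cosh(\lambda x)\le e^{\lambda^2 x/2}$ on $x\in[0,1]$ is fine, but it does \emph{not} give $\log\mathbb{E}_{L^+_i}[\cosh(\lambda L^+_i)]\le\tfrac{\lambda^2}{2}\mathbb{E}[L^+_i]$: passing the log through the expectation is Jensen in the wrong direction. A concrete counterexample is $L^+_i\sim\mathrm{Bern}(0.01)$, $\lambda=4$, where $\log\mathbb{E}[\cosh(4L^+_i)]\approx 0.234$ while $\tfrac{\lambda^2}{2}\mathbb{E}[L^+_i]=0.08$. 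So the displayed inequality $L_\mu-L_n\le\tfrac{\lambda}{2}(L_\mu+L_n)+\tfrac{2M}{\lambda}$ does not hold for all $\lambda>0$, and the subsequent minimisation is unjustified.

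\textbf{Gap in \eqref{ineq:bound-fast-rate-interpolate}.} Your direct channel computation assumes $L^+_i\mid U_i=1$ is Bernoulli, i.e.\ zero-one loss. The theorem is stated for arbitrary $\ell\in[0,1]$, so this route does not prove the claim as stated. You correctly diagnosed that your loosened MGF condition $\cosh\lambda-1\le\lambda C_1/(C_1+2)$ is too weak to reach the constant $2/\ln 2$; the remedy is not to change strategy but to tighten the MGF step.

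\textbf{How the paper closes both gaps.} Rather than bounding $\log\cosh$ via $\log(1+y)\le y$, the paper computes the inner expectation over $\tilde\varepsilon'_i$ exactly and then uses that $x\mapsto e^{-2t(C_1+1)x}+e^{2tx}$ is convex on $[0,1]$, so its maximum is at an endpoint. This yields the sharp feasibility condition $e^{-2C_2(C_1+1)}+e^{2C_2}\le 2$ (with $C_2=t$), which is strictly larger than your region. From here \eqref{ineq:bound-fast-rate-interpolate} follows for any $\ell\in[0,1]$ by sending $C_1\to\infty$ and taking $C_2=(\ln 2)/2$, and \eqref{ineq:bound-fast-rate-general-optimal} follows by relaxing the exponential constraint (via $e^x\le 1/(1-x)$ and $e^{-x}\le 1/(1+x)$) to $C_2\le C_1/(4(C_1+1))$ and optimising $C_1 L_n+4M/C_1$. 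Your unshifted-Rademacher route for \eqref{ineq:bound-fast-rate-general-optimal} is an interesting alternative idea, but making it rigorous requires a correct bound on $\log\mathbb{E}[\cosh(\lambda L^+_i)]$ (e.g.\ $(\cosh\lambda-1)\mathbb{E}[L^+_i]$), after which the constants will no longer land at exactly $4M+4\sqrt{L_nM}$ without further work.
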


    Notice that Eq.~(\ref{ineq:bound-fast-rate-general-optimal}) does not depend on $C_1,C_2$, as 
    it is obtained via minimizing the bound in Eq~(\ref{ineq:bound-fast-rate-general}) 
    over a region of $(C_1,C_2)$ in which Eq~(\ref{ineq:bound-fast-rate-general}) hold.
    
    Comparing Eq.~(\ref{ineq:bound-fast-rate-interpolate}) with the interpolating bound in \citet[Eq.~(12)]{hellstrom2022a}, the main difference is that their bounds\footnote{Note that \citet{hellstrom2022a} uses $I(L^+_{i},L^-_{i};U_i|\widetilde{Z})$ but this CMI term can be strengthened to the unconditional MI by using the same development in this paper.} are based on $I(L^+_{i},L^-_{i};U_i)$,  instead of $2I(L^+_{i};U_i)$. This difference could be characterized by the \emph{interaction information} \cite{yeung1991new}, namely $I(L_i^+;U_i;L_i^-)=I(L_i^+;U_i)-I(L_i^+;U_i|L_i^-)=2I(L^+_{i};U_i)- I(L_i;U_i)$ (where the second equality is by the chain rule of MI), and the  value $I(L^+_{i};L^-_{i};U_i)$ could be positive, negative and zero.
Hence, the interpolating bound could be further improved as below
\begin{align}
    \label{ineq:bound-optimal-interpolate}
    L_\mu\leq \sum_{i=1}^n \frac{\min\{2I(L^+_{i};U_i), I(L_i;U_i)\}}{n\ln{2}}. 
    \end{align}
    This bound is strictly non-vacuous since
    the RHS of Eq.~(\ref{ineq:bound-fast-rate-general-optimal}) is upper-bounded by $\frac{\sum_{i=1}^nH(U_i)}{n\ln{2}}=1$. Note that the ``tightest bound'' of the interpolating algorithm is already obtained in Theorem~\ref{thm:channel-interpolating}.
Previous works \cite{steinke2020reasoning,hellstrom2022a} suggest that the fast-rate bounds for the weighted generalization error are typically useful when the empirical risk is small or even zero, which may restrict their applications. In the sequel, we introduce two new types of MI bound that can further extend Eq.~(\ref{ineq:bound-fast-rate-general}) in Theorem~\ref{thm:bound-fast-rate-rademacher}.  


\subsection{Variance Based MI Bound}
Inspired by the above Rademacher perspective, we first present a new bound that depends on the MI term and a notion of loss variance, defined below.
\begin{defn}[$\gamma$-Variance]
    For any $\gamma\in (0,1)$, $\gamma$-variance for a learning algorithm is defined as
    \[
    V(\gamma)\triangleq\ex{W,S}{\frac{1}{n}\sum_{i=1}^n\pr{\ell(W,Z_i)-(1+\gamma)L_S(W)}^2}.
    \]
\end{defn}
By definition, $\gamma$-variance also depends on the data distribution. In the zero-one loss case, it can be characterized by the following lemma.
\begin{lem}
    \label{lem:empirical-variance}
    Under the zero-one loss assumption, we have $V(\gamma)=L_n-(1-\gamma^2)\ex{W,S}{L^2_S(W)}$.
\end{lem}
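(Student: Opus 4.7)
The plan is to just expand the square in the definition of $V(\gamma)$ and exploit the idempotence of the zero-one loss. Writing $\ell_i := \ell(W, Z_i)$ for brevity, I would start from
\[
\frac{1}{n}\sum_{i=1}^n (\ell_i - (1+\gamma)L_S(W))^2 = \frac{1}{n}\sum_{i=1}^n \ell_i^2 - \frac{2(1+\gamma)}{n}\sum_{i=1}^n \ell_i L_S(W) + (1+\gamma)^2 L_S(W)^2.
\]
Since $\ell \in \{0,1\}$ under zero-one loss, $\ell_i^2 = \ell_i$, so $\frac{1}{n}\sum_i \ell_i^2 = L_S(W)$. Likewise $\frac{1}{n}\sum_i \ell_i L_S(W) = L_S(W)^2$ by definition of $L_S(W)$.

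Substituting these into the display collapses everything onto $L_S(W)$ and $L_S(W)^2$:
\[
L_S(W) - 2(1+\gamma)L_S(W)^2 + (1+\gamma)^2 L_S(W)^2 = L_S(W) + \bigl((1+\gamma)^2 - 2(1+\gamma)\bigr)L_S(W)^2,
\]
and the coefficient $(1+\gamma)^2 - 2(1+\gamma) = (1+\gamma)(\gamma - 1) = \gamma^2 - 1$ simplifies the expression to $L_S(W) - (1-\gamma^2)L_S(W)^2$. Finally, taking $\mathbb{E}_{W,S}$ on both sides and recognizing $\mathbb{E}_{W,S}[L_S(W)] = L_n$ yields the claim.

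There is really no main obstacle here — the only ingredient beyond algebra is the observation that $\ell^2 = \ell$ for a $\{0,1\}$-valued loss, which both turns $\frac{1}{n}\sum_i \ell_i^2$ into $L_S(W)$ (rather than a second-moment term) and makes the cross term reduce to $L_S(W)^2$. The rest is bookkeeping on the coefficient of $L_S(W)^2$.
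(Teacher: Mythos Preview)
Your proof is correct and follows essentially the same route as the paper: expand the square, use $\ell^2=\ell$ for zero-one loss, collapse $\frac{1}{n}\sum_i \ell_i$ and $\frac{1}{n}\sum_i \ell_i L_S(W)$ to $L_S(W)$ and $L_S(W)^2$, and simplify the coefficient $(1+\gamma)^2-2(1+\gamma)=\gamma^2-1$. One small quibble: the reduction of the cross term $\frac{1}{n}\sum_i \ell_i L_S(W)=L_S(W)^2$ is purely by the definition of $L_S(W)$ and does not rely on idempotence, contrary to your final paragraph.
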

Loss variances, of any kind, have not appeared in the information-theoretic bounds
developed to date.  Such a notion however does arise in the PAC-Bayes literature, where such an idea traces back to \cite{seldin2012pac,tolstikhin2013pac}. Different from these works, here we utilize an expected empirical variance, and the distribution of $W$ in this case is generated by the learning algorithm rather than the posterior distribution used for prediction in PAC-Bayes.

The gap between $\mathrm{Err}$ and $V(\gamma)$ also has a ``symmetry lemma'' (similar to Lemma~\ref{lem:weighted-symmetric}) correlating to the shifted Rademacher sequence. 
\begin{lem}
\label{lem:vaiance-symmetric}
For any $C_1>0$, we have 
\[
     \mathrm{Err}-C_1V(\gamma)\leq\frac{2+C_1\gamma^2}{n}\sum_{i=1}^n\ex{L^+_{i},\tilde{\varepsilon}_i}{ \tilde{\varepsilon}_iL^+_{i}},
     \]
     where $\tilde{\varepsilon}_i=\varepsilon_i-\frac{C_1\gamma^2}{C_1\gamma^2+2}$ is the shifted Rademacher variable with mean $-\frac{C_1\gamma^2}{C_1\gamma^2+2}$.
\end{lem}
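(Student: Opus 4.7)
The plan is to reduce this lemma to Lemma~\ref{lem:weighted-symmetric} (with the constant $C_1$ replaced by $C_1\gamma^2$) plus a short variance lower bound. Concretely, I expect the right-hand side to simplify to $\mathrm{Err}_{C_1\gamma^2} = L_\mu - (1+C_1\gamma^2) L_n = \mathrm{Err} - C_1\gamma^2 L_n$, after which the claim becomes $\mathrm{Err} - C_1 V(\gamma) \leq \mathrm{Err} - C_1\gamma^2 L_n$, i.e., $V(\gamma) \geq \gamma^2 L_n$, which I would establish as a separate bridge step.

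For the first step, I would expand $\mathbb{E}[\tilde{\varepsilon}_i L^+_i] = \mathbb{E}[\varepsilon_i L^+_i] - \tfrac{C_1\gamma^2}{C_1\gamma^2+2}\mathbb{E}[L^+_i]$ by linearity and sum over $i$. Lemma~\ref{lem:standard-symmetric} gives $\tfrac{1}{n}\sum_i \mathbb{E}[\varepsilon_i L^+_i] = \mathrm{Err}/2$. For the second piece, the column symmetry of the supersample construction yields $\mathbb{E}[L^+_i] = \tfrac{1}{2}(\mathbb{E}[L_{i,U_i}] + \mathbb{E}[L_{i,\overline{U}_i}])$, so that $\tfrac{1}{n}\sum_i \mathbb{E}[L^+_i] = (L_n + L_\mu)/2$. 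Multiplying by the prefactor $\tfrac{2+C_1\gamma^2}{n}$ and simplifying, the right-hand side collapses exactly to $\mathrm{Err} - C_1\gamma^2 L_n$, matching the identity underlying Lemma~\ref{lem:weighted-symmetric} with $C_1$ replaced by $C_1\gamma^2$.

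For the second step, I would invoke Lemma~\ref{lem:empirical-variance} to write (under the zero-one loss) $V(\gamma) = L_n - (1-\gamma^2)\mathbb{E}[L_S^2(W)]$. Because $L_S(W) \in [0,1]$, pointwise $L_S^2(W) \leq L_S(W)$, whence $\mathbb{E}[L_S^2(W)] \leq L_n$, and therefore $V(\gamma) \geq L_n - (1-\gamma^2) L_n = \gamma^2 L_n$. Chaining with step one then yields $\mathrm{Err} - C_1 V(\gamma) \leq \mathrm{Err} - C_1\gamma^2 L_n$, which equals the right-hand side of the lemma.

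The anticipated obstacle is conceptual rather than computational. One must recognize that the correct target is precisely $\gamma^2 L_n$, since only this choice makes the right-hand side fold into $\mathrm{Err}_{C_1\gamma^2}$; the naive Jensen-type bounds $V(\gamma) \geq \gamma^2 \mathbb{E}[L_S^2]$ or $V(\gamma) \geq \gamma^2 L_n^2$ are not strong enough. Relatedly, the bridge $V(\gamma) \geq \gamma^2 L_n$ hinges on the zero-one loss structure through Lemma~\ref{lem:empirical-variance}: for general $\ell \in [0,1]$ one instead obtains $V(\gamma) \geq \gamma^2 \mathbb{E}[\tfrac{1}{n}\sum_i \ell^2(W,Z_i)]$, which may be strictly smaller than $\gamma^2 L_n$, so the argument would have to be carried out in that specific loss regime.
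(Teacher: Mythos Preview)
Your proposal is correct and follows essentially the same route as the paper: both arguments use Lemma~\ref{lem:empirical-variance} together with $L_S^2(W)\leq L_S(W)$ to obtain $V(\gamma)\geq \gamma^2 L_n$ (equivalently, $\mathrm{Err}-C_1V(\gamma)\leq L_\mu-(1+C_1\gamma^2)L_n$), and then invoke Lemma~\ref{lem:weighted-symmetric} with $C_1$ replaced by $C_1\gamma^2$. The only cosmetic difference is that you re-derive the identity behind Lemma~\ref{lem:weighted-symmetric} explicitly in your first step, whereas the paper simply cites it.
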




\begin{thm}
    \label{thm:bound-variance}
    Assume $\ell(\cdot,\cdot)\in\{0,1\}$, $\gamma\in(0,1)$. Then, there exist $C_1,C_2>0$ 
    such that
    \begin{align}
        \mathrm{Err}\leq &C_1V(\gamma)+\sum_{i=1}^n\frac{I(L_i^+;U_i)}{nC_2}.\label{ineq:variance-bound-general}
    \end{align}
\end{thm}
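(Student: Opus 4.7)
The plan is to start from the symmetrization identity in Lemma~\ref{lem:vaiance-symmetric}, which bounds the variance-adjusted generalization error by a correlation with a shifted Rademacher sequence,
\[
\mathrm{Err}-C_1V(\gamma)\ \leq\ \frac{2+C_1\gamma^2}{n}\sum_{i=1}^n\ex{L^+_{i},\tilde{\varepsilon}_i}{\tilde{\varepsilon}_i L^+_{i}},
\]
where $\tilde{\varepsilon}_i=\varepsilon_i-a$ with $a\triangleq\frac{C_1\gamma^2}{C_1\gamma^2+2}\in(0,1)$. It then suffices to control each summand on the right by the single-letter MI term $I(L_i^+;U_i)$ via a Donsker--Varadhan argument tailored to the shifted Rademacher noise.

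I would first note that $\tilde{\varepsilon}_i$ is a bijective deterministic function of $U_i$ (it takes two distinct values, one per value of $U_i$), so $I(L_i^+;\tilde{\varepsilon}_i)=I(L_i^+;U_i)$. The Donsker--Varadhan variational formula then yields, for every $\lambda>0$,
\[
\lambda\,\ex{L^+_i,\tilde{\varepsilon}_i}{\tilde{\varepsilon}_iL^+_i}\ \leq\ I(L_i^+;U_i)+\ln\ex{L^+_i}{\ex{\tilde{\varepsilon}_i}{\exp(\lambda\tilde{\varepsilon}_iL^+_i)}},
\]
where the inner expectations on the right are taken under the product of the marginals of $L_i^+$ and $\tilde{\varepsilon}_i$.

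The crux is to choose $\lambda>0$ so that the inner shifted-Rademacher MGF is at most $1$ for every realization $L_i^+\in\{0,1\}$ permitted by the zero-one loss hypothesis. The case $L_i^+=0$ is trivial, so the only binding constraint is
\[
\tfrac{1}{2}e^{\lambda(1-a)}+\tfrac{1}{2}e^{-\lambda(1+a)}\ \leq\ 1,\qquad\text{equivalently,}\qquad e^{-\lambda a}\cosh(\lambda)\leq 1.
\]
Because $\ln\cosh(\lambda)/\lambda$ is continuous and strictly increasing from $0$ to $1$ as $\lambda$ grows from $0^+$ to $\infty$, and because $a\in(0,1)$, a positive solution $\lambda=\lambda(a)$ exists. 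For such $\lambda$, the outer expectation over $L_i^+$ of the MGF is also $\leq 1$, the logarithm term is nonpositive, and we obtain $\ex{}{\tilde{\varepsilon}_iL^+_i}\leq I(L_i^+;U_i)/\lambda$. Substituting this back into the symmetrization identity gives
\[
\mathrm{Err}-C_1V(\gamma)\ \leq\ \frac{2+C_1\gamma^2}{n\lambda}\sum_{i=1}^n I(L_i^+;U_i),
\]
which is the claimed inequality with $C_2=\lambda/(2+C_1\gamma^2)>0$.

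The main obstacle, though mild, will be showing that a valid $(C_1,C_2)$ pair can be exhibited for every $\gamma\in(0,1)$, i.e., that the transcendental equation $\lambda a=\ln\cosh(\lambda)$ always admits a positive root and that the resulting $\lambda$ depends continuously on $a$; this reduces to the elementary monotonicity of $\lambda\mapsto\ln\cosh(\lambda)/\lambda$. The binary structure of the zero-one loss is essential here: it reduces the MGF constraint to a single two-point inequality that can be met with equality, so no additional convexity or boundedness argument is required, unlike in the general $[0,1]$-loss case.
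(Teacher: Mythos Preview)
Your proposal is correct and follows essentially the same route as the paper. The paper's proof simply observes that the right-hand side of Lemma~\ref{lem:vaiance-symmetric} is the weighted generalization error $L_\mu-(1+C_1\gamma^2)L_n$ and then invokes Theorem~\ref{thm:bound-fast-rate-rademacher} with weight parameter $C_1\gamma^2$, whereas you rederive the Donsker--Varadhan step inline; after the substitution $\lambda=C_2(2+C_1\gamma^2)$ your MGF constraint $e^{-\lambda a}\cosh\lambda\le 1$ coincides exactly with the paper's condition $e^{2C_2}+e^{-2C_2(C_1\gamma^2+1)}\le 2$, and the final bound is identical. One small remark: your closing comment that the zero-one loss is ``essential'' overstates things slightly---the DV/MGF part goes through for any $[0,1]$-valued loss via the convexity argument used in the proof of Theorem~\ref{thm:bound-fast-rate-rademacher}; the $\{0,1\}$ hypothesis is needed earlier, in Lemma~\ref{lem:empirical-variance} (hence Lemma~\ref{lem:vaiance-symmetric}), which you take as given.
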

    Notably, the interpolating setting is a sufficient but not necessary condition for the zero $\gamma$-variance, that is, $L_n=0$ makes $V(\gamma)=0$, but $V(\gamma)=0$ does not indicate that $L_n=0$. In addition, by Lemma~\ref{lem:empirical-variance}, Eq.~(\ref{ineq:variance-bound-general}) can be rewritten as $L_\mu\leq (1+C_1)L_n-C_1(1-\gamma^2)\ex{W,S}{L^2_S(W)}+ \sum_{i=1}^n \frac{I(L^+_{i};U_i)}{C_2n}$ so for the fixed $C_1$ and $C_2$, the bound of Eq.~(\ref{ineq:variance-bound-general}) is tighter than the bound of Eq.~(\ref{ineq:bound-fast-rate-general}) with the gap being at least $C_1(1-\gamma^2)\ex{W,S}{L^2_S(W)}$.

\subsection{Sharpness Based MI Bound}
The nice generalization property of deep neural networks is often credited to the ``flat minima'' \cite{jastrzkebski2017three} of loss landscapes. Recently, \citet{neu2021information} and \citet{wang2022generalization} have proved that the generalization error can be upper-bounded by a MI based term plus a sharpness (or flatness) related term. 
Following the similar development in the previous section, we are able to obtain a bound that also depends on a MI term and a sharpness term, where we use a completely different analysis with \citep{neu2021information,wang2022generalization}. 

We first define a notion of sharpness.
\begin{defn}[$\lambda$-Sharpness]
    For any $\lambda\in (0,1)$, the ``$\lambda$-sharpness''  at position $i$ of the training set is defined as 
    \[
    F_i(\lambda)\triangleq\ex{W,Z_i}{\ell(W,Z_i)-(1+\lambda)\mathbb{E}_{W|Z_i}{\ell(W,Z_i)}}^2.
    \]
\end{defn}
This $\lambda$-sharpness can be regarded as an expected version of the ``flatness'' used in \citet{yang2019fast} with $W\sim P_{W|Z_i}$ instead of some posterior distribution of $W$.
\begin{lem}
\label{lem:flatness-upper-bound}
    Assume $\ell(\cdot)\in\{0,1\}$, we have  $F_i(\lambda)=\ex{W,Z_i}{\ell(W,Z_i)}-(1-\lambda^2)\ex{Z_i}{\mathbb{E}^2_{W|Z_i}{\ell(W,Z_i)}}$.
\end{lem}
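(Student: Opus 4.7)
The plan is straightforward: expand the square in the definition of $F_i(\lambda)$ and then exploit the idempotence of the zero-one loss, namely $\ell(W,Z_i)^2 = \ell(W,Z_i)$. Writing $g(Z_i) \triangleq \mathbb{E}_{W|Z_i}[\ell(W,Z_i)]$ for brevity, the definition reads
\[
F_i(\lambda) = \mathbb{E}_{W,Z_i}\!\left[\ell(W,Z_i)^2 - 2(1+\lambda)\,\ell(W,Z_i)\,g(Z_i) + (1+\lambda)^2\, g(Z_i)^2\right].
\]

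First, I would replace $\ell(W,Z_i)^2$ by $\ell(W,Z_i)$ using the zero-one assumption, so the first summand collapses to $\mathbb{E}_{W,Z_i}[\ell(W,Z_i)]$. Second, I would handle the cross term by conditioning on $Z_i$: since $g(Z_i)$ is a function of $Z_i$ alone, the tower property gives $\mathbb{E}_{W,Z_i}[\ell(W,Z_i)\,g(Z_i)] = \mathbb{E}_{Z_i}[g(Z_i)\,\mathbb{E}_{W|Z_i}[\ell(W,Z_i)]] = \mathbb{E}_{Z_i}[g(Z_i)^2]$. The third summand already has the desired form $(1+\lambda)^2\,\mathbb{E}_{Z_i}[g(Z_i)^2]$.

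Collecting the three contributions gives $F_i(\lambda) = \mathbb{E}_{W,Z_i}[\ell(W,Z_i)] + \bigl[(1+\lambda)^2 - 2(1+\lambda)\bigr]\,\mathbb{E}_{Z_i}[g(Z_i)^2]$, and the elementary identity $(1+\lambda)^2 - 2(1+\lambda) = (1+\lambda)(\lambda-1) = -(1-\lambda^2)$ yields the claimed formula. There is no real obstacle here: the result is just a computation. The only minor things to watch are the sign after the algebraic simplification, and keeping in mind that the inner expectation in the definition is against $P_{W|Z_i}$ (not the marginal of $W$), which is precisely what makes the cross term collapse to $\mathbb{E}_{Z_i}[g(Z_i)^2]$ rather than something less tidy.
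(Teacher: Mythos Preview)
Your proposal is correct and follows essentially the same approach as the paper's proof: expand the square, use the tower property to collapse the cross term to $\mathbb{E}_{Z_i}[g(Z_i)^2]$, invoke $\ell^2=\ell$ for the zero-one loss, and simplify $(1+\lambda)^2-2(1+\lambda)=-(1-\lambda^2)$. The paper writes the intermediate step by first conditioning on $Z_i$, but the computation is identical to yours.
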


Let $F(\lambda)=\frac{1}{n}\sum_{i=1}^nF_i(\lambda)$. 
Similar to Lemma~\ref{lem:standard-symmetric},  Lemma~\ref{lem:weighted-symmetric} and Lemma~\ref{lem:vaiance-symmetric}, we have the following symmetric argument.
\begin{lem}
    \label{lem:flatness-symmetric}
     For any $C_1>0$, we have 
     \begin{align*}
         \mathrm{Err}-&C_1F(\lambda)=\\
         &\frac{C_1+2}{n}\sum_{i=1}^n\ex{L^+_{i},U_i}{\tilde{\varepsilon}_iL^+_{i}-\frac{C_1(1-\lambda^2)}{C_1+2}\hat{\varepsilon}_ih(U_i)},
     \end{align*}
    where $\tilde{\varepsilon}_i=\varepsilon_i-\frac{C_1}{C_1+2}$ and $\hat{\varepsilon}_i=\varepsilon_i-1$ are the shifted Rademacher variables, and $h(U_i)=\ex{\widetilde{Z}^+_{i}|U_i}{\mathbb{E}^2_{L^+_{i}|\widetilde{Z}^+_{i},U_i}L^+_{i}}$.
\end{lem}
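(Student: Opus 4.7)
The plan is to reduce the claim to an already-established ``shifted symmetry'' (Lemma~\ref{lem:weighted-symmetric}) plus a short calculation that extracts $h(0)$ from $\mathbb{E}_{U_i}[\hat{\varepsilon}_i h(U_i)]$. The starting point is Lemma~\ref{lem:flatness-upper-bound}, which lets me write
\[
F_i(\lambda)=\mathbb{E}_{W,Z_i}\ell(W,Z_i)-(1-\lambda^2)\,\mathbb{E}_{Z_i}\!\left[\mathbb{E}_{W|Z_i}^2\ell(W,Z_i)\right].
\]
Translating into the supersample language, a training instance $Z_i$ coincides with $\widetilde{Z}^+_i$ conditioned on $U_i=0$, so the first term equals $\mathbb{E}[L_i^+\mid U_i=0]$ and the second equals $h_i(0):=\mathbb{E}_{\widetilde{Z}^+_i|U_i=0}[\mathbb{E}^2_{L_i^+|\widetilde{Z}^+_i,U_i=0}L_i^+]$. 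Summing over $i$, this gives $F(\lambda)=L_n-(1-\lambda^2)\frac1n\sum_i h_i(0)$ and hence
\[
\mathrm{Err}-C_1 F(\lambda)=\bigl(L_\mu-(1+C_1)L_n\bigr)+C_1(1-\lambda^2)\frac1n\sum_{i=1}^n h_i(0).
\]

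For the first bracket I directly invoke Lemma~\ref{lem:weighted-symmetric}, which yields $L_\mu-(1+C_1)L_n=\frac{C_1+2}{n}\sum_i\mathbb{E}[\tilde{\varepsilon}_iL_i^+]$ with $\tilde\varepsilon_i=\varepsilon_i-\frac{C_1}{C_1+2}$. For the second bracket I rewrite the $h_i(0)$ sum as an expectation over $U_i$ using $\hat\varepsilon_i=\varepsilon_i-1$. Since $\varepsilon_i=(-1)^{\overline U_i}$, one has $\hat\varepsilon_i=-2$ when $U_i=0$ and $\hat\varepsilon_i=0$ when $U_i=1$, so with $U_i\sim\mathrm{Unif}\{0,1\}$,
\[
\mathbb{E}_{U_i}[\hat\varepsilon_i h(U_i)]=-h_i(0).
\]
Thus $C_1(1-\lambda^2)\frac1n\sum_i h_i(0)=-\frac{C_1(1-\lambda^2)}{n}\sum_i\mathbb{E}_{U_i}[\hat\varepsilon_i h(U_i)]$. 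Combining the two pieces and factoring $\frac{C_1+2}{n}$ outside delivers exactly the claimed identity.

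The only subtlety—and the place I would slow down to double-check—is the identification of $\mathbb{E}_{W|Z_i}\ell(W,Z_i)$ with the supersample quantity $\mathbb{E}[L_i^+\mid \widetilde{Z}^+_i,U_i=0]$: this uses both that $\widetilde Z\perp U$ (so $\widetilde{Z}^+_i\mid U_i=0$ has marginal $\mu$) and that the distribution of $W$ given $\widetilde{Z}^+_i=z$ and $U_i=0$ is by construction the algorithm's output distribution conditioned on having $z$ in slot $i$ of the training set. Once that identification is pinned down, everything else is bookkeeping that requires no inequality—the result is a clean equality.
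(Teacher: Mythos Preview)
Your argument is correct and actually cleaner than the paper's. The paper proves the identity by carrying out the full symmetrization ``from scratch'': it writes $\mathrm{Err}-C_1F(\lambda)$ in the supersample form with both $L_{i,\overline U_i}$, $L_{i,U_i}$ and the associated $\Lambda(\widetilde Z_{i,\overline U_i})$, $\Lambda(\widetilde Z_{i,U_i})$ terms, then splits everything into $L^+_i$ and $L^-_i$ pieces and invokes the pairwise symmetry $P_{L^-_i|U_i=0}=P_{L^+_i|U_i=1}$ (the same machinery behind Lemma~\ref{lem:standard-symmetric}) to collapse the display into the $L^+_i$ form. You instead reuse Lemma~\ref{lem:weighted-symmetric} as a black box for the $L_\mu-(1+C_1)L_n$ part and dispatch the remaining flatness term by the one-line identity $\mathbb{E}_{U_i}[\hat\varepsilon_i h(U_i)]=-h_i(0)$, which follows immediately from $\hat\varepsilon_i\in\{-2,0\}$. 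What this modular route buys is brevity and transparency; what the paper's longer decomposition buys is that it keeps the $\Lambda(\widetilde Z^-_i)$ and $\Lambda(\widetilde Z^+_i)$ terms visible side by side, which mirrors how the later proof of Theorem~\ref{thm:bound-flatness} manipulates both columns. Your identification $h_i(0)=\mathbb{E}_{Z_i}[\mathbb{E}^2_{W|Z_i}\ell(W,Z_i)]$ is exactly the step the paper encodes as $\mathbb{E}_{Z_i}[\mathbb{E}^2_{W|Z_i}\ell(W,Z_i)]=\mathbb{E}_{\widetilde Z_{i,U_i}}[\Lambda(\widetilde Z_{i,U_i})]$ evaluated on the $U_i=0$ half, so the ``subtlety'' you flag is handled in both proofs and rests on $\widetilde Z\perp U$ together with the i.i.d.\ structure of the remaining coordinates.
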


We are then ready to present the following bound.
\begin{thm}
    \label{thm:bound-flatness}
    Assume $\ell(\cdot,\cdot)\in\{0,1\}$, $\lambda\in(0,1)$. Then, there exist $C_1,C_2>0$ such that
    \begin{align}
        \mathrm{Err}\leq&  C_1F(\lambda)+\sum_{i=1}^n\frac{I(L^+_{i};U_i)}{C_2n}\label{ineq:flatness-bound-general}.
    \end{align}
\end{thm}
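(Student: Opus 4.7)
The plan is to adapt the fast-rate MGF argument used for Theorem~\ref{thm:bound-fast-rate-rademacher}, substituting the new symmetric identity of Lemma~\ref{lem:flatness-symmetric} for Lemma~\ref{lem:weighted-symmetric}. I would first apply Lemma~\ref{lem:flatness-symmetric} to rewrite
\[
\mathrm{Err} - C_1 F(\lambda) = \frac{C_1+2}{n}\sum_{i=1}^{n}\ex{L^+_i, U_i}{g_i},
\]
where $g_i := \tilde{\varepsilon}_i L^+_i - \tfrac{C_1(1-\lambda^2)}{C_1+2}\hat{\varepsilon}_i h(U_i)$. The task then reduces to upper-bounding each $\ex{L^+_i, U_i}{g_i}$ by a term of the form $I(L^+_i; U_i)/[C_2(C_1+2)]$.

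For this, I would apply the Donsker--Varadhan inequality: for every $\eta > 0$,
\[
\eta\,\ex{L^+_i, U_i}{g_i} \leq I(L^+_i;\,U_i) + \log\ex{L^+_i \otimes U_i}{\exp(\eta g_i)}.
\]
The proof closes by exhibiting a pair $(C_1, \eta)$ with $C_2 := 2\eta/(C_1+2)$ such that the log-MGF on the right is non-positive. Because $U_i$ is uniform on $\{0,1\}$ and $\tilde{\varepsilon}_i, \hat{\varepsilon}_i, h(U_i)$ are all $U_i$-measurable, this MGF admits an explicit two-by-two closed form averaged against the Bernoulli marginal of $L^+_i$.

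The main obstacle is that a naive split of $g_i$ into its two summands, each bounded separately, would only reproduce Eq.~(\ref{ineq:bound-fast-rate-general}) rather than the tighter Eq.~(\ref{ineq:flatness-bound-general}); the two pieces must therefore be treated jointly. The key observation is that, under zero-one loss and Lemma~\ref{lem:flatness-upper-bound} (which identifies $h(U_i)$ with the squared posterior-mean quantity appearing in $F_i(\lambda)$), a short computation yields
\[
\ex{L^+_i \otimes U_i}{g_i} = -\tfrac{C_1}{C_1+2}\Bigl(\tfrac{1}{2}\bigl(L_\mu^{(i)}-L_n^{(i)}\bigr) + F_i(\lambda)\Bigr),
\]
where $L_\mu^{(i)}$ and $L_n^{(i)}$ denote the per-example expected testing and training losses. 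Hence the product-law mean is already non-positive and encodes both the per-index generalization gap and the sharpness term. I would then control the centred MGF $\log \ex{L^+_i \otimes U_i}{\exp(\eta(g_i - \mu^*_i))}$ via a refined (Bennett/Bernstein-type) estimate---rather than the cruder Hoeffding bound, which would only recover a slow rate---and choose $\eta$ so that the resulting quadratic-variation term is absorbed into this drift. Summing the per-$i$ inequalities and rearranging yields Eq.~(\ref{ineq:flatness-bound-general}).
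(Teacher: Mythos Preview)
Your outline correctly starts from Lemma~\ref{lem:flatness-symmetric} and invokes Donsker--Varadhan, which is also the paper's skeleton. The gap is at the MGF step, and it is not a detail that a Bernstein estimate will patch.

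The function $g_i$ you feed into DV contains the summand $-\tfrac{C_1(1-\lambda^2)}{C_1+2}\hat{\varepsilon}_i h(U_i)$, which depends on $U_i$ alone. At $U_i=0$ (where $\hat{\varepsilon}_i=-2$) it equals the strictly positive constant $\tfrac{2C_1(1-\lambda^2)}{C_1+2}h(0)$, \emph{independently of $L^+_i$}. Hence under the product law the cell $(L^+_i,U_i)=(0,0)$ contributes $e^{\eta\cdot\text{const}\cdot h(0)}>1$ to the MGF, so the anchor ``value at $L^+_i=0$ equals $1$'' that drives every fast-rate argument here is lost. Your proposal defers this to a ``Bennett/Bernstein-type estimate'' on the centred MGF, but you do not carry it out, and the drift you compute is not guaranteed to be non-positive (take $L_\mu^{(i)}<L_n^{(i)}$ with $F_i(\lambda)$ small), so there is nothing obvious for the variance term to be absorbed into. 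Even granting existence of distribution-dependent $(C_1,C_2)$, you would still need a single pair working for all $i$, and nothing in the plan addresses that.

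The paper's key idea, absent from your plan, is to \emph{linearize away} the free-standing $h(U_i)$ before applying DV. Since $-\hat{\varepsilon}_i\ge 0$ and $\mathbb{E}^2_{L^+_i|\widetilde{Z}^+_i,U_i}[L^+_i]\le R_i\,\mathbb{E}_{L^+_i|\widetilde{Z}^+_i,U_i}[L^+_i]$ for any $U_i$-measurable $R_i$ dominating the inner conditional mean (e.g.\ $R_i\equiv 1$), the expression in Lemma~\ref{lem:flatness-symmetric} is upper-bounded by
\[
(C_1+2)\,\ex{L^+_i,U_i}{\Bigl(\tilde{\varepsilon}_i-\tfrac{C_1(1-\lambda^2)}{C_1+2}\hat{\varepsilon}_iR_i\Bigr)L^+_i},
\]
which is now \emph{linear in $L^+_i$}. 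Applying DV to this function and using $R'_i\le\tfrac{1-\varepsilon'_i}{2}$ together with the Bernoulli MGF of $L^+_i$ collapses the product-law condition, exactly as in Theorem~\ref{thm:bound-fast-rate-rademacher}, to the distribution-free constraint $e^{-2C_2(C_1\lambda^2+1)}+e^{2C_2}\le 2$. This linearization via the auxiliary $R_i$ is the missing ingredient.
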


Similar to the variance based bound, zero $\lambda$-sharpness is a weaker condition than the interpolating assumption. In particular, 
    Eq.~(\ref{ineq:flatness-bound-general}) could be tighter than Eq.~(\ref{ineq:bound-fast-rate-general}) in Theorem~\ref{thm:bound-fast-rate-rademacher} when the empirical risk is non-zero. Specifically, Eq.~(\ref{ineq:flatness-bound-general}) can be rewritten as $L_\mu\leq (1+C_1)L_n-  C_1(1-\lambda^2){L_n^2}+\sum_{i=1}^n\frac{I(L^+_{i};U_i)}{C_2n}$ by Lemma~\ref{lem:flatness-upper-bound} and Jensen's inequality. If $C_1,C_2$ are fixed, then the sharpness based bound is always tighter than Eq.~(\ref{ineq:bound-fast-rate-general}) and the gap is at least $C_1(1-\lambda^2){L_n^2}$.

\begin{figure*}[!ht]
    \centering
    \begin{subfigure}[b]{0.24\textwidth}
\includegraphics[scale=0.28]{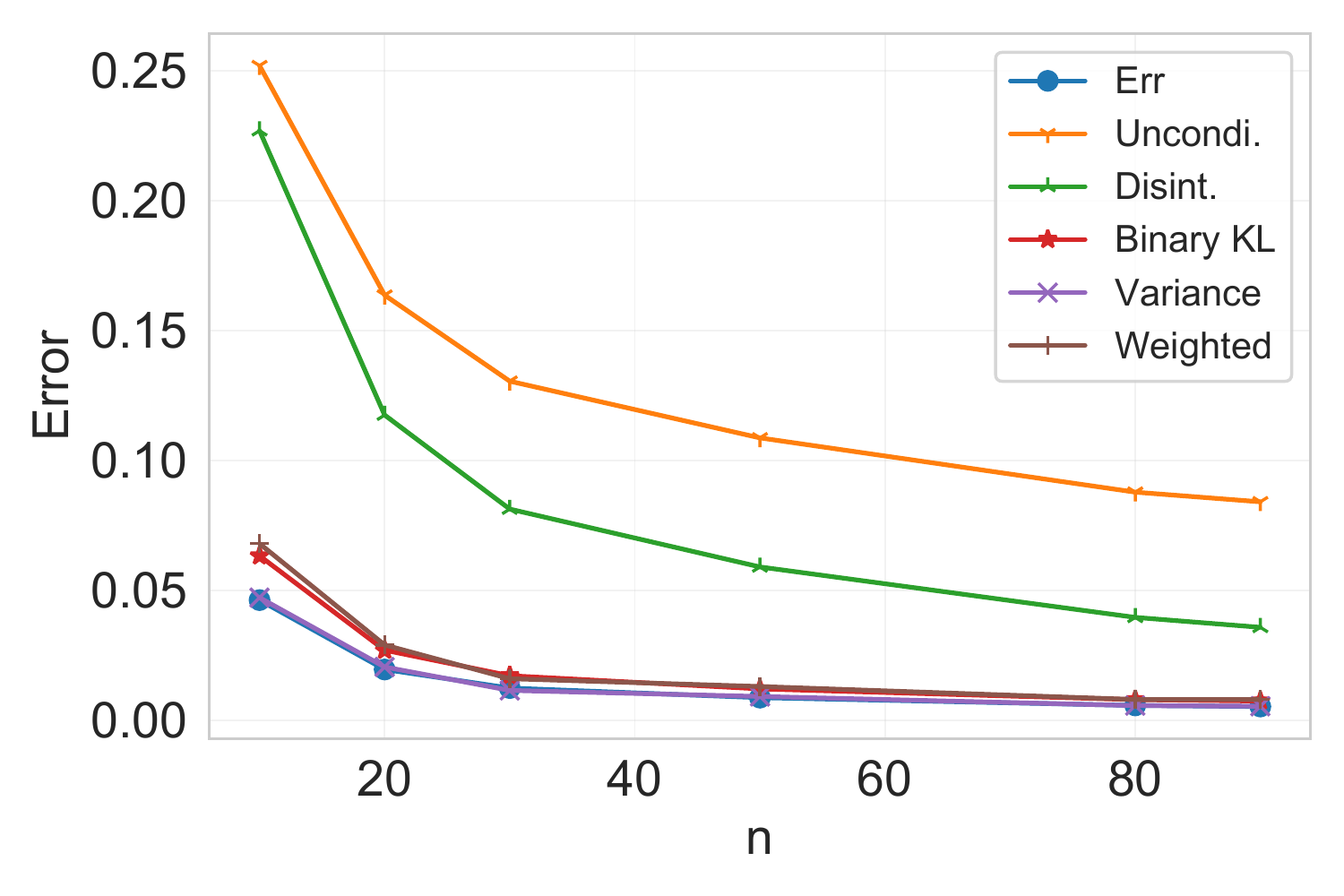}            \caption{$|\mathcal{Y}|=2$ (Realizable)}            \label{fig:binary-easy}
    \end{subfigure}
\begin{subfigure}[b]{0.24\textwidth}
\includegraphics[scale=0.28]{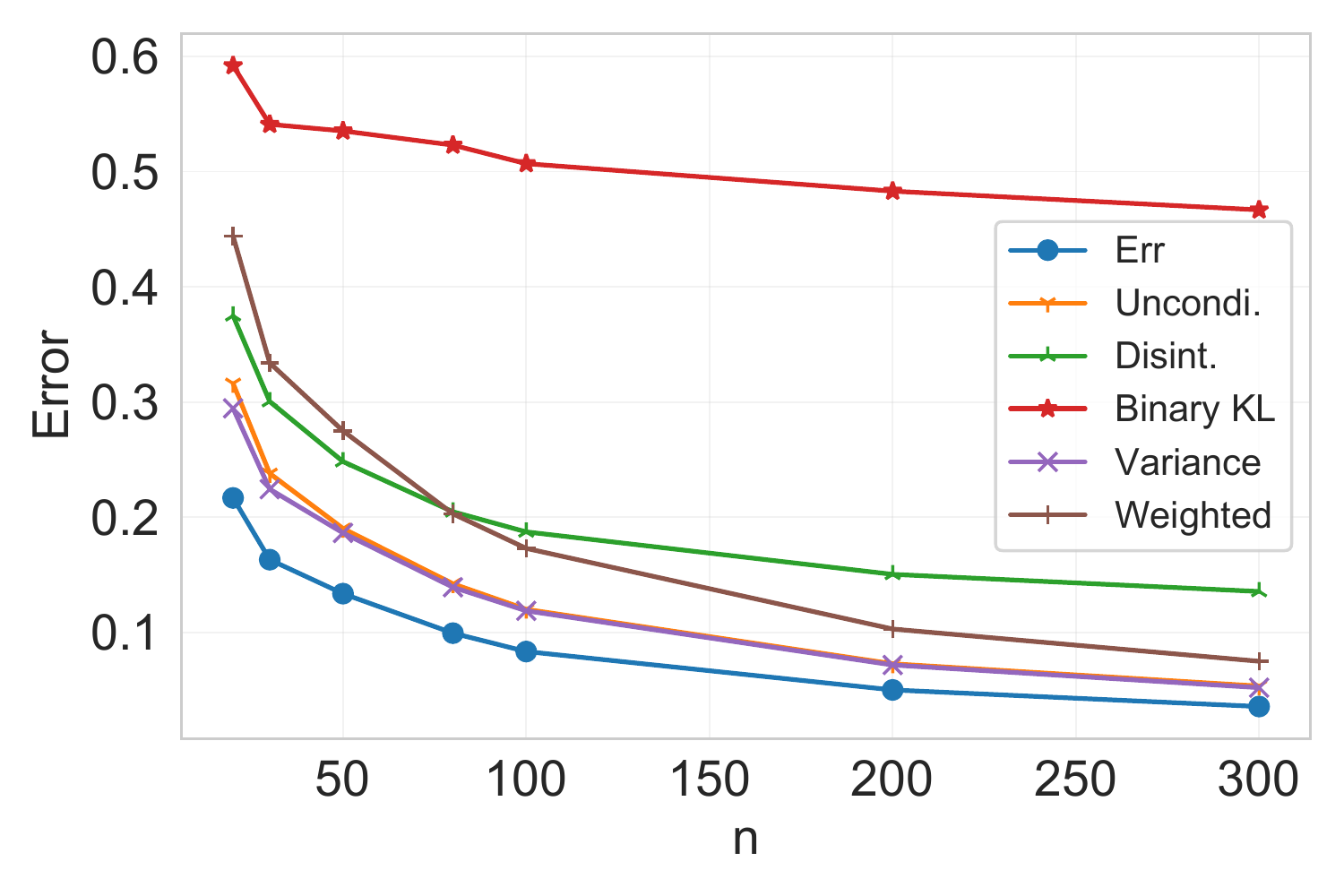}
\caption{$|\mathcal{Y}|=2$ (Non-Separable)}
    \label{fig:binary-hard}
\end{subfigure}
 \begin{subfigure}[b]{0.24\textwidth}
\includegraphics[scale=0.28]{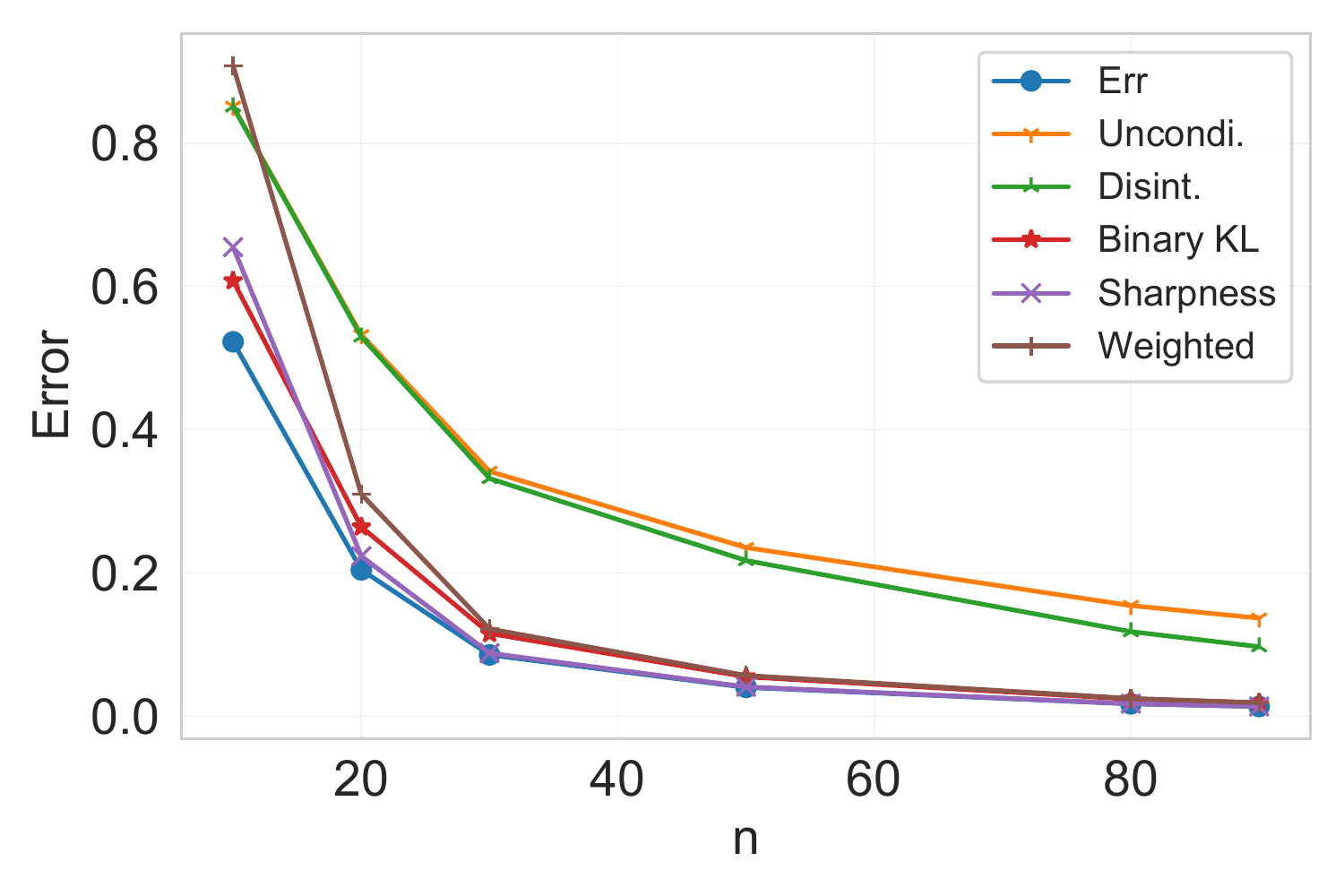}
\caption{$|\mathcal{Y}|=10$ (Realizable)}
\label{fig:ten-easy}
    \end{subfigure}
\begin{subfigure}[b]{0.24\textwidth}
\includegraphics[scale=0.28]{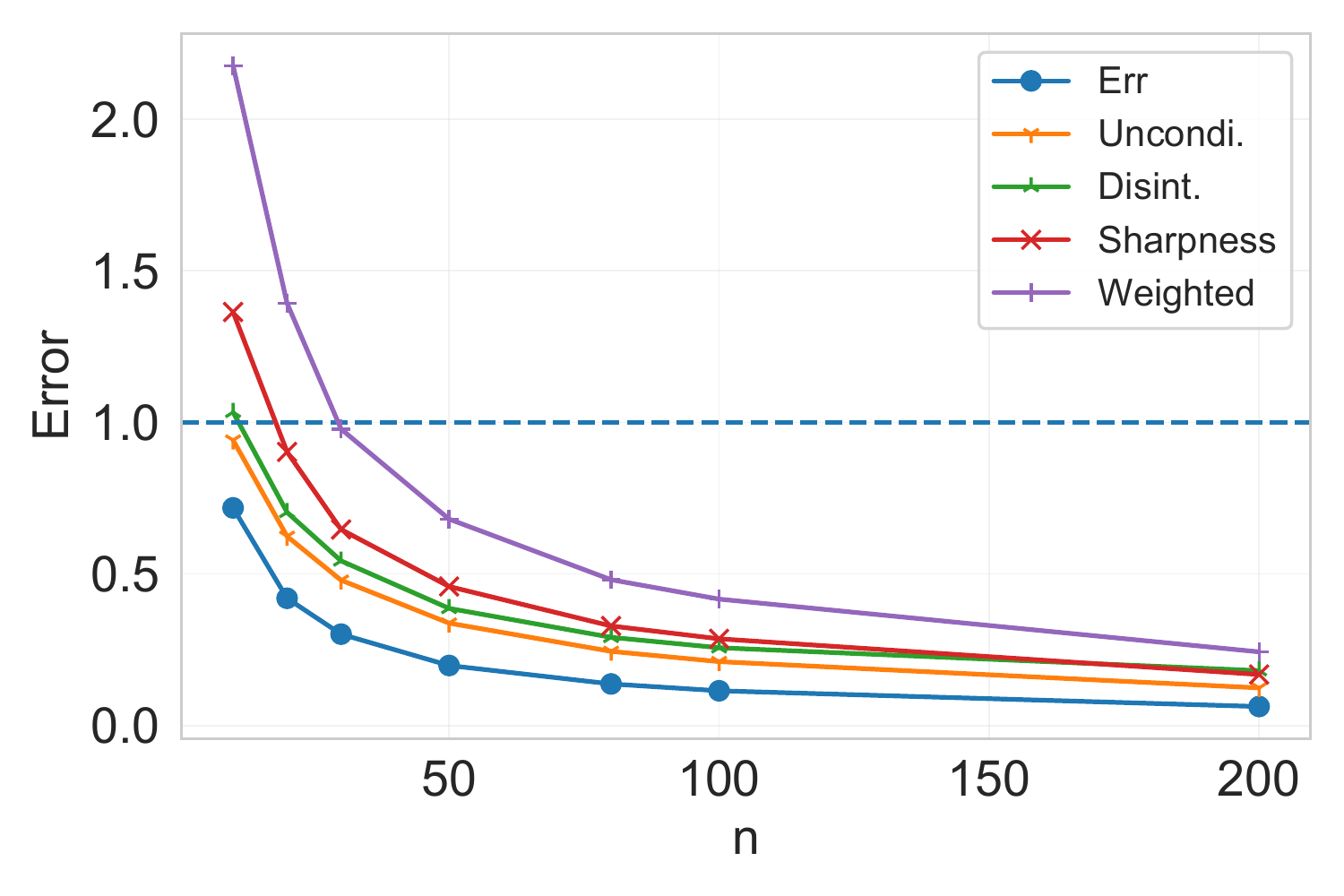}
\caption{$|\mathcal{Y}|=10$ (Non-Separable)}
\label{fig:ten-hard}
\end{subfigure}
\caption{Comparison of bounds on the synthetic dataset. 
(a) Binary classification with a separable $\mu$ (i.e. the interpolating setting). Notice that the variance bound nearly coincides with $\mathrm{Err}$. (b) Binary classification with a non-separable $\mu$. (c) Ten-class classification with a separable $\mu$. (d) Ten-class classification with a non-separable $\mu$. The binary KL bound is removed in (d) since it is always $\geq 1$.}\label{fig:Gaussian-data}
\end{figure*}

To conclude this section, we give a bound that combines the variance and the sharpness.
\begin{cor}
    \label{cor:bound-variance-flatness}
    Assume $\ell(\cdot,\cdot)\in\{0,1\}$ and $\gamma,\lambda\in(0,1)$, then there exist $C_1,C_2>0$ such that
    \[
        \mathrm{Err}\leq C_1\min\{V(\gamma),F(\lambda)\}+\sum_{i=1}^n\frac{I(L_i^+;U_i)}{nC_2}.
    \]
\end{cor}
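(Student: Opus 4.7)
The plan is to obtain Corollary~\ref{cor:bound-variance-flatness} as a direct combination of Theorem~\ref{thm:bound-variance} and Theorem~\ref{thm:bound-flatness}, since both bounds share an identical MI tail and only differ in whether the leading penalty term is the variance $V(\gamma)$ or the sharpness $F(\lambda)$. Concretely, I would first invoke each theorem to produce constants $(C_1^V, C_2^V)$ making $\mathrm{Err}\leq C_1^V V(\gamma)+\sum_i I(L_i^+;U_i)/(nC_2^V)$ valid, and $(C_1^F, C_2^F)$ making the analogous sharpness bound valid. Inspection of Theorem~\ref{thm:bound-fast-rate-rademacher}'s derivation (which underlies both Theorems~\ref{thm:bound-variance} and~\ref{thm:bound-flatness}) shows that the admissible pairs form a nonempty region in $(C_1,C_2)$-space determined by the shifted Rademacher MI argument, and because the shift constants in Lemmas~\ref{lem:vaiance-symmetric} and~\ref{lem:flatness-symmetric} are of the same functional form, the two admissible regions have nonempty intersection. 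I would therefore pick a single pair $(C_1,C_2)$ lying in both admissible regions, so that both inequalities hold with the \emph{same} constants simultaneously.

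Once both inequalities hold for the same $(C_1,C_2)$, the corollary follows by observing
\begin{align*}
\mathrm{Err}
&\leq \min\Bigl\{C_1 V(\gamma),\, C_1 F(\lambda)\Bigr\} + \sum_{i=1}^n \frac{I(L_i^+;U_i)}{nC_2}\\
&= C_1\min\{V(\gamma),F(\lambda)\} + \sum_{i=1}^n \frac{I(L_i^+;U_i)}{nC_2},
\end{align*}
which is exactly the claimed bound. The factoring of $C_1$ out of the minimum is valid because $C_1>0$.

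An alternative, slightly cleaner route would be to unify the two symmetric arguments upstream: Lemmas~\ref{lem:vaiance-symmetric} and~\ref{lem:flatness-symmetric} both express $\mathrm{Err}$ minus a penalty term as an expectation involving a shifted Rademacher variable and $L_i^+$. By writing $\mathrm{Err}-C_1\min\{V(\gamma),F(\lambda)\}\leq \max\{\mathrm{Err}-C_1V(\gamma),\mathrm{Err}-C_1F(\lambda)\}$ and bounding each term with the corresponding shifted-Rademacher MI argument already used in the proofs of Theorems~\ref{thm:bound-variance} and~\ref{thm:bound-flatness}, one recovers the same conclusion without ever invoking the two theorems as black boxes.

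The only genuine obstacle is verifying that the admissible region of $(C_1,C_2)$ is common to both theorems, i.e.\ that the variational inequalities underlying the shifted Rademacher MI bounds can be satisfied simultaneously by a single pair. This is a routine consistency check: the two lemmas give shift parameters $\tfrac{C_1\gamma^2}{C_1\gamma^2+2}$ and $\tfrac{C_1}{C_1+2}$, both lying in $(0,1)$ for any $C_1>0$, so the MI-to-cumulant step (analogous to the one in Theorem~\ref{thm:bound-fast-rate-rademacher}) yields an admissible $C_2$ in each case, and the intersection is nonempty. Since the statement only asserts existence of some $(C_1,C_2)$, no optimization is needed, and this compatibility check is the only nontrivial ingredient.
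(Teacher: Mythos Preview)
Your proposal is correct and follows essentially the same approach as the paper: identify the admissible $(C_1,C_2)$ regions for the variance and sharpness bounds, show they intersect, pick a common pair, and take the minimum. The paper's proof simply makes the two admissibility conditions explicit---$e^{2C_2}+e^{-2C_2(C_1\gamma^2+1)}\le 2$ for the variance bound and $e^{2C_2}+e^{-2C_2(C_1\lambda^2+1)}\le 2$ for the sharpness bound---and observes that since $\gamma,\lambda\in(0,1)$ these regions have nonempty intersection, which is exactly the ``routine consistency check'' you anticipated.
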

    We remark that if $\mathcal{A}$ satisfies any of the following: (i) $L_n\to 0$; (ii)$V(\gamma)\to 0$ for some $\gamma\in(0,1)$; (iii)$F(\lambda)\to 0$ for some $\lambda\in(0,1)$, we all have
    $
        \mathrm{Err}\leq \sum_{i=1}^n\frac{2I(L_i^+;U_i)}{n\ln{2}}.
    $
\begin{figure*}[!ht]
    \centering
    \begin{subfigure}[b]{0.24\textwidth}
\includegraphics[scale=0.28]{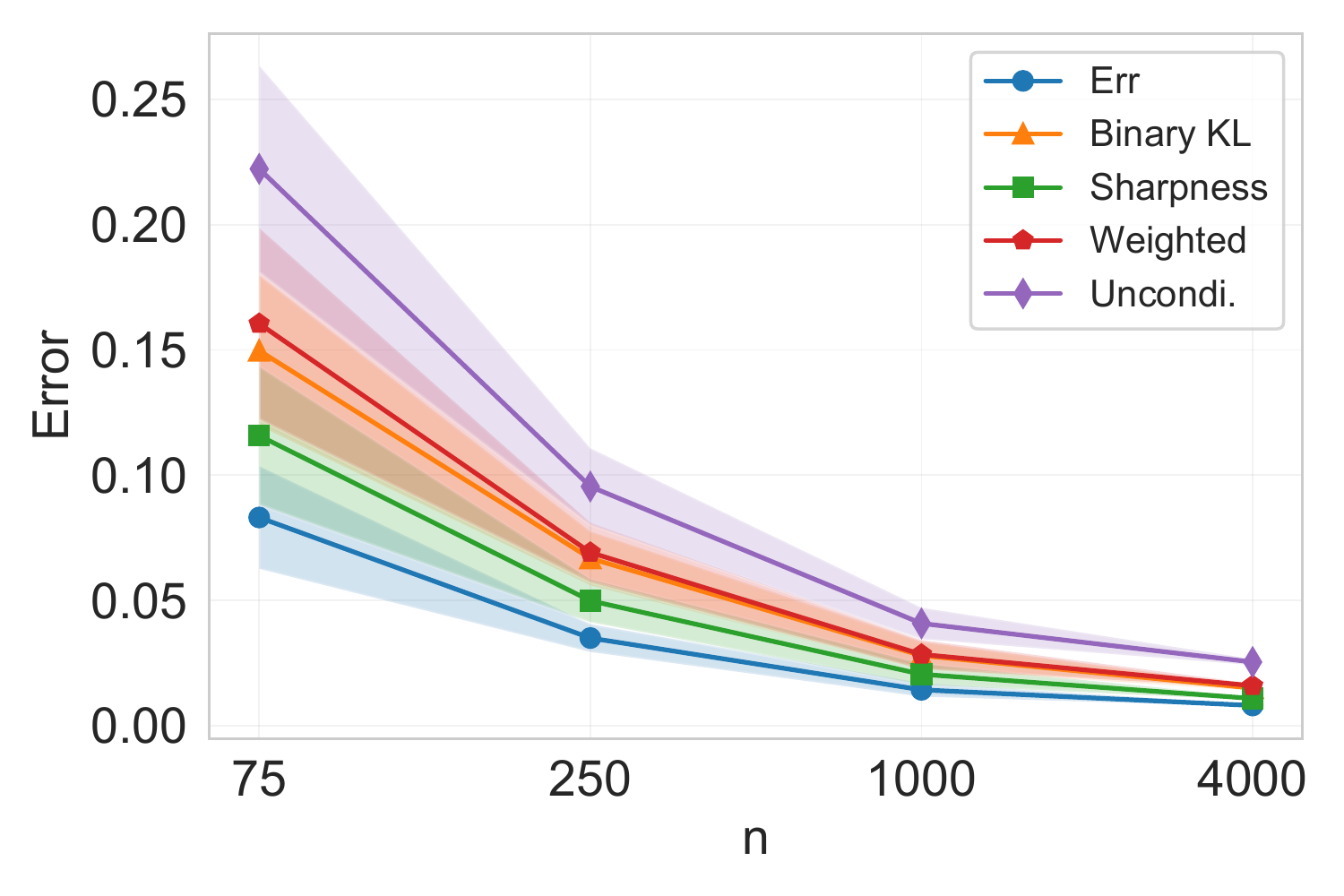}            
\caption{CNN on MNIST}            \label{fig:cnn-mnist}
    \end{subfigure}
\begin{subfigure}[b]{0.24\textwidth}
\includegraphics[scale=0.28]{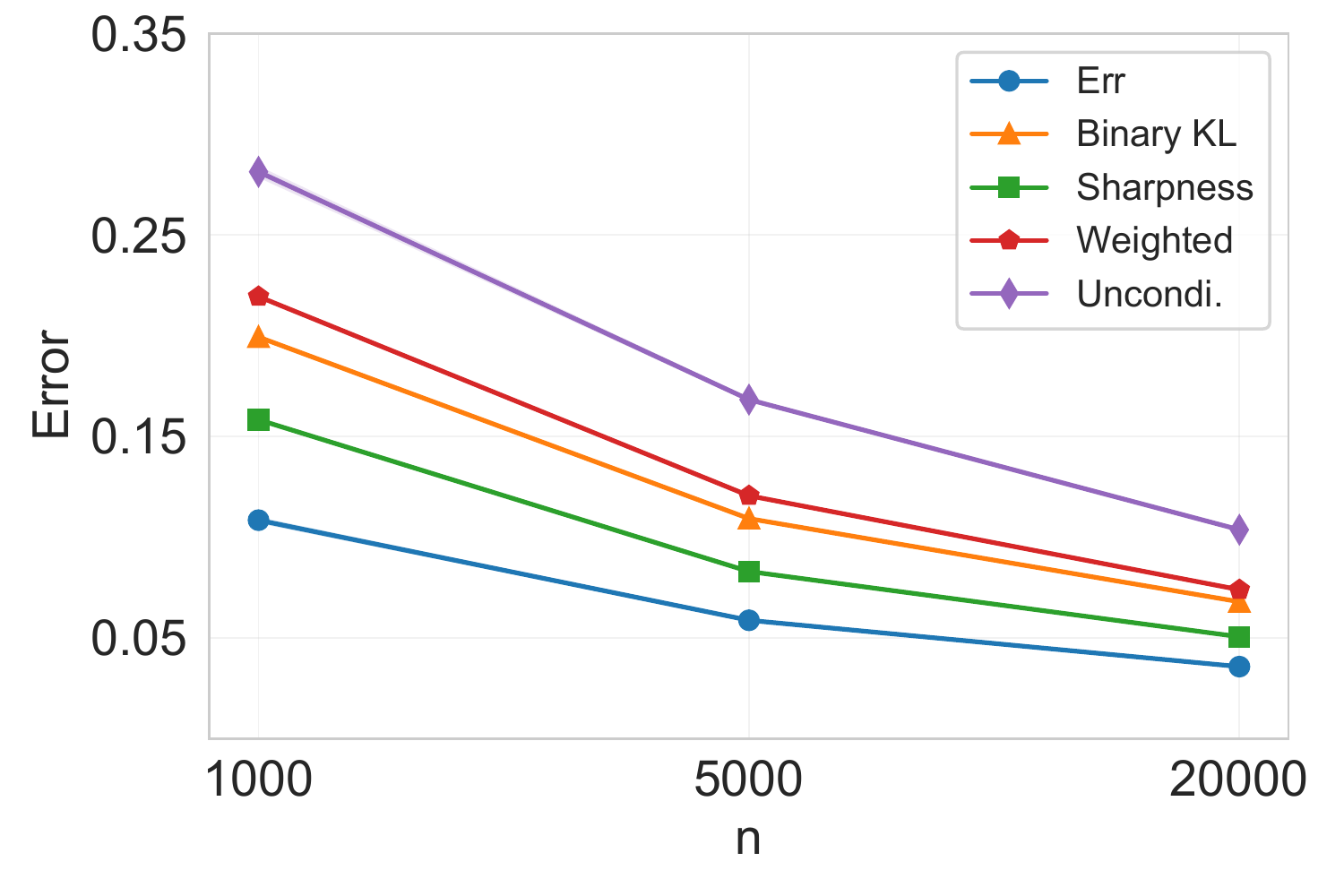}
\caption{ResNet on CIFAR10}
    \label{fig:resnet-cifar}
\end{subfigure}
 \begin{subfigure}[b]{0.24\textwidth}
\includegraphics[scale=0.28]{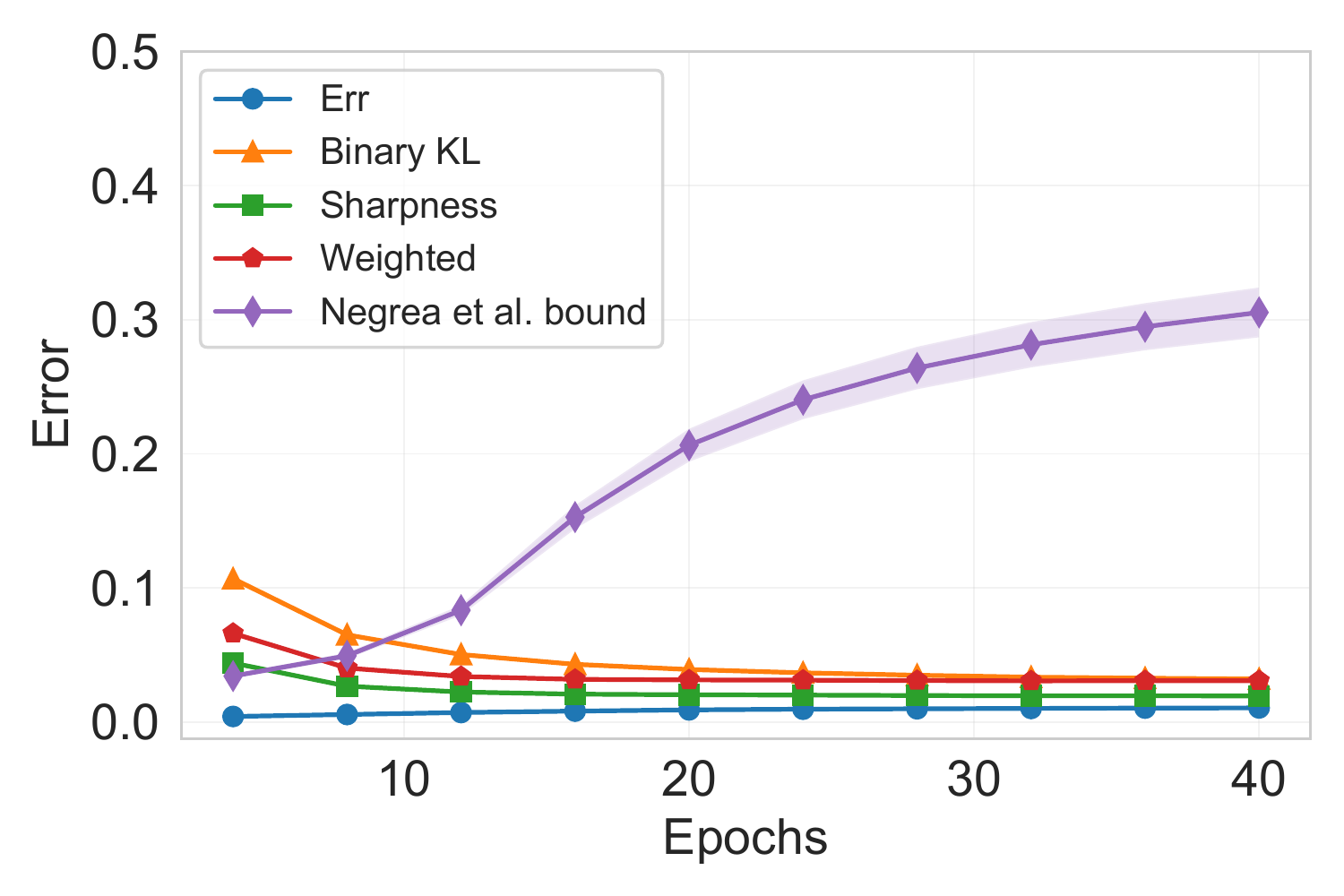}
\caption{SGLD (MNIST)}
\label{fig:sgld-mnist}
    \end{subfigure}
\begin{subfigure}[b]{0.24\textwidth}
\includegraphics[scale=0.28]{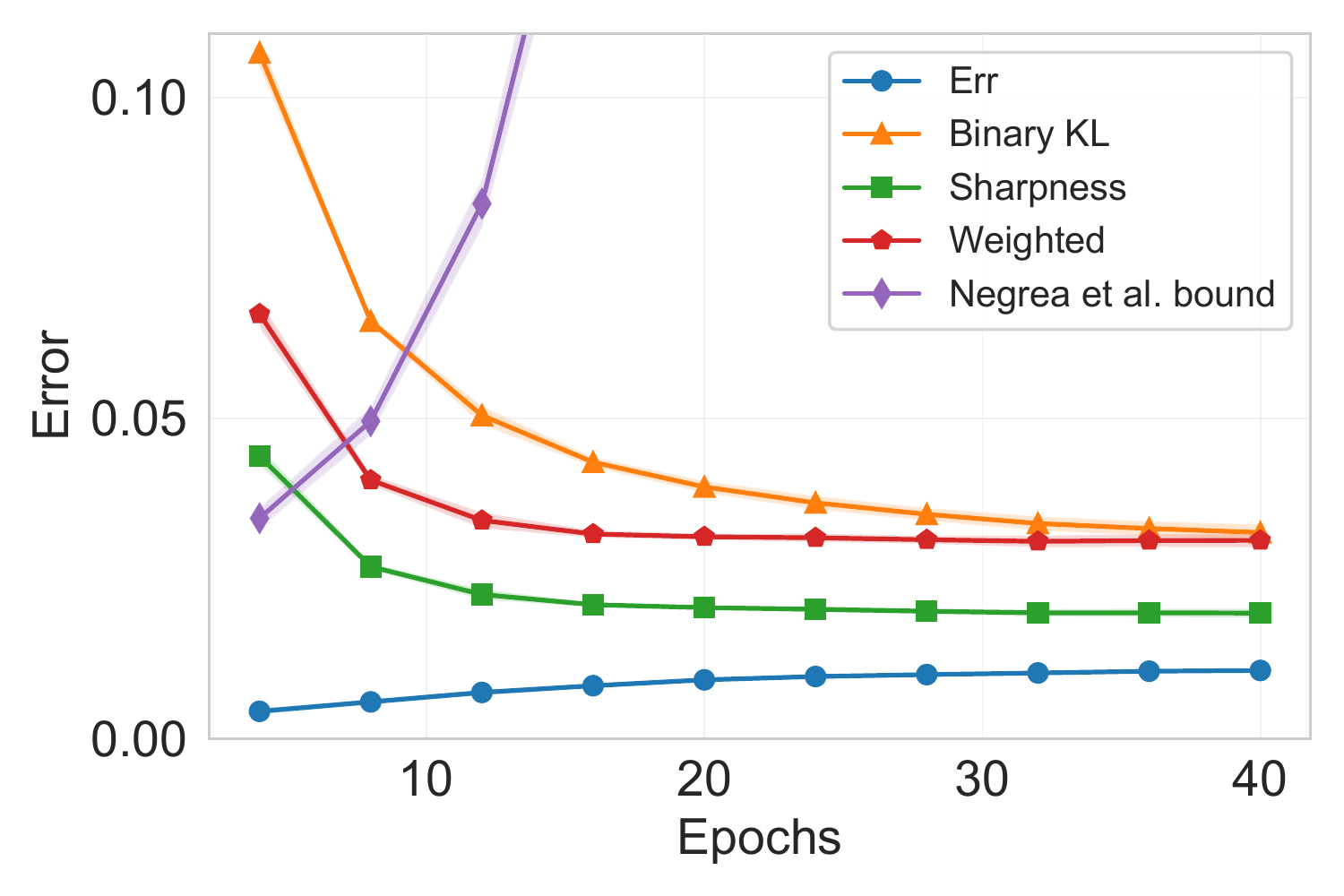}
\caption{Zoomed-in of (c)}
\label{fig:zoom-in}
\end{subfigure}
\caption{Comparison of bounds on two real datasets, MNIST (``$4$ vs $9$'') and CIFAR10.}\label{fig:Real-data}
\end{figure*}

\section{Numerical Results}
\label{sec:compare}
In this section, we empirically compare some CMI and MI bounds discussed in our paper. Our first experiment is based on a synthetic Gaussian dataset, where a simple linear classifier (with a softmax output layer) will be trained. The second experiment follows the same deep learning scenario setting with \cite{harutyunyan2021informationtheoretic,hellstrom2022a}, where we will train a $4$-layer CNN on MNIST \cite{lecun2010mnist} and fine-tune a ResNet-50 \cite{he2016deep} (pretrained on ImageNet \cite{deng2009imagenet}) on CIFAR10 \cite{krizhevsky2009learning}. In all of these experiments, we let the loss be the zero-one loss, namely $\ell(w,z)= \mathbbm{1}_{f_w(x)\neq y}$, and we apply the empirical risk minimization (ERM) to find the hypothesis, namely $w=\arg\min_{w\in\mathcal{W}} L_S(w)$. Since such loss is not differentiable, to enable the gradient based optimization methods such as SGD, we hereby use  the cross-entropy loss as a surrogate classification loss during training. Notice that $\mathrm{Err}$ is still defined with respect to the zero-one loss in our experiments. 

Under these settings, we will mainly compare the disintegrated ld-CMI bound in the first inequality of Theorem~\ref{thm:bound-LD-cimi} (\emph{Disint.}), the unconditional MI bound in Theorem~\ref{thm:bound-LD-ucimi} (\emph{Uncondi.}), the weighted generalization error bound in Eq.~(\ref{ineq:bound-fast-rate-general-optimal}) of Theorem~\ref{thm:bound-fast-rate-rademacher} (\emph{Weighted}), the variance bound in Theorem~\ref{thm:bound-variance} (\emph{Variance}) and the sharpness bound in Theorem~\ref{thm:bound-flatness} (\emph{Sharpness}). Besides, we will include the binary KL bound in \citet{hellstrom2022a} as a baseline, which is, to the best of our knowledge, the tightest fast-rate CMI bound in the literature when $L_n$ is close (but not equal) to zero. In addition, we note that the difference between the variance bound and the sharpness bound is negligible in the current scale of the figures, so for each figure we only report one of them. The comparison between the variance bound and the sharpness bound, and more comparison of other bounds mentioned in this paper (such as interpolating bounds and single-loss based square-root bounds) are given in Appendix~\ref{sec:experiments-appendix}.


\subsection{Linear Classifier}
We will first use a simple linear classifier to carry out the Gaussian data classification task (see Appendix~\ref{sec:setup} for more details of data generation and training). There are at least two major benefits of using such a synthetic dataset. On the one hand, the ground-truth distribution $\mu$ is known so we can draw unlimited supersamples, allowing repeatition of experiments so as to obtain an accurate estimate of the desired quantity
(e.g., for each $n$, we repeat the experiment $5000$ times, each with a random $(\widetilde{Z}, U)$ and report the average). On the other hand, the separability of different classes is adjustable, allowing for a control of the task difficulty.
Specifically, we will consider both the zero training loss case (i.e. a separable $\mu$) and the high training loss case (i.e. a non-separable $\mu$).

In the binary classification tasks (i.e. $|\mathcal{Y}|=2$),
the evaluations of $\mathrm{Err}$ and the bounds are given in Figure~\ref{fig:binary-easy} and Figure~\ref{fig:binary-hard}. When $\mu$ is separable (Figure~\ref{fig:binary-easy}), the algorithm can always interpolate the training sample. In this case, the fast-rate bounds are tighter than the square-root bounds, and the variance bound (or the sharpness bound) is the tightest. Moreover, notice that here the the disintegrated CMI bound is tighter than the unconditional MI bound. For a more challenging classification task (Figure~\ref{fig:binary-hard}), $L_n$ is no longer zero, the square-root bounds become tighter than the binary KL bound. Indeed, \citet{hellstrom2022a} shows that when the empirical risk is large, the square-root bound will be tighter than their fast-rate bounds. In contrast, our variance bound is even slightly tighter than the square-root bound of Theorem~\ref{thm:bound-LD-ucimi} in Figure~\ref{fig:binary-hard}. Additionally, notice that unlike the realizable case (the one with separable $\mu$), the unconditional MI bound is now tighter than the disintegrated CMI bound. 

 We also conduct experiments in the ten-class classification task (i.e. $|\mathcal{Y}|=10$), and the results are shown in Figure~\ref{fig:ten-easy} and Figure~\ref{fig:ten-hard}. In the realizable case (Figure~\ref{fig:ten-easy}), the results are similar to binary classification except that the binary KL bound is tighter than all the other bounds when $n=10$, which is the only case we observe where the binary KL bound outperforms Theorem~\ref{thm:bound-variance} and Theorem~\ref{thm:bound-flatness}. In addition, it is worth mentioning that Eq.~(\ref{ineq:bound-fast-rate-general-optimal}) decays much faster than the square-root bounds in Figure~\ref{fig:ten-easy} (and also in Figure~\ref{fig:binary-hard}). For the non-separable case in Figure~\ref{fig:ten-hard}, only the unconditional MI bound in
 Theorem~\ref{thm:bound-LD-ucimi} is non-vacuous for all the values of $n$. While the binary KL bound is removed in this case since it is always vacuous, our sharpness bound is  competitive to the square-root bound when $n\geq50$.

\subsection{Neural Networks}
To compare information-theoretic generalization bounds of modern deep neural networks, we follow the same experiment settings in
\cite{harutyunyan2021informationtheoretic,hellstrom2022a}. Specifically, we train a 4-layer CNN model on a binary MNIST dataset (``$4$ vs $9$'') and also fine-tune a pretrained ResNet-50 on CIFAR10. Unlike the previous synthetic dataset case, here we can only repeatedly run experiments (with different $\widetilde{Z}$ and $U$) for limited times due to the high computation complexity. Thus, we report the both average numerical values and their standard deviations. Notice that
our code is primarily the same as the code provided by \citet{hellstrom2022a}, which is originally based on the code in \href{https://github.com/hrayrhar/f-CMI}{https://github.com/hrayrhar/f-CMI}. 
 More experiment details can be found in Appendix~\ref{sec:setup}.

Observations in the binary MNIST experiment (Figure~\ref{fig:cnn-mnist}) are close to the realizable binary classification case in Figure~\ref{fig:binary-easy} (both near the interpolating regime). For example, the fast-rate bounds are tighter than the square-root bound. Notably, our sharpness bound (or variance bound)  significantly improve the the binary KL bound in both the MNIST experiment (Figure~\ref{fig:cnn-mnist}) and the CIFAR10 experiment (Figure~\ref{fig:resnet-cifar}), while Eq.~(\ref{ineq:bound-fast-rate-general-optimal}) is slightly weaker than the binary KL bound.
Furthermore, we also compare the bounds when the CNN model is trained by a SGLD algorithm \cite{raginsky2017non}, a variant of SGD, on the binary MNIST dataset. In this case, we add the weight-based MI bound of SGLD in \citet[Eq.~6]{negrea2019information} as a baseline. Figure~\ref{fig:sgld-mnist} suggests that both of our sharpness bound and Eq.~(\ref{ineq:bound-fast-rate-general-optimal}) improve the binary KL bound. Notably, \citet{harutyunyan2021informationtheoretic} and \citet{hellstrom2022a} observe that $f$-CMI bound and e-CMI bound are worse than the SGLD bound in \citet{negrea2019information} at the beginning of training. As shown in Figure~\ref{fig:zoom-in}, although our sharpness bound is still looser than the SGLD bound in \citet{negrea2019information} before the fifth epoch, our sharpness bound significantly shrinks the gap with the SGLD bound during early training.

\section{Limitations and Other Related Literature}

\paragraph{Limitations}
More recently, the limitations of information-theoretic bounds in explaining the generalization properties of stochastic convex optimization (SCO) problems have been investigated by \citet{haghifam2022limitations}. In their study, the authors demonstrate that almost all existing information-theoretic bounds, except for the chained MI/CMI bounds, fail to vanish in at least one of their counterexamples, despite the true generalization error vanishing. Unfortunately, neither our loss-difference  MI/CMI bounds nor our single-loss MI bounds are capable of overcoming such limitations revealed in their constructed counterexample presented in \citep[Theorem~17]{haghifam2022limitations}. These limitations shed light on certain inherent properties of mutual information measures, which may not be easily overcome solely by introducing new information measures.

In \citet{hellstrom2022a}, the authors provide an e-CMI generalization bound for a generic convex function of the training loss and test loss. Although our analysis, using either loss-difference CMI/MI bounds or single-loss MI bounds, may not be directly applicable to general convex comparison functions between the training loss and testing loss, one potential alternative is to consider the square of the loss difference, for which similar techniques can be employed to derive generalization bounds.

Furthermore, it is important to note that all our new information-theoretic generalization bounds are derived under the assumption of independent and identically distributed (i.i.d.) training instances. Exploring the possibility of relaxing this assumption represents a promising avenue for future research.

\paragraph{Other Related Work} Information-theoretic generalization bounds have been explored for some specific algorithms. For example, the weight based information-theoretic bounds have been successfully applied to characterize the generalization properties of SGLD \cite{pensia2018generalization,bu2019tightening,negrea2019information,haghifam2020sharpened,rodriguez2021random,wang2021analyzing}, and more recently, these bounds are also used to analyze either the vanilla SGD \cite{neu2021information,wang2022generalization} or the stochastic differential equations (SDEs) approximation of SGD \cite{wang2022two}. To apply the weight based MI or CMI bounds for SGD and its variants, unlike the bounds in our paper and \cite{harutyunyan2021informationtheoretic,hellstrom2022a} that treat the learning algorithm as a black-box, these weight based bounds are usually further upper bounded by some quantities along the trajectories of the algorithms (e.g.,  gradient incoherence \cite{negrea2019information}). This then points to a future direction: Can the losses-based or predictions-based information-theoretic bounds be exploited the similar trajectory information of the gradient based algorithms?

It is also noteworthy that recently
\citet{haghifam2022understanding} and \citet{rammal2022on} concurrently propose a variant of the initial CMI framework \cite{steinke2020reasoning}, the ``leave-one-out'' (LOO) CMI setting, where their supersample is a $n+1$ vector instead of a $n\times 2$ matrix. While our development in this paper is restricted to the  latter, 
it is curious and tempting to compare\textemdash or connect\textemdash the two.

 In addition to the supervised learning setting, information-theoretic bounds have found applicability in various other learning scenarios, showcasing their versatility. These scenarios include meta-learning \cite{hellstrom2022evaluated}, semi-supervised learning \cite{he2022information}, transfer learning \cite{wu2020information}, domain adaptation \cite{wang2023informationtheoretic}, and so on. It is reasonable to expect that our findings can be effectively utilized in these diverse learning settings as well.

\section{Concluding Remarks}
In this work, we obtain some novel and easy-to-measure information-theoretic generalization bounds. These bounds are demonstrated to be tighter than the previous results in the same supersample setting of \citet{steinke2020reasoning}, either theoretically or empirically. In our development, we also discuss some other viewpoints of generalization in the current supersample construction including explaining generalization via the rate of reliable communication over the memoryless channel, and via correlating with the Rademacher sequence. These new insights may help to design new learning algorithms or discover novel algorithm-dependent complexity measures. 


\section*{Acknowledgements}
This work is supported partly by an NSERC Discovery grant and a National Research Council of Canada (NRC) Collaborative R\&D grant (AI4D-CORE-07). Ziqiao Wang is also supported in part by the NSERC CREATE program through the Interdisciplinary Math and Artificial Intelligence (INTER-MATH-AI) project.
The authors would like to thank the anonymous reviewers for their careful reading and valuable suggestions.

\bibliography{ref}
\bibliographystyle{icml2023}

\newpage

\onecolumn

\begin{appendices}

\section{Some Useful Definitions and Lemmas}
\label{sec:defns and lemmas}

\begin{defn}[Wasserstein Distance]
Let $d(\cdot,\cdot)$ be a metric and let $P$ and $Q$ be probability measures on $\mathcal{X}$. Denote  $\Gamma(P,Q)$ as the set of all couplings of $P$ and $Q$ (i.e. the set of all joint distributions  on $\mathcal {X} \times \mathcal {X}$ with two marginals being $P$ and $Q$), then the Wasserstein Distance of order one between $P$ and $Q$ is defined as $\mathbb{W}(P,Q)\triangleq\inf_{\gamma\in\Gamma(P,Q)}\int_{\mathcal{X}\times\mathcal{X}} d(x,x')d\gamma(x,x')$. 
\end{defn}

Definition~\ref{defn:separable-process}-\ref{defn:epsilon-partition} are used in the context of chaining methode, e.g., Theorem~\ref{thm:bound-chaining-LD}, Corollary~\ref{cor:bound-hierarchical-chain} and Theorem~\ref{thm:chaining-single-loss-bound}.

The following is a technique assumption.
\begin{defn}[Separable Process]
\label{defn:separable-process}
The random process $\{X_t\}_{t\in\mathcal{T}}$ is called separable if there is a countable set $T_0\subseteq T$ s.t. $X_t\in\lim_{s\to t, s\in T_0} X_s$ for $\forall t \in T$ a.s.,  where $x\in\lim_{s\to t, s\in T_0} x_s$ means that there is a sequence $(s_n)$ in $T_0$ s.t. $s_n\to t$ and $x_{s_n}\to x$.
\end{defn}

\begin{defn}[Sub-Gaussian Process]
\label{defn:subgaussian-process}
The random process $\{X_t\}_{t\in T}$ on the metric space $(T,d)$ is called subgaussian if $\ex{}{X_t}=0$ for all $t\in T$ and $\ex{}{e^{\lambda(X_t-X_s)}}\leq e^{\frac{1}{2}\lambda^2d^2(t,s)}$ for all $t,s\in T$, $\lambda\geq 0$.
\end{defn}

\begin{defn}[Stochastic Chain \cite{zhou2022stochastic}]
\label{defn:stochastic-chain}
Let $(X_\mathcal{T},T)$ be a random process and random variable pair, where $T$ is a random variable in the index set $\mathcal{T}$. A sequence of random variables $\{T_k\}_{k=k_0}^\infty$ (with each distributed in $\mathcal{T}$) is called a stochastic chain of the pair $(X_\mathcal{T},T)$, if 1) $\lim_{k\to\infty}\ex{}{X_{T_k}}=\ex{}{X_{T}}$, 2) $\ex{}{X_{T_{k_0}}}=0$ and 3) $\{X_t\}_{t\in\mathcal{T}}-T-T_k-T_{k-1}$ is a Markov chain for every $k>k_0$.
\end{defn}

\begin{defn}[Increasing Sequence of $\epsilon$-Partition]
\label{defn:epsilon-partition}
A partition $\mathcal{P}=\{A_1, A_2, \dots, A_m\}$ of the set $T$ is called an $\epsilon$-partition of the metric space $(T,d)$ if for all $i=1,2,\dots,m$, $A_i$ can be contained within a ball of radius $\epsilon$. A sequence of partitions $\{\mathcal{P}_k\}_{k=m}^\infty$ of a set $T$ is called an increasing sequence if for any $k\geq m$ and each $A\in\mathcal{P}_{k+1}$, there exists $B\in\mathcal{P}_k$ s.t. $A\subseteq B$.
\end{defn}

The following lemmas are foundations of the most proofs in this paper.
\begin{lem}[Donsker-Varadhan (DV) variational representation of KL divergence {\citep[Theorem~3.5]{polyanskiy2019lecture}}]
    \label{lem:DV representation}
    Let $Q$, $P$ be probability measures on $\Theta$, for any bounded measurable function $f:\Theta\rightarrow \mathbb{R}$, we have
\[
\mathrm{D_{KL}}(Q||P) = \sup_{f} \ex{\theta\sim Q}{f(\theta)}-\ln\ex{\theta\sim P}{\exp{f(\theta)}}.
\]
\end{lem}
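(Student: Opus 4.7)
The plan is to establish both directions of the claimed equality. I would first prove the inequality $\mathbb{E}_{Q}[f(\theta)] - \ln \mathbb{E}_{P}[\exp f(\theta)] \leq \mathrm{D_{KL}}(Q\|P)$ for every bounded measurable $f$, and then prove achievability by exhibiting a sequence of bounded $f$'s making the gap vanish. This is the standard two-sided strategy for variational representations of convex functionals, and it fits naturally here because KL divergence is the convex conjugate (in an appropriate sense) of the log-moment generating functional.

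For the upper bound, I would introduce a tilted probability measure $\widetilde{P}$ defined by the density $\frac{d\widetilde{P}}{dP}(\theta) = \frac{\exp f(\theta)}{\mathbb{E}_{P}[\exp f(\theta)]}$. Boundedness of $f$ guarantees that the normalizing constant is finite and positive, so $\widetilde{P}$ is a well-defined probability measure. I can then assume $Q \ll P$ (else $\mathrm{D_{KL}}(Q\|P) = \infty$ and the inequality is trivial), which also gives $Q \ll \widetilde{P}$. Writing out
\[
0 \leq \mathrm{D_{KL}}(Q\|\widetilde{P}) = \mathbb{E}_{Q}\!\left[\ln \frac{dQ}{dP}\right] - \mathbb{E}_{Q}\!\left[\ln \frac{d\widetilde{P}}{dP}\right] = \mathrm{D_{KL}}(Q\|P) - \mathbb{E}_{Q}[f(\theta)] + \ln \mathbb{E}_{P}[\exp f(\theta)],
\]
and rearranging yields the desired upper bound, which by taking $\sup_{f}$ on the left shows that the supremum is at most $\mathrm{D_{KL}}(Q\|P)$.

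For the achievability direction, the natural candidate is $f^\star(\theta) = \ln \frac{dQ}{dP}(\theta)$, for which direct substitution yields $\mathbb{E}_{Q}[f^\star] - \ln \mathbb{E}_{P}[\exp f^\star] = \mathrm{D_{KL}}(Q\|P) - \ln 1 = \mathrm{D_{KL}}(Q\|P)$. The caveat is that the lemma restricts $f$ to bounded measurable functions, while $f^\star$ need not be bounded. I would circumvent this by a truncation argument: define $f_n(\theta) = \max\{-n, \min\{n, f^\star(\theta)\}\}$, which is bounded for every finite $n$. I would then invoke monotone/dominated convergence to show $\mathbb{E}_{Q}[f_n] \to \mathbb{E}_{Q}[f^\star] = \mathrm{D_{KL}}(Q\|P)$ (splitting into positive and negative parts if needed to justify integrability) and $\mathbb{E}_{P}[\exp f_n] \to \mathbb{E}_{P}[\exp f^\star] = 1$, so the functional evaluated at $f_n$ tends to $\mathrm{D_{KL}}(Q\|P)$.

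The main obstacle is the achievability side in full generality, specifically justifying the limit passage when $\mathrm{D_{KL}}(Q\|P)$ is only finite (not necessarily with $\ln(dQ/dP)$ bounded in $L^\infty$). Care is needed in invoking dominated convergence for $\mathbb{E}_{P}[\exp f_n]$, since $\exp f_n \leq \exp f^\star = dQ/dP$ provides an integrable (w.r.t.\ $P$) majorant, which is the clean way to close the argument. The degenerate case $Q \not\ll P$ must also be handled separately; there one can exhibit an event $A$ with $Q(A) > 0 = P(A)$ and take $f_k = k \cdot \mathbbm{1}_{A}$ to drive the functional to $+\infty$, matching the $\mathrm{D_{KL}}(Q\|P) = +\infty$ side. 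Combining the two directions yields the equality.
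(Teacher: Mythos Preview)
The paper does not prove this lemma; it is stated as a known result with a citation to \citet[Theorem~3.5]{polyanskiy2019lecture} and used as a black box throughout. Your argument is the standard proof and is essentially correct, so there is nothing to compare against.

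One minor technical slip in your achievability step: the claimed domination $\exp f_n \leq \exp f^\star = dQ/dP$ fails on the set $\{f^\star < -n\}$, where the truncation gives $f_n = -n > f^\star$ and hence $\exp f_n = e^{-n} > dQ/dP$. The fix is immediate: use the majorant $\max\{1,\, dQ/dP\}$, which is $P$-integrable with $\int \max\{1, dQ/dP\}\, dP \leq 2$, and observe that $\exp f_n \leq 1$ whenever $f_n \leq 0$ and $\exp f_n \leq \exp f^\star$ whenever $f_n > 0$. With this correction the dominated-convergence argument goes through.
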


\begin{lem}[Hoeffding's Lemma {\citep{hoeffding1963probability}}]
    \label{lem:hoeffding}
     Let $X \in [a, b]$ be a bounded random variable with mean $\mu$. Then, for all $t\in\mathbb{R}$, we have $\ex{}{e^{tX}}\leq e^{t\mu+\frac{t^2(b-a)^2}{8}}$.
\end{lem}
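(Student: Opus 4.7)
The plan is to prove Hoeffding's Lemma by the classical centering-plus-convexity argument. First I would reduce to the centered case: set $Y = X - \mu$, so $Y \in [a-\mu, b-\mu] =: [a', b']$ with $\mathbb{E}[Y] = 0$, and observe $\mathbb{E}[e^{tX}] = e^{t\mu}\mathbb{E}[e^{tY}]$. Since $b - a = b' - a'$, it suffices to show $\mathbb{E}[e^{tY}] \leq e^{t^2(b-a)^2/8}$ for every mean-zero random variable $Y$ supported in $[a', b']$.

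Next I would exploit the convexity of $y \mapsto e^{ty}$. For any $y \in [a', b']$, write $y$ as a convex combination of the endpoints, $y = \lambda a' + (1-\lambda) b'$ with $\lambda = (b'-y)/(b'-a')$, and apply convexity to obtain
\[
e^{ty} \;\leq\; \frac{b'-y}{b'-a'}\,e^{ta'} + \frac{y-a'}{b'-a'}\,e^{tb'}.
\]
Taking expectations and using $\mathbb{E}[Y] = 0$ eliminates the linear term in $Y$, leaving
\[
\mathbb{E}[e^{tY}] \;\leq\; \frac{b'}{b'-a'}\,e^{ta'} - \frac{a'}{b'-a'}\,e^{tb'}.
\]
Reparametrize by setting $p = -a'/(b'-a') \in [0,1]$ and $h = t(b'-a') = t(b-a)$; the right-hand side becomes $e^{-ph}\bigl(1 - p + p e^{h}\bigr) = e^{L(h)}$ with $L(h) = -ph + \ln\bigl(1 - p + p e^{h}\bigr)$.

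The key analytic step, and the main (though standard) obstacle, is to show $L(h) \leq h^2/8$ for all $h \in \mathbb{R}$. I would do this by Taylor expansion around $0$: check $L(0) = 0$ and $L'(h) = -p + \frac{pe^h}{1 - p + pe^h}$ so $L'(0) = 0$, then compute
\[
L''(h) \;=\; \frac{p(1-p)e^h}{\bigl(1 - p + pe^h\bigr)^2} \;=\; u(1-u),
\]
where $u = pe^h/(1 - p + pe^h) \in [0,1]$. Since $u(1-u) \leq 1/4$, Taylor's theorem with remainder gives $L(h) \leq h^2/8$. Substituting back $h = t(b-a)$ and multiplying by $e^{t\mu}$ yields the stated bound $\mathbb{E}[e^{tX}] \leq e^{t\mu + t^2(b-a)^2/8}$.
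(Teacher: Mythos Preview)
Your proof is correct and is the standard convexity-plus-Taylor argument for Hoeffding's Lemma. The paper does not actually prove this statement; it simply cites it as a classical result from \cite{hoeffding1963probability} and uses it as a black box, so there is no paper proof to compare against beyond noting that your argument supplies exactly the well-known derivation the citation points to.
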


\begin{lem}[Kantorovich-Rubinstein (KR) duality of Wasserstein distance {\citep{Villani2008}}]
    \label{lem:kr-duality}
    \label{lem:KR duality}
For any two distributions $P$ and $Q$, we have
\[
\mathbb{W}(P,Q)=\sup_{f\in\mathrm{1-Lip(\rho)}}\int_\mathcal{X} f dP - \int_\mathcal{X} f dQ,
\]
where the supremum is taken over all $1$-Lipschitz functions in the metric $d$, i.e. $|f(x)-f(x')|\leq d(x,x')$ for any $x,x'\in\mathcal{X}$.
\end{lem}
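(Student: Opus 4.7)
The plan is to prove both inequalities separately. The easy direction, namely $\mathbb{W}(P,Q) \geq \sup_f \left(\int f \, dP - \int f \, dQ\right)$ where the supremum is taken over $1$-Lipschitz $f$, follows by a direct pointwise comparison: for any coupling $\gamma \in \Gamma(P,Q)$ and any $1$-Lipschitz $f$, the marginal property gives $\int f \, dP - \int f \, dQ = \int (f(x) - f(x')) \, d\gamma(x,x')$, and the Lipschitz bound $|f(x) - f(x')| \leq d(x,x')$ then yields the inequality after infimizing over $\gamma$ on the right and supremizing over $f$ on the left.

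For the reverse inequality, I would invoke Kantorovich duality. Viewing $\mathbb{W}(P,Q) = \inf_{\gamma \in \Gamma(P,Q)} \int d(x,x') \, d\gamma(x,x')$ as an infinite-dimensional linear program whose constraints fix the marginals to be $P$ and $Q$, Lagrangian duality produces the dual $\sup \left\{\int \phi \, dP + \int \psi \, dQ : \phi(x) + \psi(x') \leq d(x,x')\right\}$. Given any dual-feasible pair $(\phi, \psi)$, the plan is to pass to the $c$-conjugate $\phi^c(x') \triangleq \inf_x \left(d(x,x') - \phi(x)\right)$ and then to $\phi^{cc}(x) \triangleq \inf_{x'} \left(d(x,x') - \phi^c(x')\right)$; the dual objective does not decrease under these replacements, and one verifies that $\phi^{cc}$ is $1$-Lipschitz with $\phi^c = -\phi^{cc}$, thereby reducing the dual to the supremum over $1$-Lipschitz $f$ of $\int f \, dP - \int f \, dQ$ that appears in the statement.

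The main obstacle is verifying strong duality, i.e., the absence of a duality gap for the Kantorovich LP in this infinite-dimensional setting. This typically relies on either the Fenchel-Rockafellar theorem applied to appropriate spaces of continuous bounded functions, or a weak-$*$ compactness argument on $\Gamma(P,Q)$ combined with the lower semicontinuity of $\gamma \mapsto \int d \, d\gamma$. Since Kantorovich duality is classical and these measure-theoretic technicalities are orthogonal to the information-theoretic themes of this paper, the cleanest route is simply to invoke the standard treatment, e.g., \citet{Villani2008}.
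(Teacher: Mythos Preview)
Your proposal is a correct sketch of the standard proof of Kantorovich--Rubinstein duality, but note that the paper does not actually prove this lemma: it is stated in the appendix as a known foundational result with a citation to \citet{Villani2008}, and no argument is given. Your outline (the easy direction by integrating the Lipschitz bound against a coupling, and the hard direction via the Kantorovich dual with $c$-conjugation to reduce to a single $1$-Lipschitz potential) is exactly the classical route found in that reference, so there is nothing to compare.
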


The following result is known in the previous work \cite{xu2017information}.
\begin{lem}[{\citet[Lemma~1]{xu2017information}}]
    \label{lem:subgaussian-dv}
    If $g(X', Y')$ is $\sigma$-subgaussian\footnote{A random variable $X$ is $\sigma$-subgaussian if for any $t$, $\ln {\mathbb E} \exp\left( t \left(
X- {\mathbb E}X
\right) \right) \le t^2\sigma^2/2$.} under $P_{X',Y'} = P_XP_Y$, then
    \[
    \abs{\ex{X,Y}{g(X, Y)}-\ex{X',Y'}{g(X', Y')}}\leq\sqrt{2\sigma^2I(X;Y)}.
    \]
\end{lem}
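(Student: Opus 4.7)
The plan is to invoke the Donsker-Varadhan variational representation (Lemma~\ref{lem:DV representation}) together with the sub-Gaussian assumption, following the standard template that produces tail-style bounds controlled by $\mathrm{KL}$. The key identification is $\mathrm{D_{KL}}(P_{X,Y} \| P_X P_Y) = I(X;Y)$, so if we can get a bound of the form $|\mathbb{E}_{P_{X,Y}}[g] - \mathbb{E}_{P_XP_Y}[g]| \le \text{(something)} \cdot \mathrm{KL}$ on one side and $\text{(something)} \cdot \sigma^2$ on the other, optimizing a free parameter will yield the square-root form.

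Concretely, first I would apply the DV lemma to the test function $t\cdot g$ for an arbitrary $t \in \mathbb{R}$ (the bounded-function hypothesis in Lemma~\ref{lem:DV representation} can be removed by a standard truncation/monotone convergence argument, which I would only sketch). This gives
\[
t\,\mathbb{E}_{X,Y}[g(X,Y)] - \ln \mathbb{E}_{X',Y'}[\exp(t\,g(X',Y'))] \le I(X;Y).
\]
Next, I would subtract $t\,\mathbb{E}_{X',Y'}[g(X',Y')]$ from both sides and use the $\sigma$-sub-Gaussianity of $g(X',Y')$ under $P_X P_Y$ (Lemma~\ref{lem:hoeffding}-style moment-generating function control, though here it is an assumption, not a conclusion) to bound
\[
\ln \mathbb{E}_{X',Y'}[\exp(t(g(X',Y') - \mathbb{E}_{X',Y'} g))] \le \tfrac{t^2 \sigma^2}{2}.
\]

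Combining these two inequalities yields, for every $t \in \mathbb{R}$,
\[
t\bigl(\mathbb{E}_{X,Y}[g(X,Y)] - \mathbb{E}_{X',Y'}[g(X',Y')]\bigr) \le I(X;Y) + \tfrac{t^2\sigma^2}{2}.
\]
Restricting to $t > 0$, dividing by $t$, and minimizing the right-hand side over $t > 0$ (the optimum is $t^\ast = \sqrt{2I(X;Y)/\sigma^2}$) produces the one-sided bound
\[
\mathbb{E}_{X,Y}[g(X,Y)] - \mathbb{E}_{X',Y'}[g(X',Y')] \le \sqrt{2\sigma^2 I(X;Y)}.
\]
Finally, repeating the same argument with $-g$ in place of $g$ (which is also $\sigma$-sub-Gaussian under $P_XP_Y$) gives the matching lower bound, and together they yield the claimed absolute-value inequality.

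The only mildly nontrivial step is justifying the use of the DV representation without boundedness, since Lemma~\ref{lem:DV representation} is stated for bounded measurable $f$; I would handle this by truncating $g$ at level $M$, applying the bounded version, and then letting $M \to \infty$, using sub-Gaussian integrability to pass to the limit in both the expectation and the log-MGF terms. Everything else is routine: a DV step, a sub-Gaussian MGF step, and a one-parameter optimization.
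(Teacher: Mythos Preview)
Your proof is correct and is exactly the standard argument from \citet{xu2017information}; the paper does not give its own proof of this lemma (it merely cites it as a known result), but the very same template---DV representation, sub-Gaussian control of the log-MGF, then optimizing over the free scalar $t$---is what the paper uses in the proofs of Theorem~\ref{thm:bound-LD-cimi} and Theorem~\ref{thm:bound-LD-ucimi}, so your approach matches the paper's implicit methodology.
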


\section{Omitted Proofs and Additional Results in Section~\ref{sec:loss-difference}}
\subsection{Proof of Theorem~\ref{thm:bound-LD-cimi}}
The following proof shows that the proof of e-CMI bound in \citet{hellstrom2022a} can be adapted to the loss-difference MI bound, where we just replace the distribution $P_{L^+_i,L^-_i}$ by $P_{\Delta L_i}$.
\begin{proof}
    Let $U_i'$ be the independent copy of $U_i$ s.t. $U_i'\sim \mathrm{Bern}(1/2)$ and $U'\indp \Delta L_i | \widetilde{Z}$. Recall Lemma~\ref{lem:DV representation},
    \begin{align*}
        I(\Delta L_i;U_i|\widetilde{Z}=\tilde{z})=&\mathrm{D_{KL}}\pr{P_{\Delta L_i,U_i|\widetilde{Z}=\tilde{z}}||P_{\Delta L_i|\widetilde{Z}=\tilde{z}}P_{U'_i}}\\
        \geq&\sup_{t\in\mathbb{R}} \ex{\Delta L_i,U_i|\tilde{z}}{tg(\Delta L_i,U_i,\tilde{z})} - \ln\ex{\Delta L_i,U'_i|\tilde{z}}{e^{tg(\Delta L_i,U'_i,\tilde{z})}}.
    \end{align*}

    Recall that $w=\mathcal{A}(\tilde{z}_u)$, we now let $g(\Delta l_i,u_i,\tilde{z})=(-1)^{u_i}\Delta l_i=(-1)^{u_i}\pr{\ell(\mathcal{A}(\tilde{z}_u),\tilde{z}^-_{i})-\ell(\mathcal{A}(\tilde{z}_u),\tilde{z}^+_{i})}$.

    Then,
    \begin{align}
        I(\Delta L_i;U_i|\widetilde{Z}=\tilde{z})\geq&\sup_{t} \ex{\Delta L_i,U_i|\tilde{z}}{t(-1)^{U_i}\Delta L_i} - \ln\ex{\Delta L_i,U'_i|\tilde{z}}{e^{t(-1)^{U'_i}\Delta L_i}}\notag\\
        =&\sup_{t} \ex{\Delta L_i,U_i|\tilde{z}}{t(-1)^{U_i}\Delta L_i} - \ln\ex{\Delta L_i|\tilde{z}}{\ex{U'}{e^{t(-1)^{U'_i}\Delta L_i}\big\vert \Delta L_i=\Delta l_i}},\label{eq:independence-1}
    \end{align}
    where the last equality is by the conditional independence.

Since $\ex{U'}{t(-1)^{U'_i}\Delta l_i}=0$ and $(-1)^{U'_i}$ is bounded between $[-1,1]$, by Lemma~\ref{lem:hoeffding}, we have
\begin{align*}
    \ex{U_i'}{e^{t(-1)^{U'_i}\Delta l_i}}\leq e^{\frac{t^2\Delta l_i^2}{2}}
    \leq e^{\frac{t^2}{2}},
\end{align*}
where the second inequality is by the boundedness condition of the loss function, i.e. $\Delta l_i\in [-1,1]$.

Plugging the inequality above into Eq.~(\ref{eq:independence-1}), we have
\begin{align*}
     I(\Delta L_i;U_i|\widetilde{Z}=\tilde{z})\geq
     \sup_{t} \ex{\Delta L_i,U_i|\tilde{z}}{t(-1)^{U_i}\Delta L_i} - \frac{t^2}{2}.
\end{align*}

Then consider the case of $t>0$ and $t<0$ ($t=0$ is trivial), by AM–GM inequality (i.e. the arithmetic mean is greater than or equal to the geometric mean), the following is straightforward,
\[
\abs{\ex{\Delta L_i,U_i|\tilde{z}}{(-1)^{U_i}\Delta L_i}}\leq \sqrt{2 I(\Delta L_i;U_i|\widetilde{Z}=\widetilde{z})}.
\]

Notice that 
\begin{align}
    \abs{\mathrm{Err}} = \abs{\ex{S,W}{L_\mu(W)-L_S(W)}}=&\abs{\ex{\widetilde{Z},U,W}{L_{\widetilde{Z}\setminus\widetilde{Z}_U}(W)-L_{\widetilde{Z}_U}(W)}}\label{eq:start-disint-uncondi}\\
    \leq&\mathbb{E}_{\widetilde{Z}}\abs{\ex{U,W|\widetilde{Z}}{L_{\widetilde{Z}_{\overline{U}}}(W)-L_{\widetilde{Z}_U}(W)}}\label{ineq:for-disint}\\
    \leq&\frac{1}{n}\sum_{i=1}^n\mathbb{E}_{\widetilde{Z}}\abs{\ex{U_i,W|\widetilde{Z}}{(-1)^{U_i}\left(\ell(W,\widetilde{Z}^-_{i})-\ell(W,\widetilde{Z}^+_{i})\right)}}\notag\\
    =&\frac{1}{n}\sum_{i=1}^n\mathbb{E}_{\widetilde{Z}}\abs{\ex{U_i,\Delta L_i|\widetilde{Z}}{(-1)^{U_i}\Delta L_i}},\notag
\end{align}
wherein the two inequalities are by applying the Jensen's inequality to the absolute function.

Hence, putting everything together we have
\[
\abs{\mathrm{Err}}\leq \frac{1}{n}\sum_{i=1}^n\mathbb{E}_{\widetilde{Z}}\sqrt{2 I^{\widetilde{Z}}(\Delta L_i;U_i)}\leq\frac{1}{n}\sum_{i=1}^n\sqrt{2 I(\Delta L_i;U_i|\widetilde{Z})},
\]
where the second inequality is by applying the Jensen's inequality to the square root function. 

This completes the proof.
\end{proof}


\subsection{Proof of Theorem~\ref{thm:bound-LD-ucimi}}
By revisiting the proof of Theorem~\ref{thm:bound-LD-cimi}, particularly  Eq.~(\ref{eq:start-disint-uncondi}-\ref{ineq:for-disint}), we notice that if we do not move the expectation over $\widetilde{Z}$ outside of the absolute function, we will have a chance to get ride of the expectation over $\widetilde{Z}$ if we take the expectation over $\Delta L_i$.
\begin{proof}
    By the definition of the expected generalization error, we have
\begin{align}
    \abs{\mathrm{Err}} = \abs{\ex{S,W}{L_\mu(W)-L_S(W)}}
    =&\abs{\ex{\widetilde{Z},U,W}{L_{\widetilde{Z}\setminus\widetilde{Z}_U}(W)-L_{\widetilde{Z}_U}(W)}}\notag\\
    =&\abs{\frac{1}{n}\sum_{i=1}^n\ex{\widetilde{Z},U_i,W}{(-1)^{U_i}\pr{\ell(W,\widetilde{Z}^-_{i})-\ell(W,\widetilde{Z}^+_{i})}}}\label{ineq:for-ecmi}\\
    \leq&\frac{1}{n}\sum_{i=1}^n\abs{\ex{\Delta L_i,U_i}{(-1)^{U_i}\Delta L_i}}\label{ineq:for-LD-cmi}.
\end{align}

We know that $(-1)^{U'_i}\Delta L_i$ is bounded between $[-1,1]$, so it is a $1$-subgaussian random variable. Then, recall Lemma~\ref{lem:subgaussian-dv} and let $g(X,Y)=(-1)^{U_i}\Delta L_i$, we have
\[
\abs{\ex{\Delta L_i,U_i}{(-1)^{U_i}\Delta L_i}-\ex{\Delta L_i,U'_i}{(-1)^{U'_i}\Delta L_i}}\leq \sqrt{2 I(\Delta L_i;U_i)}.
\]

Since $\ex{\Delta L_i,U'_i}{(-1)^{U'_i}\Delta L_i}=0$, plugging the inequality above into Eq.~(\ref{ineq:for-LD-cmi}), we have
\[
\abs{\mathrm{Err}}\leq\frac{1}{n}\sum_{i=1}^n\abs{\ex{\Delta L_i,U_i}{(-1)^{U_i}\Delta L_i}}\leq \frac{1}{n}\sum_{i=1}^n\sqrt{2 I(\Delta L_i;U_i)}.
\]







This concludes the proof.
\end{proof}

\subsection{Proof of Theorem~\ref{thm:bound-LD-wass}}

\begin{proof}
    Recall Eq.~(\ref{ineq:for-LD-cmi}), we could also obtain
\begin{align}
    \abs{\mathrm{Err}} \leq&\frac{1}{n}\sum_{i=1}^n\abs{\ex{\Delta L_i,U_i}{(-1)^{U_i}\Delta L_i}}\notag=\frac{1}{n}\sum_{i=1}^n\abs{\ex{\Delta L_i,U_i}{(-1)^{U_i}\Delta L_i}-\ex{\Delta L'_i,U_i}{(-1)^{U_i}\Delta L'_i}},
\end{align}
where $\Delta L'_i$ is an independent copy of $\Delta L_i$ (i.e. $\Delta L'_i\sim P_{\Delta L_i}$ and $\Delta L'_i\indp U_i$) and the second equality holds since $\ex{\Delta L'_i,U_i}{(-1)^{U_i}\Delta L'_i}=0$.

Then, by Jensen's inequality, we move the expectation over $U_i$ and the average outside the absolute function,
\begin{align*}
    \abs{\mathrm{Err}} \leq&\frac{1}{n}\sum_{i=1}^n\ex{U_i}{\abs{\ex{\Delta L_i|U_i}{(-1)^{U_i}\Delta L_i}-\ex{\Delta L'_i}{(-1)^{U_i}\Delta L'_i}}\Big\vert U_i=u_i}.
\end{align*}

Notice that for any fixed $u_i$, we have
\begin{align*}
    \ex{\Delta L_i|U_i=u_i}{(-1)^{u_i}\Delta L_i}=&\int_{-1}^1(-1)^{u_i}\Delta \ell_i dP_{\Delta L_i|U_i=u_i}(\Delta \ell_i),\\
    \ex{\Delta L'_i}{(-1)^{u_i}\Delta L'_i}=&\int_{-1}^1(-1)^{u_i}\Delta \ell'_i dP_{\Delta L_i}(\Delta \ell'_i).
\end{align*}

Also, noting that $f(x)=x$ is a 1-Lipschitz function, i.e. $\abs{(-1)^{u_i}\Delta L_i-(-1)^{u_i}\Delta L'_i}\leq \abs{\Delta L_i-\Delta L'_i}$ holds trivially.

Recall the KR duality of Wasserstein distance (i.e. Lemma~\ref{lem:kr-duality}), we have
\begin{align*}
    \abs{\mathrm{Err}} \leq&\frac{1}{n}\sum_{i=1}^n\ex{U_i}{\abs{\ex{\Delta L_i|U_i}{(-1)^{U_i}\Delta L_i}-\ex{\Delta L'_i}{(-1)^{U_i}\Delta L'_i}}\Big\vert U_i=u_i}
    \leq \frac{1}{n}\sum_{i=1}^n\ex{U_i}{\mathbb{W}(P_{\Delta L_i|U_i},P_{\Delta L_i})}.
\end{align*}
This concludes the proof.
\end{proof}

\subsection{Proof of Theorem~\ref{thm:channel-interpolating}}
\begin{proof}
    The channel capacity can be obtained from Lemma~\ref{lem:channel-capacity} by letting $\epsilon_i=0$ and changing the unit of bit to the unit of nat (i.e. replacing logarithm base of $2$ to the base of $e$ by $\ln{x}=\ln{2}\log_2{x}$).


    Furthermore, the value of $1-\alpha_i$ reflects the chance that the interpolating learning algorithm $\mathcal{A}$ will make an error (i.e. $\ell(W,Z'_i)=1$) for a testing instance, or equivalently,
    \begin{align*}
        \ex{W,Z'_i}{\mathbbm{1}_{f_W(X_i')\neq Y'}}=\ex{U_i,L_i}{L_{i,\overline{U}_i}}
        =&\frac{\ex{L^-_i|U_i=0}{L^-_i}+\ex{L^+_i|U_i=1}{L^+_i}}{2}\\
        =&\frac{P(\Delta L_i=1|U_i=0)+P(\Delta L_i=-1|U_i=1)}{2}
        =1-\alpha_i.
    \end{align*}
 Thus, combining the equality above with  $C=I(U_i;\Delta L_i)=(1-\alpha_i)\cdot\ln{2}$, we have
    \[
    \abs{\mathrm{Err}}=\ex{W}{L_\mu(W)}=\frac{1}{n}\sum_{i=1}^n\ex{W,Z_i'}{\mathbbm{1}_{f_W(X_i')\neq Y_i'}}=\frac{1}{n}\sum_{i=1}^n (1-\alpha_i) = \frac{1}{n\ln{2}}\sum_{i=1}^n I(U_i;\Delta L_i).
    \]
    This completes the proof.
\end{proof}

\subsection{Proof of Theorem~\ref{thm:bound-chaining-LD}}
\label{sec:proof-chaining-LD}
\begin{proof}
    Let $E^i_{\Delta L_i}=\mathrm{Err}^i(\Delta L_i)=(-1)^{U_i}\Delta L_i$, then for any integers $k_1$ and $k_0$ such that $k_1>k_0$, we have
    \[
    E^i_{\Delta L_i}=E^i_{\Delta L_{i,k_0}}+\sum_{k=k_0+1}^{k_1}(E^i_{\Delta L_{i,k}}-E^i_{\Delta L_{i,k-1}}) + E^i_{\Delta L_i}-E^i_{\Delta L_{i,k_1}}.
    \]

    By the definition of the stochastic chain (i.e. Definition~\ref{defn:stochastic-chain}), we know that $\ex{E^i_{\Delta L_{i,k_0}}}{E^i_{\Delta L_{i,k_0}}}=0$ and $\lim_{k_1\to \infty}E^i_{\Delta L_{i,k_1}}=E^i_{\Delta L_i}$. Therefore, let $k_1\to\infty$ and taking expectation over $(U_i,\Delta L_i)\sim P_{U_i,\Delta L_i}$ for both sides of the equation above, we have
    \begin{align}
        \ex{U_i,\Delta L_i}{E^i_{\Delta L_i}}=\sum_{k=k_0+1}^{\infty}\ex{U_i,\Delta L_{i,k},\Delta L_{i,k-1}}{E^i_{\Delta L_{i,k}}-E^i_{\Delta L_{i,k-1}}}.
        \label{eq:chain-decomposition}
    \end{align}

    Let $U_i'$ be an independent copy of $U_i$ and recall Lemma~\ref{lem:DV representation}, we have
    \begin{align*}
        &\ex{\Delta L_{i,k},\Delta L_{i,k-1}}{\kl{P_{U_i|\Delta L_{i,k},\Delta L_{i,k-1}}}{P_{U_i'}}}\\
        \geq& \ex{\Delta L_{i,k},\Delta L_{i,k-1}}{\sup_{t>0} t\ex{U_i|\Delta L_{i,k},\Delta L_{i,k-1}}{E^i_{\Delta L_{i,k}}-E^i_{\Delta L_{i,k-1}}}-\ln \ex{U'_i}{e^{t\pr{E^i_{\Delta L_{i,k}}-E^i_{\Delta L_{i,k-1}}}}}}\\
        \geq& \sup_{t>0} t\ex{U_i,\Delta L_{i,k},\Delta L_{i,k-1}}{E^i_{\Delta L_{i,k}}-E^i_{\Delta L_{i,k-1}}}-\mathbb{E}_{\Delta L_{i,k},\Delta L_{i,k-1}}\ln \ex{U'_i}{e^{t(-1)^{U'_i}\pr{\Delta L_{i,k}-\Delta L_{i,k-1}}}},
    \end{align*}
    where the second inequality is by applying Jensen's inequality to the supremum.

    Notice that the LHS above is equivalent to $I(\Delta L_{i,k},\Delta L_{i,k-1};U_i)$. Since $(-1)^{U'_i}$ is bounded between $[-1,1]$, by Lemma~\ref{lem:hoeffding}, we have
     \begin{align*}
        I(\Delta L_{i,k},\Delta L_{i,k-1};U_i) 
        \geq& \sup_{t>0} t\ex{U_i,\Delta L_{i,k},\Delta L_{i,k-1}}{E^i_{\Delta L_{i,k}}-E^i_{\Delta L_{i,k-1}}}-\mathbb{E}_{\Delta L_{i,k},\Delta L_{i,k-1}}\ln e^{\frac{t^2(\Delta L_{i,k}-\Delta L_{i,k-1})^2}{2}}.
    \end{align*}

    Thus, let $d(\Delta L_{i,k},\Delta L_{i,k-1})=\abs{\Delta L_{i,k}-\Delta L_{i,k-1}}$, we have
    \[
    \ex{U_i,\Delta L_{i,k},\Delta L_{i,k-1}}{E^i_{\Delta L_{i,k}}-E^i_{\Delta L_{i,k-1}}}\leq\sqrt{2\ex{\Delta L_{i,k},\Delta L_{i,k-1}}{d^2(\Delta L_{i,k},\Delta L_{i,k-1})}I(\Delta L_{i,k},\Delta L_{i,k-1};U_i)}.
    \]

    Plugging the inequality above into Eq.~(\ref{eq:chain-decomposition}) and taking average over $i$, we have,
    \begin{align*}
        \mathrm{Err}=&\frac{1}{n}\sum_{i=1}^n \ex{U_i,\Delta L_i}{E^i_{\Delta L_i}}\\
        \leq&\frac{1}{n}\sum_{i=1}^n\sum_{k=k_0+1}^{\infty}\sqrt{2\ex{\Delta L_{i,k},\Delta L_{i,k-1}}{d^2(\Delta L_{i,k},\Delta L_{i,k-1})}I(\Delta L_{i,k},\Delta L_{i,k-1};U_i)}.
    \end{align*}

    From the third point of Definition~\ref{defn:stochastic-chain}, we know that $U_i-\Delta L_{i}-\Delta L_{i,k}-\Delta L_{i,k-1}$ is a Markov chain, so $I(\Delta L_{i,k},\Delta L_{i,k-1};U_i)=I(\Delta L_{i,k};U_i)+I(\Delta L_{i,k-1};U_i|\Delta L_{i,k})=I(\Delta L_{i,k};U_i)$. This gives us the final form of the bound,
    \[
    \mathrm{Err}\leq\frac{1}{n}\sum_{i=1}^n\sum_{k=k_0+1}^{\infty}\sqrt{2\ex{\Delta L_{i,k},\Delta L_{i,k-1}}{d^2(\Delta L_{i,k},\Delta L_{i,k-1})}I(\Delta L_{i,k};U_i)}.
    \]
    This concludes the proof.
\end{proof}

\subsection{Additional Result: Chained MI Bound for Bounded Loss}
\begin{cor}
    \label{cor:bound-hierarchical-chain}
    Let $2^{-k_0}\geq\mathrm{diam}(\Gamma)$ and let $\{\mathcal{P}_k\}_{k=k_0}^\infty$ be an increasing sequence of partitions of 
  $\Gamma$, where for each $k\geq k_0$, $\mathcal{P}_k$ is a $2^{-k}$-partition of $(\Gamma,d)$. Let $\Delta L_{i,k}$ be the center of the covering ball of the partition cell that $\Delta L_{i}$ belongs to the partition $\mathcal{P}_k$, then
    \[
    \mathrm{Err}\leq \frac{3}{n}\sum_{i=1}^n\sum_{k=k_0}^\infty2^{-k}\sqrt{2I(\Delta L_{i,k};U_i)}.
    \]
\end{cor}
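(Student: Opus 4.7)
The plan is to apply Theorem~\ref{thm:bound-chaining-LD} with the deterministic quantization sequence $\{\Delta L_{i,k}\}_{k=k_0}^\infty$ induced by the hierarchical $2^{-k}$-partitions of $\Gamma$. Three things must be checked: (i) the sequence qualifies as a stochastic chain in the sense of Definition~\ref{defn:stochastic-chain}; (ii) the increment $\mathbb{E}[|\Delta L_{i,k}-\Delta L_{i,k-1}|^2]$ is controlled by the partition radii; and (iii) the resulting arithmetic collapses to the claimed closed form with the constant $3$ outside.

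First I would verify the stochastic-chain conditions. Because $2^{-k_0}\geq \mathrm{diam}(\Gamma)$, the partition $\mathcal{P}_{k_0}$ consists of a single cell containing all of $\Gamma$, so $\Delta L_{i,k_0}$ is deterministically a constant (the center of that cell); combined with the independence $U_i\indp \widetilde{Z}$ and $\mathbb{E}[(-1)^{U_i}]=0$, this yields $\mathbb{E}[E^i_{\Delta L_{i,k_0}}]=0$. The limit condition $\mathbb{E}[E^i_{\Delta L_{i,k}}]\to \mathbb{E}[E^i_{\Delta L_i}]$ follows from $|\Delta L_{i,k}-\Delta L_i|\leq 2^{-k}\to 0$ together with boundedness of $\Gamma$. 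The Markov condition is immediate from the increasing-partition property of Definition~\ref{defn:epsilon-partition}: each $\Delta L_{i,k}$ is a deterministic function of $\Delta L_i$, and $\Delta L_{i,k-1}$ is a deterministic function of $\Delta L_{i,k}$, so $\{\mathrm{Err}^i(\Delta\ell_i)\}_{\Delta\ell_i\in\Gamma}-\Delta L_i-\Delta L_{i,k}-\Delta L_{i,k-1}$ is trivially Markov.

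Next I would bound the step size deterministically. By the triangle inequality and the $2^{-k}$-covering property of $\mathcal{P}_k$,
\[
|\Delta L_{i,k}-\Delta L_{i,k-1}|\leq |\Delta L_{i,k}-\Delta L_i|+|\Delta L_i-\Delta L_{i,k-1}|\leq 2^{-k}+2^{-(k-1)}=3\cdot 2^{-k},
\]
so $\mathbb{E}[|\Delta L_{i,k}-\Delta L_{i,k-1}|^2]\leq 9\cdot 4^{-k}$. Substituting this into Theorem~\ref{thm:bound-chaining-LD} and pulling $3\cdot 2^{-k}$ out of the square root immediately yields
\[
\mathrm{Err}\leq\frac{1}{n}\sum_{i=1}^n\sum_{k=k_0}^{\infty}\sqrt{2\cdot 9\cdot 4^{-k}\,I(\Delta L_{i,k};U_i)}=\frac{3}{n}\sum_{i=1}^n\sum_{k=k_0}^{\infty}2^{-k}\sqrt{2I(\Delta L_{i,k};U_i)}.
\]
The only delicate point, and therefore the main obstacle, is the verification in step (i) that the hierarchical quantization truly satisfies Definition~\ref{defn:stochastic-chain}, in particular the initialization $\mathbb{E}[E^i_{\Delta L_{i,k_0}}]=0$, which relies crucially on the choice $2^{-k_0}\geq\mathrm{diam}(\Gamma)$ forcing $\Delta L_{i,k_0}$ to be a constant; everything else reduces to the deterministic step-size estimate above and an invocation of the already-established chaining theorem.
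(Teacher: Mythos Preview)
Your proposal is correct and follows essentially the same route as the paper: bound the step size via the triangle inequality and the $2^{-k}$-covering radii to get $|\Delta L_{i,k}-\Delta L_{i,k-1}|\le 3\cdot 2^{-k}$, then plug into Theorem~\ref{thm:bound-chaining-LD}. The paper's own proof is terser in that it omits the explicit verification of the stochastic-chain axioms you carry out in step~(i); your added check (in particular that $2^{-k_0}\ge\mathrm{diam}(\Gamma)$ forces $\Delta L_{i,k_0}$ to be constant) is correct and fills a gap the paper leaves implicit.
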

\begin{proof}
    By the triangle inequality, we have
    $
        d(\Delta L_{i,k},\Delta L_{i,k-1})
        \leq d(\Delta L_{i,k},\Delta L_{i})+d(\Delta L_{i},\Delta L_{i,k-1}).
    $
    
    Since $\mathcal{P}_k$ is a $2^{-k}$ partition, $d(\Delta L_{i,k},\Delta L_i)\leq 2^{-k}$, then
    $
    d(\Delta L_{i,k},\Delta L_{i})+d(\Delta L_{i},\Delta L_{i,k-1})\leq 2^{-k}+2^{-(k-1)}=3\times 2^{-k}.
    $
    Plugging this into Theorem~\ref{thm:bound-chaining-LD}, we have
    \[
    \mathrm{Err}\leq\frac{1}{n}\sum_{i=1}^n\sum_{k=k_0+1}^{\infty}\sqrt{2\ex{\Delta L_{i,k},\Delta L_{i,k-1}}{d^2(\Delta L_{i,k},\Delta L_{i,k-1})}I(\Delta L_{i,k};U_i)}\leq\frac{1}{n}\sum_{i=1}^n\sum_{k=k_0+1}^{\infty}3\times 2^{-k}\sqrt{2I(\Delta L_{i,k};U_i)}.
    \]
    This completes the proof.
\end{proof}

\section{Omitted Proofs and Additional Results in Section~\ref{sec:single-loss}}
\subsection{Proof of Lemma~\ref{lem:standard-symmetric}}
\begin{proof}
    By the definition of $\mathrm{Err}$, we can decompose it into two terms,
    \begin{align*}
        \mathrm{Err}=\frac{1}{n}\sum_{i=1}^n\ex{L^+_{i},L^-_{i},U_i}{L_{i,\overline{U}_i}-L_{i,{U}_i}}
        =&\frac{1}{n}\sum_{i=1}^n\ex{L^+_{i},L^-_{i},U_i}{(-1)^{U_i}(L^-_{i}-L^+_{i})}\\
        =&\frac{1}{n}\sum_{i=1}^n\br{\ex{L^-_{i},U_i}{(-1)^{U_i} L^-_{i}}+\ex{L^+_{i},U_i}{-(-1)^{U_i} L^+_{i}}}\\
        =&\frac{1}{n}\sum_{i=1}^n\br{\ex{L^-_{i},U_i}{(-1)^{U_i} L^-_{i}}+\ex{L^+_{i},\overline{U}_i}{(-1)^{\overline{U}_i} L^+_{i}}},
    \end{align*}
    where the last equality is by $-(-1)^{U_i} L_i^+=(-1)^{\overline{U}_i} L_i^+$.
    We now show that the following holds
    \[
    \ex{L^-_{i},U_i}{(-1)^{U_i} L^-_{i}}=\ex{L^+_{i},\overline{U}_i}{(-1)^{\overline{U}_i} L^+_{i}}.
    \]
    
    Recall that $\widetilde{Z}$ and $U$ are i.i.d drawn from $\mu^{2n}$ and the Bernoulli distribution, respectively, and $\widetilde{Z} \indp U$. Usually, a learning algorithm may depend on the order of training instances (i.e. for $i \neq j$, even if two training instances satisfy $z_i=z_j$, $P_{W|z_i}$ may not be the same with $P_{W|z_j}$), but it should be invariant to the \emph{row index} of the supersample $\widetilde{Z}$, then the distribution $P_{L^-_{i},U_i}$ and the distribution $P_{L^+_{i},U_i}$ have some symmetric property, namely, $P_{L^-_{i}|U_i=1}=P_{L^+_{i}|U_i=0}$ and $P_{L^-_{i}|U_i=0}=P_{L^+_{i}|U_i=1}$. Or equivalently, the distribution of the training loss (or testing loss) of the $i$th training instance (or testing instance) is invariant to $U_i$.  Hence, we have $P_{L^-_{i}}=P_{L^+_{i}}$, we say $L^-_{i}$ and $L^+_{i}$ are identically but not independently distributed. 
    Then,
    \begin{align*}
        &\ex{L^-_{i}|U_i=0}{ L^-_{i}}=\int_{0}^1 \ell^-_{i} dP_{L^-_{i}|U_i=0}(\ell^-_{i})=\int_{0}^1 \ell^+_{i} dP_{L^+_{i}|U_i=1}(\ell^+_{i})=\ex{L^+_{i}|U_i=1}{ L^+_{i}},\\
        &\ex{L^-_{i}|U_i=1}{ -L^-_{i}}=\int_{0}^1 -\ell^-_{i} dP_{L^-_{i}|U_i=1}(\ell^-_{i})=\int_{0}^1 -\ell^+_{i} dP_{L^+_{i}|U_i=0}(\ell^+_{i})=\ex{L^+_{i}|U_i=0}{- L^+_{i}}.
    \end{align*}
    These give us
    \begin{align*}
        \ex{L^-_{i},U_i}{(-1)^{U_i} L^-_{i}}=\frac{\ex{L^-_{i}|U_i=0}{ L^-_{i}}+\ex{L^-_{i}|U_i=1}{ -L^-_{i}}}{2}=&\frac{\ex{L^+_{i}|U_i=1}{ L^+_{i}}+\ex{L^+_{i}|U_i=0}{ -L^+_{i}}}{2}\\
        =&\ex{L^+_{i},\overline{U}_i}{(-1)^{\overline{U}_i} L^+_{i}}.
    \end{align*}
 Therefore, 
\begin{align*}
    \mathrm{Err}=\frac{1}{n}\sum_{i=1}^n\br{\ex{L^-_{i},U_i}{(-1)^{U_i} L^-_{i}}+\ex{L^+_{i},\overline{U}_i}{(-1)^{\overline{U}_i} L^+_{i}}}=&\frac{2}{n}\sum_{i=1}^n\ex{L^-_{i},U_i}{(-1)^{U_i}L^-_{i}}
    =&\frac{2}{n}\sum_{i=1}^n\ex{L^+_{i},\overline{U}_i}{(-1)^{\overline{U}_i}L^+_{i}}.
\end{align*}
Notice that both $(-1)^{U_i}$ and $(-1)^{\overline{U}_i}$ are Rademacher variables. This conclude the proof.
\end{proof}

\subsection{Proof of Theorem~\ref{bound-single-loss-sq}}
\begin{proof}
    Notice that $2(-1)^{U'_i}L_i^+$ is bounded between $[-2,2]$, it is a subgaussian random variable with the variance proxy $\sigma=2$. Let the measurable function $g(L_i^+,U_i)$ in Lemma~\ref{lem:subgaussian-dv} be $2(-1)^{U_i}L^+_{i}$, then $g(L_i^+,U_i)$ is $2$-subgaussian under $P_{U_i}P_{L_i^+}$, we have
    \begin{align*}
        \abs{2\ex{L^+_{i},U_i}{(-1)^{U_i}L^+_{i}}-2\ex{L^+_{i},U'_i}{(-1)^{U'_i}L^+_{i}}}\leq 2\sqrt{2I(L_i^+;U_i)}.
    \end{align*}

    Since $\ex{L^+_{i},U'_i}{(-1)^{U'_i}L^+_{i}}=\frac{\ex{L^+_i}{L^+_{i}}-\ex{L^+_i}{L^+_{i}}}{2}=0$, then
    \begin{align*}
        \abs{2\ex{L^+_{i},U_i}{(-1)^{U_i}L^+_{i}}}\leq 2\sqrt{2I(L_i^+;U_i)}.
    \end{align*}

Recall Lemma~\ref{lem:standard-symmetric},
    \[
    \mathrm{Err}=\frac{2}{n}\sum_{i=1}^n\ex{L^+_{i},\overline{U}_i}{(-1)^{\overline{U}_i}L^+_{i}}.
    \]

    Notice that $\overline{U}_i$ and $U_i$ are one-to-one mapping, so using any of them will give the same mutual information. Thus, by applying Jensen's inequality to the absolute function, we have
    \[
    \abs{\mathrm{Err}}\leq\frac{1}{n}\sum_{i=1}^n\abs{2\ex{L^+_{i},U_i}{(-1)^{U_i}L^+_{i}}}\leq \frac{2}{n}\sum_{i=1}^n\sqrt{2I(L_i^+;U_i)}.
    \]

    In addition, 
    to obtain the second inequality in the theorem, we first invoke the independence between $\widetilde{Z}$ and $U_i$, $I(L_i^+;U_i)+I(\widetilde{Z};U_i|L_i^+)=I(L_i^+,\widetilde{Z};U_i)=I(L_i^+;U_i|\widetilde{Z})$ and then use the DPI, $I(L_i^+;U_i|\widetilde{Z})\leq I(f_W(X);U_i|\widetilde{Z})$, we have
    \[
    \left|\mathrm{Err}\right| \leq\frac{2}{n}\sum_{i=1}^n\sqrt{2I(L^+_{i};U_i)}\leq \frac{2}{n}\sum_{i=1}^n\sqrt{2I(f_W(X^+_{i});U_i|\widetilde{Z})}.
    \]
    This completes the proof.
\end{proof}

\subsection{Proof of Theorem~\ref{thm:channel-capacity-z}}
\begin{proof}
Notice that 
\begin{align*}
        \ex{W,Z'_i}{\mathbbm{1}_{f_W(X_i')\neq Y'}}=\ex{U_i,L_i}{L_{i,\overline{U}_i}}
        =\frac{\ex{L^-_i|U_i=0}{L^-_i}+\ex{L^+_i|U_i=1}{L^+_i}}{2}=\ex{L^+_i|U_i=1}{L^+_i}=P(L^+_i=1|U_i=1)
        =1-q_i.
    \end{align*}
    Hence, $L_\mu=\sum_{i=1}^n\frac{1-q_i}{n}$.

    For each $i$, we have
    $
    I(L^+_{i};U_i)=H(L^+_{i})-H(L^+_{i}|U_i)=H(\frac{1-q_i}{2})-\frac{1}{2}H(1-q_i)\leq H(\frac{1-q_i}{2}).
    $ Since the entropy function $H(\cdot)$ is a concave function, we have
    $
    \frac{1}{n}\sum_{i=1}^nI(L^+_{i};U_i)\leq \frac{1}{n}\sum_{i=1}^nH(\frac{1-q_i}{2})\leq H(\frac{L_\mu}{2})
    $.
\end{proof}

\subsection{Additional Result: Chained MI Bound Based on Single-Loss}
\label{sec:chianed-single-loss}
When the loss is not discrete or even not bounded, let $\xi^i(\ell^+_i)\triangleq\varepsilon_i \ell^+_i$ be a random process and let $\mathcal{L}$ be the domain of $L_i^+$. Similar to Theorem~\ref{thm:bound-chaining-LD}, we can also have the corresponding chained bound of Theorem~\ref{bound-single-loss-sq}.
\begin{thm}
    \label{thm:chaining-single-loss-bound}
    For each $i\in [n]$, we assume $\{L^+_{i,k}\}_{k=k_0}^\infty$ is a stochastic chain of $(\xi^i(\ell^+_i)\}_{\ell^+_i\in\mathcal{L}},L^+_i)$, then
    \[
    \mathrm{Err}\leq \frac{2}{n}\sum_{i=1}^n\sum_{k=k_0}^\infty\sqrt{2\ex{}{d^2( L^+_{i,k}, L^+_{i,k-1})}I( L^+_{i,k};U_i)},
    \]
    where the RHS expectation is taken over $(L^+_{i,k}, L^+_{i,k-1})$.
\end{thm}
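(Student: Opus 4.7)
The plan is to mirror the proof of Theorem~\ref{thm:bound-chaining-LD} step-by-step, but start from the symmetric representation of $\mathrm{Err}$ given in Lemma~\ref{lem:standard-symmetric} rather than from the direct decomposition of $L_\mu - L_n$. Specifically, Lemma~\ref{lem:standard-symmetric} already gives $\mathrm{Err}=\frac{2}{n}\sum_{i=1}^n \mathbb{E}[\varepsilon_i L_i^+]=\frac{2}{n}\sum_{i=1}^n \mathbb{E}[\xi^i(L_i^+)]$, and it is this factor of $2$ that will appear out front in the final bound, in contrast to Theorem~\ref{thm:bound-chaining-LD} where the coefficient is $1$. The two key technical ingredients will again be the Donsker--Varadhan variational formula (Lemma~\ref{lem:DV representation}) and Hoeffding's lemma (Lemma~\ref{lem:hoeffding}), applied to the bounded Rademacher variable $\varepsilon_i$.

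First I would expand $\xi^i(L_i^+)$ as a telescoping sum between the coarsest quantization level $k_0$ and some $k_1>k_0$, obtaining
\[
\xi^i(L_i^+)=\xi^i(L_{i,k_0}^+)+\sum_{k=k_0+1}^{k_1}\bigl(\xi^i(L_{i,k}^+)-\xi^i(L_{i,k-1}^+)\bigr)+\bigl(\xi^i(L_i^+)-\xi^i(L_{i,k_1}^+)\bigr).
\]
Taking expectations and invoking the two boundary properties of a stochastic chain (Definition~\ref{defn:stochastic-chain})---namely $\mathbb{E}[\xi^i(L_{i,k_0}^+)]=0$ and $\lim_{k_1\to\infty}\mathbb{E}[\xi^i(L_{i,k_1}^+)]=\mathbb{E}[\xi^i(L_i^+)]$---I obtain the increment decomposition $\mathbb{E}[\xi^i(L_i^+)]=\sum_{k=k_0+1}^{\infty}\mathbb{E}\bigl[\varepsilon_i(L_{i,k}^+-L_{i,k-1}^+)\bigr]$.

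Next, for each increment I would introduce an independent copy $U_i'$ of $U_i$ (equivalently $\varepsilon_i'$ of $\varepsilon_i$) and apply Donsker--Varadhan to $I(L_{i,k}^+,L_{i,k-1}^+;U_i)$ with the test function $t\,\varepsilon_i(L_{i,k}^+-L_{i,k-1}^+)$. Since $\varepsilon_i'\in\{-1,+1\}$ and has mean zero, Hoeffding's lemma conditionally on $(L_{i,k}^+,L_{i,k-1}^+)$ yields $\mathbb{E}_{U_i'}[e^{t\varepsilon_i'(L_{i,k}^+-L_{i,k-1}^+)}]\leq e^{t^2(L_{i,k}^+-L_{i,k-1}^+)^2/2}$; optimizing $t$ via AM--GM then gives the bound $\mathbb{E}[\varepsilon_i(L_{i,k}^+-L_{i,k-1}^+)]\leq\sqrt{2\,\mathbb{E}[d^2(L_{i,k}^+,L_{i,k-1}^+)]\,I(L_{i,k}^+,L_{i,k-1}^+;U_i)}$. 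The Markov chain property $U_i-L_i^+-L_{i,k}^+-L_{i,k-1}^+$ supplied by Definition~\ref{defn:stochastic-chain} collapses the joint MI to $I(L_{i,k}^+;U_i)$ via the chain rule. Combining with Lemma~\ref{lem:standard-symmetric}, summing over $i$ and $k$, and pulling the factor $2$ out front produces the claimed bound.

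The main obstacle will be verifying that the DV/Hoeffding step is valid in full generality when $L_i^+$ is unbounded: the Hoeffding argument only requires $\varepsilon_i'$ to be bounded (which it is), so the subgaussian variance proxy conditional on $(L_{i,k}^+,L_{i,k-1}^+)$ is simply $(L_{i,k}^+-L_{i,k-1}^+)^2$, and the outer expectation over the chain supplies $\mathbb{E}[d^2(L_{i,k}^+,L_{i,k-1}^+)]$. A smaller subtlety is the interchange of sum and limit when passing $k_1\to\infty$ in the telescoping expansion, which follows as in \citet{zhou2022stochastic} under the separability assumption on the process $\{\xi^i(\ell_i^+)\}$; this is precisely why the statement hypothesizes a valid stochastic chain rather than requiring the existence of one.
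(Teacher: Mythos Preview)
Your proposal is correct and follows essentially the same approach as the paper, which explicitly states that this theorem ``can be obtained by following the same development with the proof in Section~\ref{sec:proof-chaining-LD}.'' You correctly identify that the only structural change from Theorem~\ref{thm:bound-chaining-LD} is to start from Lemma~\ref{lem:standard-symmetric} (which supplies the factor $2$) and then run the identical telescoping/DV/Hoeffding/Markov-chain argument with $\xi^i(\ell_i^+)=\varepsilon_i\ell_i^+$ in place of $\mathrm{Err}^i(\Delta\ell_i)=(-1)^{U_i}\Delta\ell_i$.
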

This theorem can be obtained by following the same development with the proof in Section~\ref{sec:proof-chaining-LD}.

\subsection{Proof of Lemma~\ref{lem:weighted-symmetric}}

\begin{proof}
A key step is the second equality of the following
\begin{align*}
    {\mathrm{Err}}_{C_1}=&\frac{1}{n}\sum_{i=1}^n\ex{L^-_{i},L^+_{i},U_i}{L_{i,\overline{U}_i}-(1+C_1)L_{i,{U}_i}}\\
    =&\frac{1}{n}\sum_{i=1}^n\ex{L^-_{i},L^+_{i},U_i}{\pr{1+\frac{C_1}{2}}(L_{i,\overline{U}_i}-L_{i,{U}_i})-\frac{C_1}{2}L_{i,\overline{U}_i}-\frac{C_1}{2}L_{i,{U}_i}}\\
    =&\frac{2+C_1}{2n}\sum_{i=1}^n\br{\ex{L^-_{i},U_i}{(-1)^{U_i} L^-_{i}-\frac{C_1}{C_1+2}L^-_{i}}+\ex{L^+_{i},U_i}{-(-1)^{U_i} L^+_{i}-\frac{C_1}{C_1+2}L^-_{i}}}.
\end{align*}

Recall that $P_{L^-_{i}}=P_{L^+_{i}}$, we have $\ex{L^-_{i}}{\frac{C_1}{C_1+2}L^-_{i}}=\ex{L^+_{i}}{\frac{C_1}{C_1+2}L^+_{i}}$.
Also, noting that $\ex{L^-_{i}|U_i=0}{ L^-_{i}}=\ex{L^+_{i}|U_i=1}{ L^+_{i}}$ and $\ex{L^-_{i}|U_i=1}{ -L^-_{i}}=\ex{L^+_{i}|U_i=0}{ -L^+_{i}}$, we have
\begin{align}
    {\mathrm{Err}}_{C_1}=\frac{2+C_1}{n}\sum_{i=1}^n\ex{L^+_{i},\overline{U}_i}{(-1)^{\overline{U}_i} L^+_{i}-\frac{C_1}{C_1+2}L^+_{i}}=\frac{2+C_1}{n}\sum_{i=1}^n\ex{L^+_{i},\tilde{\varepsilon}_i}{ \tilde{\varepsilon}_iL^+_{i}},\label{eq:weighted-error}
\end{align}
where $\tilde{\varepsilon}_i=\varepsilon_i-\frac{C_1}{C_1+2}$ and $\varepsilon_i\sim \mathrm{Unif}\{-1,+1\}$ is the Rademacher variable. In this case, $\tilde{\varepsilon}_i$ is called the \emph{shifted} Rademacher variable and its mean is $-\frac{C_1}{C_1+2}$.
\end{proof}

\subsection{Proof of Theorem~\ref{thm:bound-fast-rate-rademacher}}
\begin{proof}
     Since $\tilde{\varepsilon}_i$ is obtained by a bijection function of $U_i$, they two can be replaced by each other in the mutual information. Recall Lemma~\ref{lem:DV representation} and let the measurable function $g$ be $t(C_1+2)\tilde{\varepsilon}_i L_i^+$,  we have
    \begin{align}
    I(L^+_{i};U_i)=I(L^+_{i};\tilde{\varepsilon}_i)=&\mathrm{D_{KL}}\pr{P_{L^+_{i},\tilde{\varepsilon}_i}||P_{L^+_{i}}P_{\tilde{\varepsilon}'_i}}\\
        \geq&\sup_{t>0} \ex{L^+_{i},\tilde{\varepsilon}_i}{t(C_1+2)\tilde{\varepsilon}_i L^+_{i}} - \ln\ex{ L^+_{i},\tilde{\varepsilon}'_i}{e^{t(C_1+2)\tilde{\varepsilon}'_i L^+_{i}}}.
        \label{ineq:DV-fast}
    \end{align}

We hope to have
   \begin{align}
       \ex{L^+_{i},\tilde{\varepsilon}'_i}{e^{t(C_1+2)\tilde{\varepsilon}'_i L^+_{i}}}\leq 1.
       \label{ineq:bound-log-moment-1}
   \end{align}

Since $\tilde{\varepsilon}'_i$ is independent of $L^+_{i}$, and $P(\tilde{\varepsilon}_i=\frac{2}{C_1+2})=P(\tilde{\varepsilon}_i=\frac{-2(C_1+1)}{C_1+2})=\frac{1}{2}$, then
    \[
    \ex{L^+_{i},\tilde{\varepsilon}'_i}{e^{t(C_1+2)\tilde{\varepsilon}'_i L^+_{i}}}=\frac{\ex{ L^+_{i}}{e^{-2t(C_1+1) L^+_{i}}+e^{2tL^+_{i}}}}{2}.
    \]

    Notice that $e^{-2t(C_1+1) L^+_{i}}+e^{2tL^+_{i}}$ is the summation of two convex function, which is still a convex function, so the maximum value of this function is achieved at the endpoints of the bounded domain. Recall that $L^+_{i}\in [0,1]$, we now consider two cases, i) when $L^+_{i}=0$, we have $e^{-2t(C_1+1) L^+_{i}}+e^{2tL^+_{i}}=2$;
    ii)when $L^+_{i}=1$, we need to require
    $
    e^{-2t(C_1+1)}+e^{2t}\leq 2
    $
    s.t. Eq~(\ref{ineq:bound-log-moment-1}) can hold. Note that this inequality implies that $t\leq \frac{\ln{2}}{2}$.

     Replacing $t$ by $C_2$, let the values of $C_1,C_2$ be taken from the domain of $\{C_1,C_2|C_1,C_2>0, e^{-2C_2(C_1+1)}+e^{2C_2}\leq 2\}$, so Eq.~(\ref{ineq:bound-log-moment-1}) will hold. Under this condition, by re-arranging the terms in Eq.~(\ref{ineq:DV-fast}), we have
    \[
    (C_1+2)\ex{L^+_{i},\tilde{\varepsilon}_i}{\tilde{\varepsilon}_i L^+_{i}}\leq \frac{I(L^+_{i};U_i)}{C_2}.
    \]
    
    Plugging the  inequality above into Eq.~(\ref{eq:weighted-error}), we have
    \[
    {\mathrm{Err}}_{C_1}=L_\mu-(1+C_1)L_n=\frac{2+C_1}{n}\sum_{i=1}^n\br{\ex{L^+_{i},\tilde{\varepsilon}_i}{ \tilde{\varepsilon}_iL^+_{i}}}\leq \sum_{i=1}^n \frac{I(L^+_{i};U_i)}{C_2n}.
    \]

    Thus, the following inequality can be obtained,
    \begin{align}
        L_\mu\leq \min_{C_1,C_2>0,e^{2C_2}+e^{-2C_2(C_1+1)}\leq 2}  (1+C_1)L_n+ \sum_{i=1}^n \frac{I(L^+_{i};U_i)}{C_2n}.
        \label{ineq:fast-rate-optimal-1}
    \end{align}
    





    We can also optimize the parameters $C_1,C_2$ by relaxing the condition of $e^{-2C_2(C_1+1)}+e^{2C_2}\leq 2$.
    By invoking $e^{x}\geq x+1$ and $e^{-x}\leq \frac{1}{1+x}$ for $x>-1$, and $e^x\leq \frac{1}{1-x}$ for $x<1$, it's sufficient to have
    \begin{align}
        \frac{1}{1+2C_2(C_1+1)}+\frac{1}{1-2C_2}\leq 2,\qquad \text{and $0< C_2 < \frac{1}{2}$.} \label{ineq:quadratic-function}
    \end{align}

    Solving Eq~(\ref{ineq:quadratic-function}) gives us $0<C_2\leq \frac{C_1}{4(C_1+1)}$. Since $\frac{C_1}{4(C_1+1)}\leq\frac{1}{4}<\frac{1}{2}$, then Eq~(\ref{ineq:quadratic-function}) holds when $C_2\in (0, \frac{C_1}{4(C_1+1)}]$. Notice that $\frac{C_1}{4(C_1+1)}$ is also smaller than $\frac{\ln{2}}{2}$.

    

    Therefore, we obtain 
    \begin{align*}
        L_\mu\leq &\min_{C_1,C_2>0,e^{2C_2}+e^{-2C_2(C_1+1)}\leq 2}  (1+C_1)L_n+ \sum_{i=1}^n \frac{I(L^+_{i};U_i)}{C_2n}\\
        \leq&\min_{C_1>0,0<C_2\leq \frac{C_1}{4(C_1+1)}}  (1+C_1)L_n+ \sum_{i=1}^n \frac{I(L^+_{i};U_i)}{C_2n}\\
        =&L_n+\sum_{i=1}^n \frac{4I(L^+_{i};U_i)}{n}+4\sqrt{\sum_{i=1}^n \frac{L_nI(L^+_{i};U_i)}{n}},
    \end{align*}
    where the last equality is achieved when $C_1=2\sqrt{\sum_{i=1}^n \frac{I(L^+_{i};U_i)}{nL_n}}$ and $C_2 = \frac{C_1}{4(C_1+1)}$.



    For the second part of the theorem,
    if $\mathcal{A}$ is an interpolating algorithm, then $L_n=0$, in which case we can let $C_1$ be arbitrarily large. 

    Recall that we hope 
    \[
    e^{-2C_2(C_1+1)}+e^{2C_2}\leq 2.
    \]

    This can be satisfied by letting $C_2=\frac{\ln{2}}{2}$ and $C_1\rightarrow \infty$.
    
    Thus, the interpolating single-loss MI bound is 
     \[
    L_\mu\leq \sum_{i=1}^n \frac{2I(L^+_{i};U_i)}{n\ln{2}}.
    \]
    This completes the proof.
\end{proof}

\subsection{Proof of Lemma~\ref{lem:empirical-variance}}
\begin{proof}
    By the definition of $\gamma$-variance, and notice that $L_S(W)=\frac{1}{n}\sum_{i=1}^n\ell(W,Z_i)$, we have
    \begin{align*}
        V(\gamma)=&\ex{W,S}{\frac{1}{n}\sum_{i=1}^n\pr{\ell(W,Z_i)-(1+\gamma)L_S(W)}^2}\\
        =&\ex{W,S}{\frac{1}{n}\sum_{i=1}^n\pr{\ell^2(W,Z_i)-2(1+\gamma)\ell(W,Z_i)L_S(W)+(1+\gamma)^2L^2_S(W)}}\\
        =&\ex{W,S}{\frac{1}{n}\sum_{i=1}^n\ell^2(W,Z_i)}-\ex{W,S}{(1-\gamma^2)L^2_S(W)}\\
        =&L_n-(1-\gamma^2)\ex{W,S}{L^2_S(W)}
    \end{align*}
    where the last equality is due to the fact that the loss is the zero-one loss (i.e. $\ell^2(\cdot,\cdot)=\ell(\cdot,\cdot)$).
\end{proof}

\subsection{Proof of Lemma~\ref{lem:vaiance-symmetric}}
\label{sec:symmetric-variance}
\begin{proof}
    By Lemma~\ref{lem:empirical-variance} and $\gamma\in (0,1)$, we notice that,
    \begin{align}
        \mathrm{Err}-C_1V(\gamma)=&L_\mu-L_n-C_1L_n+C_1(1-\gamma^2)\ex{W,S}{L^2_S(W)}\notag\\
        \leq& L_\mu-(1+C_1)L_n+C_1(1-\gamma^2)\ex{W,S}{L_S(W)}\label{ineq:variance-relax}\\
        =&L_\mu-(1+C_1\gamma^2)L_n,\label{ineq:variance-fast-rate}
    \end{align}
    where the inequality is because that $L_S(W)\in[0,1]$ (i.e. $L^2_S(W)\leq L_S(W)$).

    Noting that in Eq~(\ref{ineq:variance-fast-rate}), $\mathrm{Err}-C_1V(\gamma)$ is upper bounded by a weighted generalization error. Thus, we can then directly apply Lemma~\ref{lem:weighted-symmetric} by choosing $C_1\gamma^2$ as the trade-off coefficient (i.e. replacing $C_1$ in Lemma~\ref{lem:weighted-symmetric} by $C_1\gamma^2$), which gives us
    \[
    \mathrm{Err}-C_1V(\gamma)\leq\frac{2+C_1\gamma^2}{n}\sum_{i=1}^n\ex{L^+_{i},\tilde{\varepsilon}_i}{ \tilde{\varepsilon}_iL^+_{i}},
    \]
    where $\tilde{\varepsilon}_i=\varepsilon_i-\frac{C_1\gamma^2}{C_1\gamma^2+2}$.
\end{proof}

\subsection{Proof of Theorem~\ref{thm:bound-variance}}
\label{sec:proof-variance-bound}
\begin{proof}
    The RHS of Eq.~(\ref{ineq:variance-fast-rate}) in the proof of Lemma~\ref{lem:vaiance-symmetric} has already been bounded in Theorem~\ref{thm:bound-fast-rate-rademacher} by regarding $C_1\gamma^2$ as the weighted parameter $C_1$ in Theorem~\ref{thm:bound-fast-rate-rademacher}. Then, there exist $C_1,C_2>0$ s.t.
    \begin{align*}
        \mathrm{Err}-C_1V(\gamma)\leq L_\mu-(1+C_1\gamma^2)L_n
        \leq\sum_{i=1}^n\frac{I(L_i^+;U_i)}{nC_2}.
    \end{align*}

    Furthermore, from the proof of Theorem~\ref{thm:bound-fast-rate-rademacher}, we note that the following is valid 
    \begin{align*}
    \mathrm{Err}\leq\min_{C_1,C_2>0, e^{2C_2}+e^{-2C_2(C_1\gamma^2+1)}\leq 2} C_1V(\gamma)+\sum_{i=1}^n\frac{I(L_i^+;U_i)}{nC_2}.
    \end{align*}

    Notice that the original optimization space of the variance based bound should be larger than $\{C_1,C_2|C_1,C_2>0, e^{2C_2}+e^{-2C_2(C_1\gamma^2+1)}\leq 2\}$ because in Eq.~(\ref{ineq:variance-relax}), we upper bound the most interested quantity $\mathrm{Err}-C_1V(\gamma)$ by $L_\mu-(1+C_1\gamma^2)L_n$, which restricts the original optimization space. 
    
    
    This completes the proof.
\end{proof}

\subsection{Proof of Lemma~\ref{lem:flatness-upper-bound}}

\begin{proof}
    By the definition of $\lambda$-sharpness, we notice that
    \begin{align*}
        F_i(\lambda)=&\ex{W,Z_i}{\ell(W,Z_i)-(1+\lambda)\mathbb{E}_{W|Z_i}{\ell(W,Z_i)}}^2\\
        =&\ex{Z_i}{\ex{W|Z_i}{\ell(W,Z_i)^2}-2(1+\lambda)\mathbb{E}^2_{W|Z_i}{\ell(W,Z_i)}+(1+\lambda)^2\mathbb{E}^2_{W|Z_i}{\ell(W,Z_i)}}\\
        =&\ex{W,Z_i}{\ell(W,Z_i)}-(1-\lambda^2)\ex{Z_i}{\mathbb{E}^2_{W|Z_i}{\ell(W,Z_i)}},
    \end{align*}
    where the last equality is due to the fact that the loss is the zero-one loss.
\end{proof}

\subsection{Proof of Lemma~\ref{lem:flatness-symmetric}}
\label{sec:symmetric-sharpness}
\begin{proof}
    By Lemma~\ref{lem:flatness-upper-bound}, we have
    \begin{align*}
        \mathrm{Err}-\frac{C_1}{n}\sum_{i=1}^n F_i(\lambda)=& L_\mu-(1+C_1)L_n +\frac{(1-\lambda^2)C_1}{n}\sum_{i=1}^n\ex{Z_i}{\mathbb{E}^2_{W|Z_i}{\ell(W,Z_i)}}\\
        =&\frac{1}{n}\sum_{i=1}^n\br{\ex{U_i,L_i}{L_{i,\overline{U}_i}-(1+C_1)L_{i,U_i}}+(1-\lambda^2)C_1\ex{\widetilde{Z}_{i,U_i}}{\mathbb{E}^2_{L_{i,U_i}|\widetilde{Z}_{i,U_i}}{L_{i,U_i}}}}.
    \end{align*}

    Let $\Lambda(\widetilde{Z}_{i,U_i})=\mathbb{E}^2_{L_{i,U_i}|\widetilde{Z}_{i,U_i}}{L_{i,U_i}}$ and $\Lambda(\widetilde{Z}_{i,\overline{U}_i})=\mathbb{E}^2_{L_{i,\overline{U}_i}|\widetilde{Z}_{i,\overline{U}_i}}{L_{i,\overline{U}_i}}$. Let $\Lambda(\widetilde{Z}^+_{i})=\mathbb{E}^2_{L^+_{i}|\widetilde{Z}^+_{i},U_i}L^+_{i}$ and $\Lambda(\widetilde{Z}^-_{i})=\mathbb{E}^2_{L^-_{i}|\widetilde{Z}^-_{i},U_i}L^-_{i}$.  A key observation is the following:
\begin{align*}
        &\ex{U_i,L_i}{L_{i,\overline{U}_i}-(1+C_1)L_{i,U_i}}+(1-\lambda^2)C_1\ex{\widetilde{Z}_{i,U_i}}{\Lambda(\widetilde{Z}_{i,U_i})}\\
        =& (1+\frac{C_1}{2})\ex{U_i,L_i}{L_{i,\overline{U}_i}-L_{i,U_i}}-\frac{C_1}{2}(1-\lambda^2)\ex{\widetilde{Z}_{i},U_i}{\Lambda(\widetilde{Z}_{i,\overline{U}_i})-\Lambda(\widetilde{Z}_{i,U_i})}\\
        &-\frac{C_1}{2}\br{\ex{U_i,L_i}{L_{i,\overline{U}_i}}-(1-\lambda^2)\ex{\widetilde{Z}_{i},U_i}{\Lambda(\widetilde{Z}_{i,\overline{U}_i})}}\\
        &-\frac{C_1}{2}\br{\ex{U_i,L_i}{L_{i,{U}_i}}-(1-\lambda^2)\ex{\widetilde{Z}_{i},U_i}{\Lambda(\widetilde{Z}_{i,U_i})}}\\
        =&\mathbb{E}_{L^-_{i},U_i}\left[(-1)^{U_i}\frac{C_1+2}{2}L^-_{i}-(-1)^{U_i}\frac{C_1(1-\lambda^2)}{2}\ex{\widetilde{Z}^-_{i}|U_i}{\Lambda(\widetilde{Z}^-_{i})}\right.\\
        &\left.-\frac{C_1}{2}L^-_{i}+\frac{C_1(1-\lambda^2)}{2}\ex{\widetilde{Z}^-_{i}|U_i}{\Lambda(\widetilde{Z}^-_{i})}\right]\\
        &+\mathbb{E}_{L^+_{i},U_i}\left[-(-1)^{U_i}\frac{C_1+2}{2}L^+_{i}+(-1)^{U_i}\frac{C_1(1-\lambda^2)}{2}\ex{\widetilde{Z}^+_{i}|U_i}{\Lambda(\widetilde{Z}^+_{i})}\right.\\
        &\left.-\frac{C_1}{2}L^+_{i}+\frac{C_1(1-\lambda^2)}{2}\ex{\widetilde{Z}^+_{i}|U_i}{\Lambda(\widetilde{Z}^+_{i})}\right]\\
        =&(C_1+2)\ex{L^+_{i},U_i}{(\varepsilon_i-\frac{C_1}{C_1+2})L^+_{i}-\frac{C_1(1-\lambda^2)}{C_1+2}(\varepsilon_i-1)\ex{\widetilde{Z}^+_{i}|U_i}{\Lambda(\widetilde{Z}^+_{i})}},
    \end{align*}    
    where $\varepsilon_i$ is the Rademacher variable.

    Thus,
    \begin{align}
        \mathrm{Err}-\frac{C_1}{n}\sum_{i=1}^n F_i(\lambda)=\frac{C_1+2}{n}\sum_{i=1}^n\ex{L^+_{i},U_i}{(\varepsilon_i-\frac{C_1}{C_1+2})L_{i,1}-\frac{C_1(1-\lambda^2)}{C_1+2}(\varepsilon_i-1)\ex{\widetilde{Z}^+_{i}|U_i}{\mathbb{E}^2_{L^+_{i}|\widetilde{Z}^+_{i},U_i}L^+_{i}}}.
        \label{eq:error-symmetric}
    \end{align}
    This completes the proof.
\end{proof}

\subsection{Proof of Theorem~\ref{thm:bound-flatness}}
\label{sec:proof-sharpness-bound}
\begin{proof}
    Recall Eq.~(\ref{eq:error-symmetric}),
    \[
        \mathrm{Err}-\frac{C_1}{n}\sum_{i=1}^n F_i(\lambda)=\frac{C_1+2}{n}\sum_{i=1}^n\ex{L^+_{i},U_i}{(\varepsilon_i-\frac{C_1}{C_1+2})L^+_{i}-\frac{C_1(1-\lambda^2)}{C_1+2}(\varepsilon_i-1)\ex{\widetilde{Z}^+_{i}|U_i}{\mathbb{E}^2_{L^+_{i}|\widetilde{Z}^+_{i},U_i}L^+_{i}}}.
        \]
        
    Notice that we cannot directly apply Lemma~\ref{lem:DV representation} starting from here since there is a quadratic term, namely, $\ex{\widetilde{Z}^+_{i}|U_i}{\mathbb{E}^2_{L^+_{i}|\widetilde{Z}^+_{i},U_i}L^+_{i}}$ in the RHS. 

    Inspired by \citet{yang2019fast}, we now assume that there exists a random variable $R_i$ s.t.
    \begin{align}
        &(C_1+2)\ex{L^+_{i},U_i}{(\varepsilon_i-\frac{C_1}{C_1+2})L^+_{i}-\frac{C_1(1-\lambda^2)}{C_1+2}(\varepsilon_i-1)\ex{\widetilde{Z}^+_{i}|U_i}{\mathbb{E}^2_{L^+_{i}|\widetilde{Z}^+_{i},U_i}L^+_{i}}}\notag\\
        \leq&(C_1+2)\ex{L^+_{i},U_i}{(\varepsilon_i-\frac{C_1}{C_1+2})L^+_{i}-\frac{C_1(1-\lambda^2)}{C_1+2}(\varepsilon_i-1)\ex{\widetilde{Z}^+_{i}|U_i}{R_i\mathbb{E}_{L^+_{i}|\widetilde{Z}^+_{i},U_i}L^+_{i}}}\notag\\
        =&(C_1+2)\ex{L^+_{i},U_i}{(\varepsilon_i-\frac{C_1}{C_1+2})L^+_{i}-\frac{C_1(1-\lambda^2)}{C_1+2}(\varepsilon_i-1)\ex{L^+_{i}|U_i}{R_i L^+_{i}}}\notag\\
        =&(C_1+2)\ex{L^+_{i},U_i}{\pr{(\varepsilon_i-\frac{C_1}{C_1+2})-\frac{C_1(1-\lambda^2)}{C_1+2}(\varepsilon_i-1)R_i}L^+_{i}}.
        \label{eq:error-symmetric-linear}
    \end{align}

    Such $R_i$ could satisfy $R_i\geq \sup_{\widetilde{z}^+_{i}}\mathbb{E}_{L^+_{i}|\widetilde{Z}^+_{i}=\widetilde{z}^+_{i},U_i=u_i}L^+_{i}$, for any fixed $u_i$, and the randomness of $R_i$ is controlled by $U_i$, i.e. $R_i$ is a function of $U_i$. A simple choice is to let $R_i=1$ (so $R_i$ always exists), and another choice could be letting $R_i=\ex{L^{+\prime}_{i}|U_i\sim Q_i}{L^{+\prime}_{i}}$ that satisfies the condition, where $Q_i$ is some distribution of $L^{+}_{i}$, and $L^{+\prime}_{i}$ is independent of $L^{+}_{i}$ and $\widetilde{Z}^+_{i}$ given $U_i$.

    Recall that the shifted Rademacher varaible $\tilde{\varepsilon}_i=\varepsilon_i-\frac{C_1}{C_1+2}$, and let another shifted Rademacher variable $\hat{\varepsilon}_i=\varepsilon_i-1$. Then we are ready to invoke Lemma~\ref{lem:DV representation},
    \begin{align}
        I(L^+_{i};U_i)\geq \sup_{t>0} t\ex{L^+_{i},U_i}{\pr{(C_1+2)\tilde{\varepsilon}_i-C_1(1-\lambda^2)\hat{\varepsilon}_iR_i}L^+_{i}}-\ln\ex{L^+_{i},U'_i}{e^{t\pr{(C_1+2)\tilde{\varepsilon}'_i-C_1(1-\lambda^2)\hat{\varepsilon}'_iR'_i}L^+_{i}}}.
        \label{ineq:DV-flatness}
    \end{align}

    Similar to the proof of Theorem~\ref{thm:bound-fast-rate-rademacher}, we hope the following hold
    \begin{align}
        \mathbb{E}_{L^+_{i}}\ex{U'_i}{e^{t\pr{(C_1+2)\tilde{\varepsilon}'_i-C_1(1-\lambda^2)\hat{\varepsilon}'_iR'_i}L^+_{i}}}\leq 1.
        \label{ineq:bound-moment-generating-2}
    \end{align}

    By the independence, we have
    \[
    \mathbb{E}_{L^+_{i}}\ex{U'_i}{e^{t\pr{(C_1+2)\tilde{\varepsilon}'_i-C_1(1-\lambda^2)\hat{\varepsilon}'_iR'_i}L^+_{i}}}=\ex{L^+_{i}}{\frac{e^{2t\pr{C_1(1-\lambda^2)\tilde{r}'_i-C_1-1}L^+_{i}}+e^{2tL^+_{i}}}{2}},
    \]
    where $\tilde{r}'_i\in [0,1]$ is the value (or the realization) of $R'_i$ when $U'_i=0$ (or $\varepsilon'_i=-(-1)^{U'_i}=-1$), e.g., $\tilde{r}'_i=\ex{L^{+\prime}_{i}|U'_i=0}{L^{+\prime}_{i}}$.

    Then, since $L^+_{i}$ could be either $0$ or $1$. We now consider the two cases.

    (i) When $L^+_{i}=0$, then ${\frac{e^{2t\pr{C_1(1-\lambda^2)\tilde{r}'_i-C_1-1}L^+_{i}}+e^{2tL^+_{i}}}{2}}=1$. Therefore, when $L^+_{i}=0$, the value of ${R}'_i$ has no effect on the moment generating function.

    (ii) When $L^+_{i}=1$, we have the formula  $\frac{e^{2t\pr{C_1(1-\lambda^2)\tilde{r}'_i-C_1-1}}+e^{2t}}{2}$. Notably, only when $\varepsilon'_i=-1$ (or $U'_i=0$) and $L^+_{i}=1$, the value of $R'_i$, viz, $\tilde{r}'_i$, has some impact on the moment generating function. Since $\tilde{r}'_i\in [0,1]$, it's sufficient to upper bound ${R}'_i$ by the random variable $\frac{1-\varepsilon_i'}{2}=\frac{-\hat{\varepsilon}'_i}{2}$. Thus,
    \begin{align}
        \mathbb{E}_{L^+_{i}}\ex{U'_i}{e^{t\pr{(C_1+2)\tilde{\varepsilon}'_i-C_1(1-\lambda^2)\hat{\varepsilon}'_iR'_i}L^+_{i}}}\leq \mathbb{E}_{L^+_{i}}\ex{U'_i}{e^{t\pr{(C_1+2)\tilde{\varepsilon}'_i+\frac{C_1}{2}(1-\lambda^2)\hat{\varepsilon}'^{2}_i}L^+_{i}}}.
        \label{ineq:bound-moment-generating-important}
    \end{align}

     By the moment generating function of the Bernoulli random variable $L^+_{i}$, we have
    \begin{align*}
        \mathbb{E}_{U'_i}\ex{L^+_{i}}{e^{t\pr{(C_1+2)\tilde{\varepsilon}'_i+\frac{C_1}{2}(1-\lambda^2)\hat{\varepsilon}'^{2}_i}L^+_{i}}}=&\ex{U_i}{1-\ex{L^+_{i}}{L^+_{i}}+\ex{L^+_{i}}{L^+_{i}}e^{t\pr{(C_1+2)\tilde{\varepsilon}'_i+\frac{C_1}{2}(1-\lambda^2)\hat{\varepsilon}'^{2}_i}}}\\
        =&1-\ex{L^+_{i}}{L^+_{i}}+\ex{L^+_{i}}{L^+_{i}}\frac{e^{-2t\pr{C_1\lambda^2+1}}+e^{2t}}{2}.
    \end{align*}

    Since $0\leq \ex{L^+_{i}}{L^+_{i}}\leq 1$, we only need to require that
    \[\frac{e^{-2t\pr{C_1\lambda^2+1}}+e^{2t}}{2}\leq 1.
    \]



   Replacing $t$ by $C_2$ and putting everything together (Eq.~(\ref{eq:error-symmetric}-\ref{ineq:bound-moment-generating-2})), we have
   \[
   \mathrm{Err}\leq \frac{C_1}{n}\sum_{i=1}^n F_i(\lambda)+\sum_{i=1}^n\frac{I(L^+_{i};U_i)}{nC_2}.
   \]

    This completes the proof.
\end{proof}

\subsection{Proof of Corollary~\ref{cor:bound-variance-flatness}}
\begin{proof}
    According to the proofs in Section~\ref{sec:proof-variance-bound} and Section~\ref{sec:proof-sharpness-bound}, we know that the sufficient conditions to let Eq.~(\ref{ineq:variance-bound-general}) and Eq.~(\ref{ineq:flatness-bound-general}) hold are $e^{2C_2}+e^{-2C_2(C_1\gamma^2+1)}\leq 2$ and $e^{2C_2}+e^{-2C_2(C_1\lambda^2+1)}\leq 2$, respectively. Given that both $\gamma,\lambda\in(0,1)$, there must exist $C_1,C_2$ to let both Eq.~(\ref{ineq:variance-bound-general}) and Eq.~(\ref{ineq:flatness-bound-general}) hold. Then taking minimum of these two inequalities will give us the desired result.
\end{proof}

 Additionally, in any of the following case: (i) $L_n\to 0$; (ii)$V(\gamma)\to 0$ for some $\gamma\in(0,1)$; (iii)$F(\lambda)\to 0$ for some $\lambda\in(0,1)$, we can let $C_1\to \infty$ and let $C_2=\frac{\ln{2}}{2}$, then 
$\mathrm{Err}\leq \sum_{i=1}^n\frac{2I(L_i^+;U_i)}{n\ln{2}}$. This justifies the remark after Corollary~\ref{cor:bound-variance-flatness}.

\section{Some Background on Channel Capacity of Binary Channel}
In this section, we follow the custom of the notations in \citet{thomas2006elements}, where the logarithm usually has a base of $2$ (i.e. $\log_2$). In addition, for a binary random variable, the entropy function $H(\cdot)$ can be a binary entropy function, for example, the random variable $X$ has the value $0$ and $1$, and $P(X=1)=p$, $P(X=0)=1-p$, then $H(X)=H(p)=-p\log_2{p}-(1-p)\log_2{(1-p)}$. The channel capacity of a channel between input $X$ and output $Y$ is defined as $C\triangleq\max_{P_X}I(X;Y)$.
\subsection{Channel Capacity of Binary Symmetric Channel (BSC)}
In a general case, the channel capacity of Figure~\ref{fig:binarychannel}(left) can be computed as in the following lemma.
\begin{lem}
    \label{lem:channel-capacity}
    When $X\sim P_{U_i}$, the channel capacity of the channel in Figure~\ref{fig:binarychannel}(left) is achieved and $C=(1-\alpha)\pr{1-H\pr{\frac{1-\alpha-\epsilon}{1-\alpha},\frac{\epsilon}{1-\alpha}}}$.
\end{lem}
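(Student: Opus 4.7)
The plan is to verify the claim in two logical pieces: first, that the uniform input distribution $P_{U_i}=\mathrm{Bern}(1/2)$ is capacity-achieving for this ternary-output binary channel, and second, that the mutual information it yields simplifies exactly to $(1-\alpha)\bigl(1-H(\tfrac{1-\alpha-\epsilon}{1-\alpha},\tfrac{\epsilon}{1-\alpha})\bigr)$. For the first piece I would appeal to the symmetry of the channel: partition the output alphabet as $\{0\}\cup\{-1,+1\}$; on $\{0\}$ the sub-transition matrix has both entries equal to $\alpha$, and on $\{-1,+1\}$ it is $\bigl(\begin{smallmatrix}1-\alpha-\epsilon & \epsilon\\ \epsilon & 1-\alpha-\epsilon\end{smallmatrix}\bigr)$, whose rows and columns are permutations of each other. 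This makes the channel symmetric in Gallager's sense, and standard theory then gives that the capacity-achieving input is uniform; alternatively, one can invoke concavity of $P_X\mapsto I(X;Y)$ together with the fact that swapping the two input symbols is equivalent to swapping $+1\leftrightarrow -1$ in the output, so averaging $P_X$ with its swap (which gives uniform) can only increase $I(X;Y)$.

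For the second piece I would compute $I(X;Y)=H(Y)-H(Y|X)$ under uniform $X$. For $H(Y|X)$ the two conditional laws are permutations of one another, so $H(Y|X)=H(1-\alpha-\epsilon,\alpha,\epsilon)$. Using the grouping identity for entropy by revealing first whether $Y=0$ and then, given $Y\neq 0$, which of $\pm 1$ it is, this decomposes as
\begin{equation*}
H(Y|X)=H(\alpha)+(1-\alpha)\,H\!\left(\tfrac{1-\alpha-\epsilon}{1-\alpha},\tfrac{\epsilon}{1-\alpha}\right).
\end{equation*}
Under uniform $X$, by symmetry $P(Y=0)=\alpha$ and $P(Y=+1)=P(Y=-1)=(1-\alpha)/2$, so the same grouping identity gives $H(Y)=H(\alpha)+(1-\alpha)\cdot 1$. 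Subtracting yields exactly the announced formula.

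I do not expect a genuine obstacle here; the only point that requires a little care is justifying that uniform input is optimal, since the channel is not ``weakly symmetric'' in the strict sense that uniform input produces uniform output. I would handle this by explicitly citing the symmetric-channel partition argument (or the concavity/swap argument) rather than leaving it implicit. Everything else is a direct entropy computation that fits on a few lines.
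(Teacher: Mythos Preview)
Your proposal is correct and follows essentially the same approach as the paper: both compute $I(X;Y)=H(Y)-H(Y|X)$ and use the grouping identity to decompose the ternary entropies as $H(\alpha)+(1-\alpha)H(\cdot,\cdot)$, arriving at the same cancellation. The only minor difference is that the paper establishes optimality of $\pi=1/2$ by writing $H(Y)$ for general $\pi$ and directly maximizing the inner binary entropy (equality when $A=B$), whereas you invoke the channel's symmetry/concavity argument; both are standard and yield the same conclusion.
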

We note that this lemma is an exercise problem in \citet[Problem~7.13]{thomas2006elements}.

\begin{proof}
    Let $P(X=0)=\pi$ and $P(X=0)=1-\pi$, then
    $
        I(X;Y)=H(Y)-H(Y|X)
        =H(Y)-H(1-\epsilon-\alpha,\alpha,\epsilon).
    $
    It's easy to see that $H(Y)=H(\pi(1-\alpha-\epsilon)+\epsilon(1-\pi),\alpha,\pi\epsilon+(1-\pi)(1-\epsilon-\alpha))$. We let $A=\pi(1-\alpha-\epsilon)+\epsilon(1-\pi)$ and $B=\pi\epsilon+(1-\pi)(1-\epsilon-\alpha))$. Notice that $A+B=1-\alpha$, then
    \begin{align*}
        H(Y)=&H(\pi(1-\alpha-\epsilon)+\epsilon(1-\pi),\alpha,\pi\epsilon+(1-\pi)(1-\epsilon-\alpha))\\
        =&-\br{(A+B)\log_2(1-\alpha)+ \alpha\log_2\alpha+A\log_2\frac{A}{1-\alpha}+ B\log_2 \frac{B}{1-\alpha}}\\
        =& H(\alpha)-(1-\alpha)\br{\frac{A}{1-\alpha}\log_2\frac{A}{1-\alpha}+ \frac{B}{1-\alpha}\log_2 \frac{B}{1-\alpha}}\\
        =& H(\alpha)+(1-\alpha)H(\frac{A}{1-\alpha},\frac{B}{1-\alpha})\leq  H(\alpha)+1-\alpha.
    \end{align*}
    To achieve the channel capacity (or to let the equality above hold), we need to let $A=B$, which indicates that $\pi=\frac{1}{2}$. 

    Thus,
    \begin{align*}
        C=H(\alpha)+1-\alpha-H(1-\epsilon-\alpha,\alpha,\epsilon)
        =&H(\alpha)+1-\alpha-\pr{H(\alpha)+(1-\alpha)H\pr{\frac{1-\epsilon-\alpha}{1-\alpha},\frac{\epsilon}{1-\alpha}}}\\
        =&(1-\alpha)\pr{1-H\pr{\frac{1-\alpha-\epsilon}{1-\alpha},\frac{\epsilon}{1-\alpha}}},
    \end{align*}
    which completes the proof.
\end{proof}

\subsection{Channel Capacity of Binary Asymmetric Channel (BAC)}
The channel capacity of the BAC channel in Figure~\ref{fig:binarychannel}(right) is given below.
\begin{lem}
\label{lem:channel-capacity-bac}
    The channel capacity of the BAC in Figure~\ref{fig:binarychannel}(right) is 
    $C=\log_2{\pr{1+\beta}}-\frac{1-q}{1-p-q}H(p)+\frac{p}{1-p-q}H(q),
    $
    where $\beta=2^{\frac{H(p)-H(q)}{1-p-q}}$ and the capacity is achived when $P(U_i=1)=\frac{1-q(1+\beta)}{(1-p-q)(1+\beta)}$. Further, if $p=0$ (i.e. Z-channel), $C=\log_2\pr{1+2^\frac{-H(q)}{1-q}}$, and for small $q$, the capacity can be approximated by $C\approx 1-\frac{1}{2}H(q)$.
\end{lem}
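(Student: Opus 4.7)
The plan is to carry out the standard Lagrangian-style optimization of $I(U_i;L^+_i)$ over the input distribution $P_{U_i}$, exploiting the fact that the BAC is a channel with a scalar degree of freedom on the input side. Let me write $\pi \triangleq P(U_i=1)$ and set $P \triangleq P(L^+_i=1)$. From the transition probabilities in Figure~\ref{fig:binarychannel} (right), one has $P = p(1-\pi) + (1-q)\pi = p + \pi(1-p-q)$, so $P$ is an affine function of $\pi$ with slope $1-p-q$. The mutual information admits the decomposition $I(U_i;L^+_i) = H(L^+_i) - H(L^+_i \mid U_i) = H(P) - (1-\pi)H(p) - \pi H(q)$, which is concave in $\pi$ because $H$ is concave in $P$ and the remaining term is linear.

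The next step is to take $\frac{d}{d\pi}$. Using $\frac{dH(P)}{dP} = \log_2\frac{1-P}{P}$ together with the chain rule yields $\frac{dI}{d\pi} = (1-p-q)\log_2\frac{1-P}{P} + H(p) - H(q)$. Setting this derivative to zero gives $\log_2\frac{1-P^\ast}{P^\ast} = \frac{H(q)-H(p)}{1-p-q}$, i.e. $\frac{1-P^\ast}{P^\ast} = \beta^{-1}$ with $\beta = 2^{(H(p)-H(q))/(1-p-q)}$. Hence $P^\ast = \frac{\beta}{1+\beta}$ and $1-P^\ast = \frac{1}{1+\beta}$. Inverting the affine relation $P^\ast = p + \pi^\ast(1-p-q)$ then produces the claimed capacity-achieving input distribution after a short algebraic simplification that uses $(1-q)(1+\beta) - \beta = 1 - q(1+\beta)$.

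The third step substitutes $P^\ast$ back. A direct computation gives $H(P^\ast) = \log_2(1+\beta) - \frac{\beta}{1+\beta}\log_2\beta$. For the conditional-entropy piece I rewrite it as $H(p) - \pi^\ast(H(p)-H(q))$ and use $H(p) - H(q) = (1-p-q)\log_2\beta$ by definition of $\beta$, which cancels the $(1-p-q)$ factor in the denominator of $\pi^\ast$. Collecting terms, the $\frac{\beta}{1+\beta}\log_2\beta$ contributions cancel, leaving
\begin{equation*}
C = \log_2(1+\beta) - H(p) - p\log_2\beta.
\end{equation*}
Finally, substituting $\log_2\beta = \frac{H(p)-H(q)}{1-p-q}$ and combining the two $H(p)$ terms via $1 + \frac{p}{1-p-q} = \frac{1-q}{1-p-q}$ delivers the stated formula for $C$.

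The two specializations are then immediate corollaries. For the Z-channel ($p=0$), $H(p) = 0$ so $\beta = 2^{-H(q)/(1-q)}$ and the closed form collapses to $C = \log_2(1+2^{-H(q)/(1-q)})$. For the small-$q$ regime, I would Taylor-expand $\beta \approx 1 - (\ln 2)\,H(q)/(1-q) \approx 1 - (\ln 2)H(q)$, then use $\log_2(2 - x) = 1 + \log_2(1 - x/2) \approx 1 - x/(2\ln 2)$ to obtain $C \approx 1 - \tfrac{1}{2}H(q)$. The main obstacle is bookkeeping: the cleanest path is to collect the two appearances of $\frac{\beta}{1+\beta}\log_2\beta$ so they cancel before plugging in the explicit value of $\log_2\beta$, rather than substituting too early and drowning in fractions.
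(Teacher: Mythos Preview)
Your approach is the same as the paper's: parametrize the input by $\pi=P(U_i=1)$, write $I=H(P)-(1-\pi)H(p)-\pi H(q)$ with $P=p+\pi(1-p-q)$, set the derivative to zero, solve, and substitute back. The capacity computation, the Z-channel specialization, and the small-$q$ expansion are all carried out correctly and more explicitly than in the paper.

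There is one small inconsistency in the $\pi^\ast$ step. With your conventions the inversion gives
\[
\pi^\ast=\frac{P^\ast-p}{1-p-q}=\frac{\beta-p(1+\beta)}{(1-p-q)(1+\beta)},
\]
whereas the identity $(1-q)(1+\beta)-\beta=1-q(1+\beta)$ you invoke is exactly what simplifies $1-\pi^\ast$, yielding $\frac{1-q(1+\beta)}{(1-p-q)(1+\beta)}$. In other words, the displayed formula in the lemma is $P(U_i=0)^\ast$, not $P(U_i=1)^\ast$; the paper's own proof has the same slip (it declares $\pi=P(X=1)$ but the subsequent expressions $H(\pi(1-p-q)+q)$ and $-\pi(H(p)-H(q))-H(q)$ are only consistent with $\pi=P(X=0)$). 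Your argument is fine, but you should flag this labeling discrepancy rather than assert that the inversion reproduces the lemma's stated $P(U_i=1)$ verbatim.
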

\begin{rem}
Notice that in this case, let $X\sim\mathrm{Bern}(1/2)$ (i.e. $X=U_i$) will not achieve the channel capacity. Thus, in the interpolating setting, except for Theorem~\ref{thm:channel-capacity-z}, we have another upper bound for $I(L^+_i;U_i)$, namely $\frac{1}{n}\sum_{i=1}^nI(L^+_i;U_i)\leq\frac{1}{n}\sum_{i=1}\ln\pr{1+2^\frac{-H(1-q_i)}{1-q_i}}$. If we further let $q_i$ be the same for each $i$ (which indeed should be true), then $I(L^+_i;U_i)\leq\ln\pr{1+2^\frac{-H(L_\mu)}{L_\mu}}$.
\end{rem}

\begin{proof}
    Let $P(X=1)=\pi$, then $I(X;Y)=H(\pi(1-p-q)+q)-\pi\pr{H(p)-H(q)}-H(q)$.
    Let $\frac{d I(X;Y)}{d \pi}=(1-p-q)\log_2\pr{\frac{1}{\pi(1-p-q)+q}-1}-H(p)+H(q)=0$, we have the optimal $\pi^*=\frac{1-q(1+\beta)}{(1-p-q)(1+\beta)}$ where $\beta=2^{\frac{H(p)-H(q)}{1-p-q}}$. Plugging $\pi=\pi^*$ into the formula of $I(X;Y)$, we have $I(X;Y)=\log_2{\pr{1+\beta}}-\frac{1-q}{1-p-q}H(p)+\frac{p}{1-p-q}H(q)$.
\end{proof}

\section{Experimental Details and Additional Results}
\label{sec:experiments-appendix}
\subsection{Experiment Setup}
\label{sec:setup}
In our linear classifier experiment, we generate synthetic Gaussian data using the widely-used Python package \emph{scikit-learn} \cite{scikit-learn}. We draw each dimension (or feature) of $X$ independently from some Gaussian distribution, and let all the features be informative to its class labels $Y$. Specifically, we choose the dimension of data $X$ to be $5$ and we create different class of points normally distributed (with the standard deviation being $1$) about vertices of an $5$-dimensional hypercube, where its sides of length can be manually controlled. In addition, we utilize full-batch gradient descent with a fixed learning rate of $0.01$ to train the linear classifier. We perform training for a total of $500$ epochs, and we employ early stopping when the training error reaches a sufficiently low threshold (e.g., $<0.5\%$). To ensure robustness and statistical significance, we draw $50$ different supersamples for each experiment. Within each supersample, we further generate $100$ different mask random variables, resulting in a total of $5,000$ runs for each experimental setting. This comprehensive setup enables us to compare both the unconditional MI bounds and the disintegrated MI bounds. Additionally, if the unconditional MI bound is the sole evaluated objective, one has the option to completely restart the training process $5,000$ times.

In the neural networks experiments, we follow the same setup with \cite{harutyunyan2021informationtheoretic,hellstrom2022a}. Specifically, we draw $k_1$ samples of $\widetilde{Z}$ and $k_2$ samples of $U$ for each given $\tilde{z}$.
For the CNN on the binary MNIST dataset, we set $k_1=5$ and $k_2=30$. The 4-layer CNN model is trained using the Adam optimizer with a learning rate of $0.001$ and a momentum coefficient of $\beta_1=0.9$. The training process spans 200 epochs, with a batch size of 128.
For ResNet-50 on CIFAR10, we set $k_1=2$ and $k_2=40$. The ResNet model is trained using stochastic gradient descent (SGD) with a learning rate of $0.01$ and a momentum coefficient of $0.9$ for a total of 40 epochs. The batch size for this experiment is set to 64. In the SGLD experiment, we once again train a 4-layer CNN on the binary MNIST dataset. The batch size is set to 100, and the training lasts for 40 epochs. The initial learning rate is $0.01$ and decays by a factor of $0.9$ after every 100 iterations. Let $t$ be the iteration index, the inverse temperature of SGLD is given by $\min\{4000, \max\{100, 10e^{t/100}\}\}$. We set the training sample size to $n=4000$, and $k_1=5$ and $k_2=30$. We save checkpoints every 4 epochs. All these experiments are conducted using NVIDIA Tesla V100 GPUs with 32 GB of memory. For more comprehensive details, including model architectures, we recommend referring to \cite{harutyunyan2021informationtheoretic,hellstrom2022a}.

Estimating the $\gamma$-variance and $\lambda$-sharpness in the CMI setting is a straightforward process. For example, to estimate sharpness, for each fixed $\tilde{z}$, we store the training losses $L_i^+$ when $U_i=0$ (corresponding to $\tilde{z}^+$) and the training losses $L_i^-$ when $U_i=1$ (corresponding to $\tilde{z}^-$) with different weight configurations $W$. By doing so, we collect the necessary data to compute the second term of the equation in Lemma~\ref{lem:flatness-upper-bound}.

\begin{figure*}[!ht]
    \centering
    \begin{subfigure}[b]{0.24\textwidth}
\includegraphics[scale=0.28]{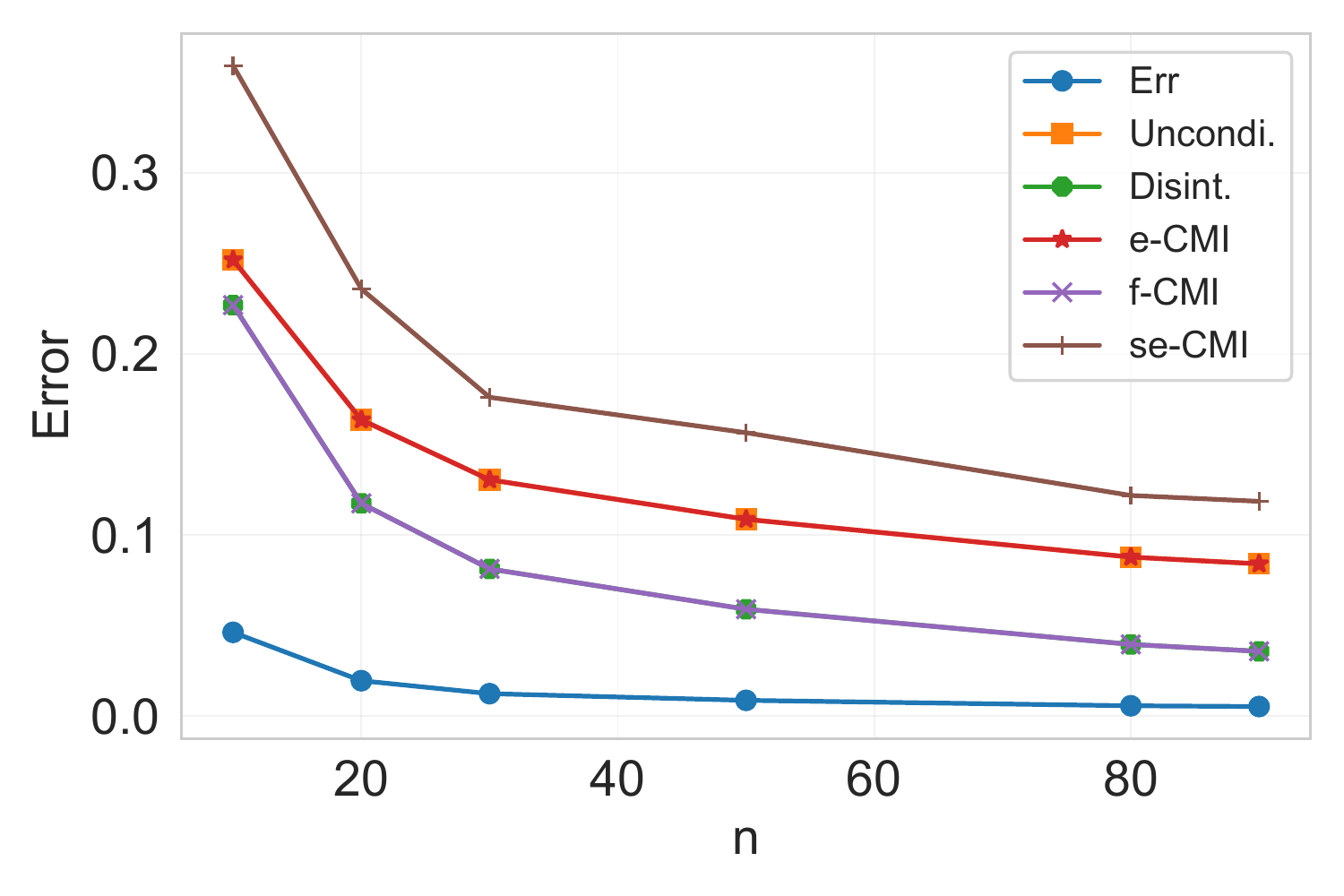}            \caption{$|\mathcal{Y}|=2$ (Realizable)}            
\label{fig:binary-square-root-easy}
    \end{subfigure}
\begin{subfigure}[b]{0.24\textwidth}
\includegraphics[scale=0.28]{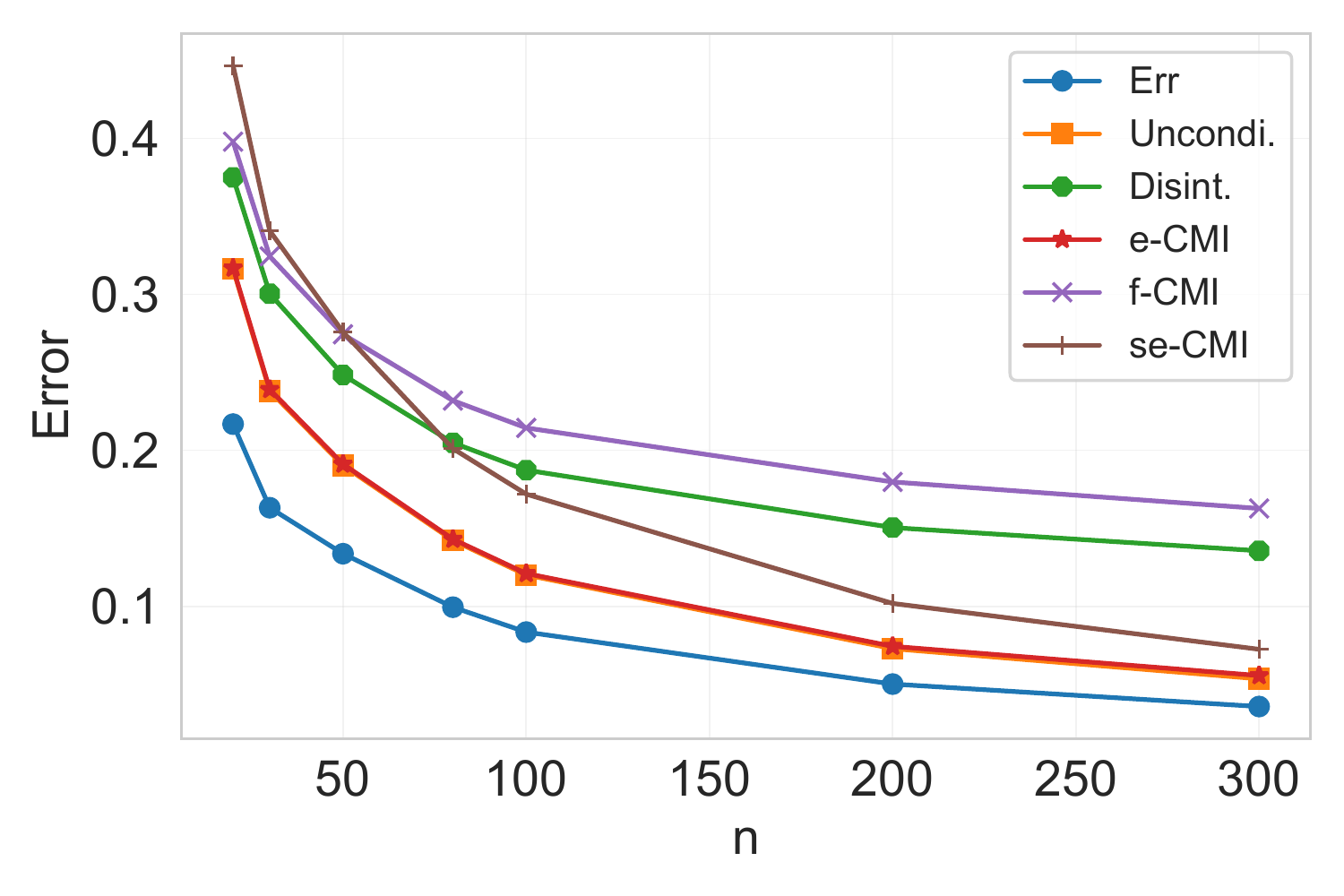}
\caption{$|\mathcal{Y}|=2$ (Non-Separable)}
    \label{fig:binary-square-root-hard}
\end{subfigure}
 \begin{subfigure}[b]{0.24\textwidth}
\includegraphics[scale=0.28]{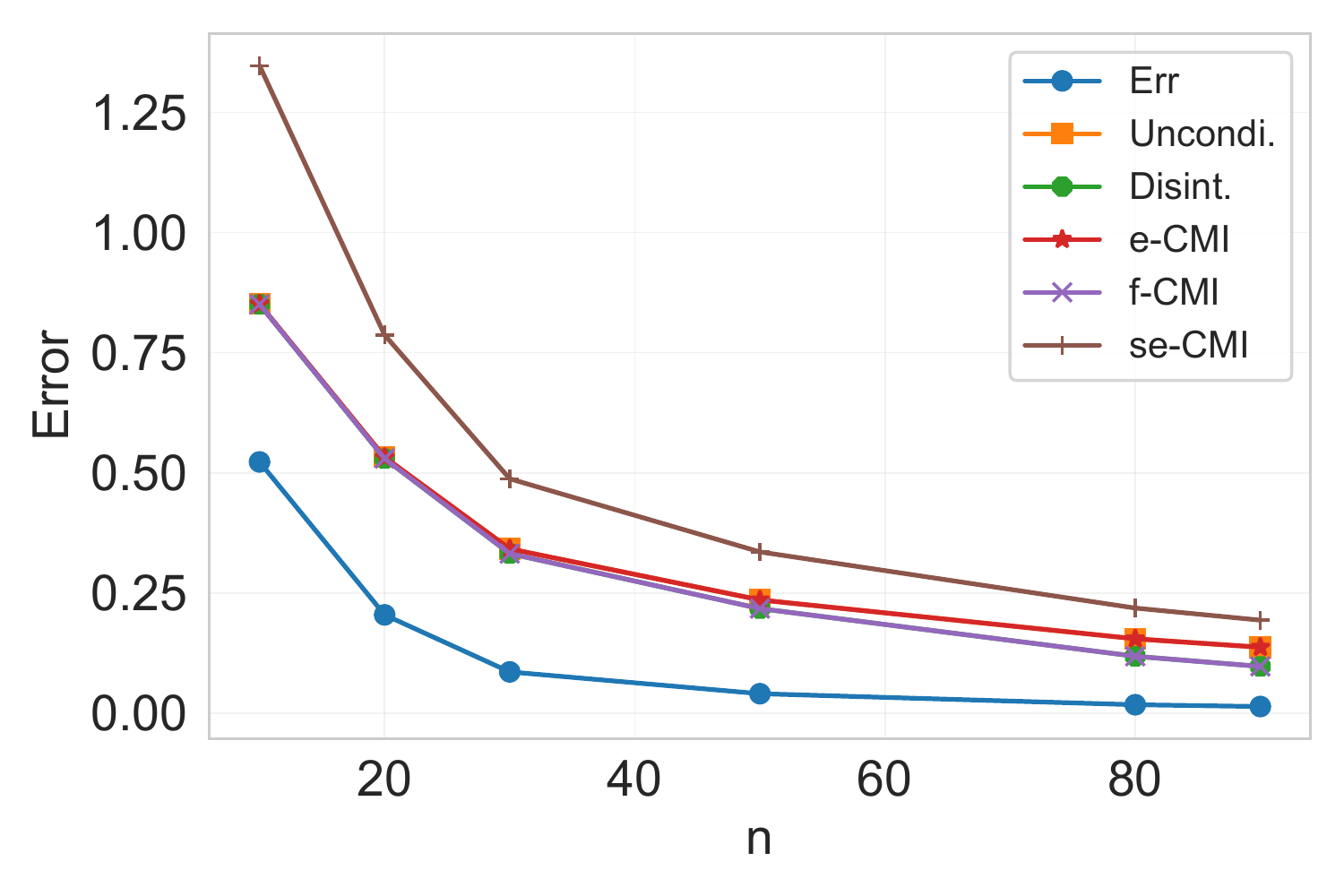}
\caption{$|\mathcal{Y}|=10$ (Realizable)}
\label{fig:ten-square-root-easy}
    \end{subfigure}
\begin{subfigure}[b]{0.24\textwidth}
\includegraphics[scale=0.28]{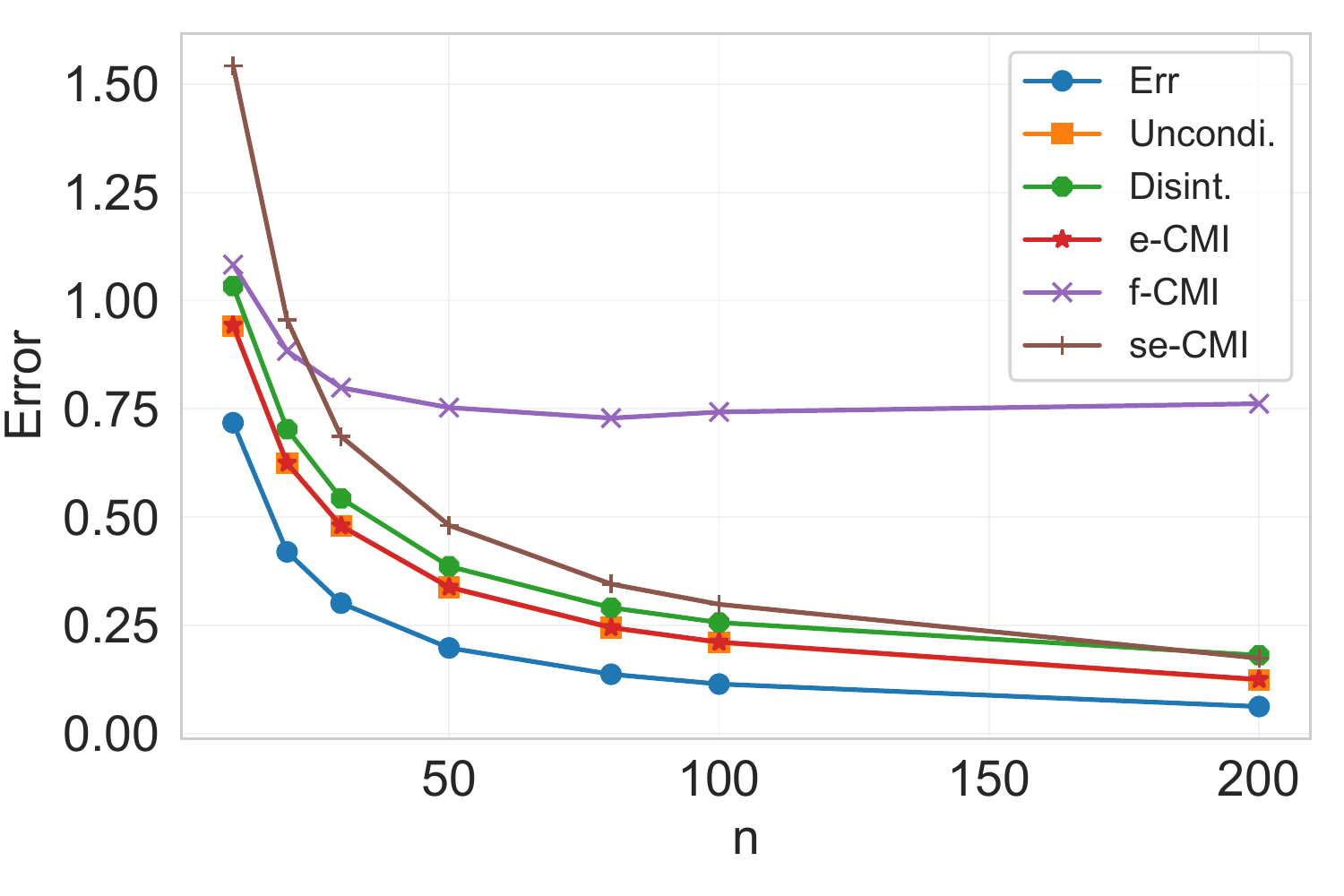}
\caption{$|\mathcal{Y}|=10$ (Non-Separable)}
\label{fig:ten-square-root-hard}
\end{subfigure}
\caption{Comparison of the square-root bounds on the synthetic dataset. Here \emph{f-CMI}, \emph{e-CMI} and \emph{se-CMI} refer to the disintegrated $f$-CMI bound \cite{harutyunyan2021informationtheoretic}, the unconditional e-CMI bound  and the single-loss square-root bound in Theorem~\ref{bound-single-loss-sq}, respectively.}\label{fig:square-root}
\end{figure*}

\subsection{Additional Numerical Results: Comparison of Square-Root Bounds}
We conduct a comparison of square-root bounds on the synthetic dataset, where we also include the disintegrated version of the $f$-CMI bound proposed by \citet{harutyunyan2021informationtheoretic}, an improved unconditional e-CMI bound (obtained by replacing $I(L_i;U_i|\widetilde{Z})$ with $I(L_i;U_i)$), and the single-loss square-root bound presented in Theorem~\ref{bound-single-loss-sq}. The results are illustrated in Figure~\ref{fig:square-root}.
Consistent with the observations in the main text, we find that the disintegrated bounds are tighter than the unconditional MI bounds when the training loss approaches zero, but looser than the unconditional MI bounds when the training loss is large. This suggests that while, according to the DPI, the unconditional e-CMI bound or ld-MI bound should be tighter than the $f$-CMI bound, in some cases, the disintegrated version of the $f$-CMI bound may be tighter than the unconditional e-CMI bound or ld-MI bound.
For non-separable $\mu$, the $f$-CMI bound becomes looser as the number of classes increases, which provides justification for the remarks made after Theorem~\ref{thm:bound-LD-cimi}. In fact, it can be even worse than the single-loss square-root bound in Theorem~\ref{bound-single-loss-sq}, which includes an undesired constant of 2.

\begin{figure*}[!ht]
    \centering
    \begin{subfigure}[b]{0.24\textwidth}
\includegraphics[scale=0.28]{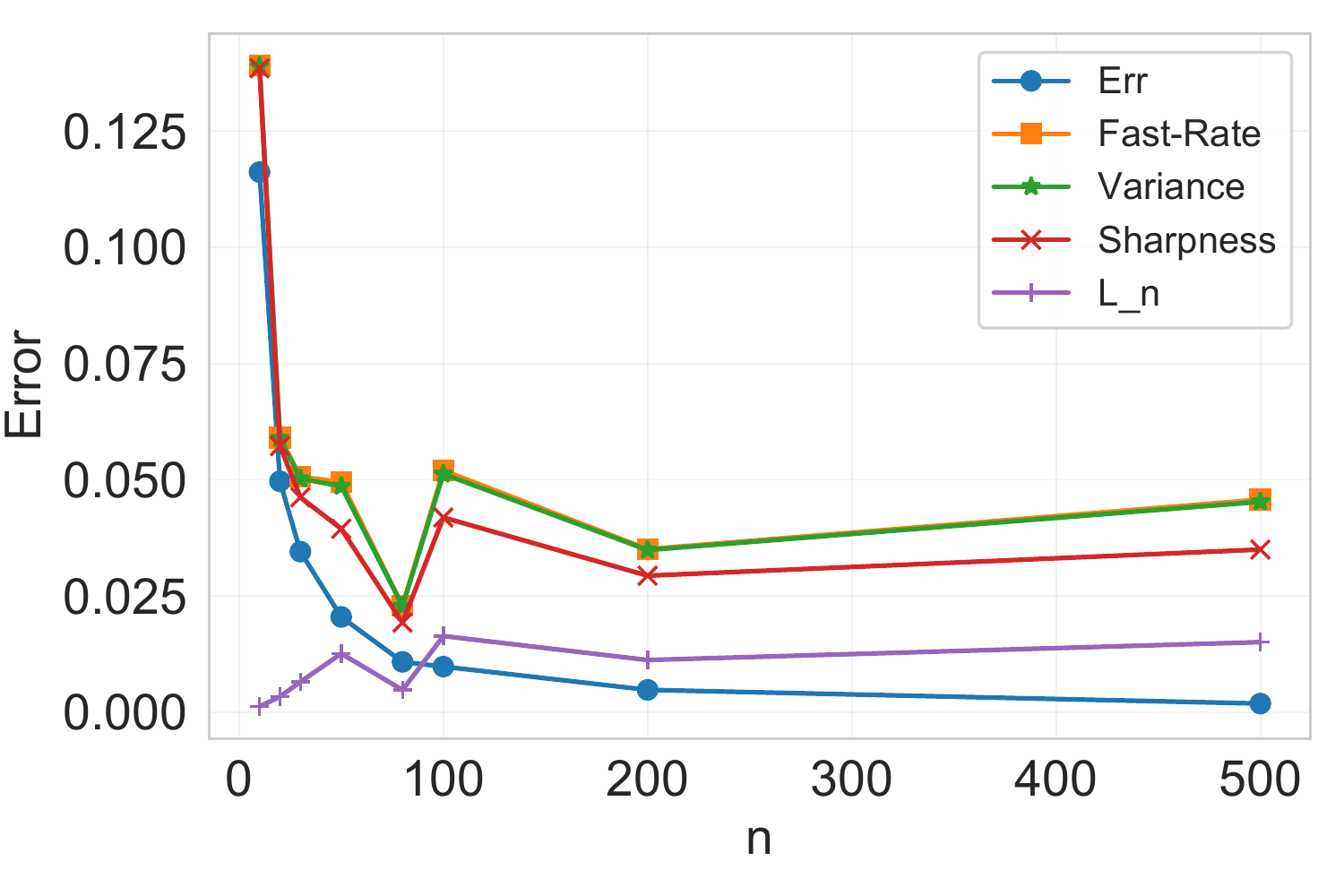}            \caption{$|\mathcal{Y}|=2$ (Small $L_n$)}            \label{fig:binary-fast-rate-easy}
    \end{subfigure}
\begin{subfigure}[b]{0.24\textwidth}
\includegraphics[scale=0.28]{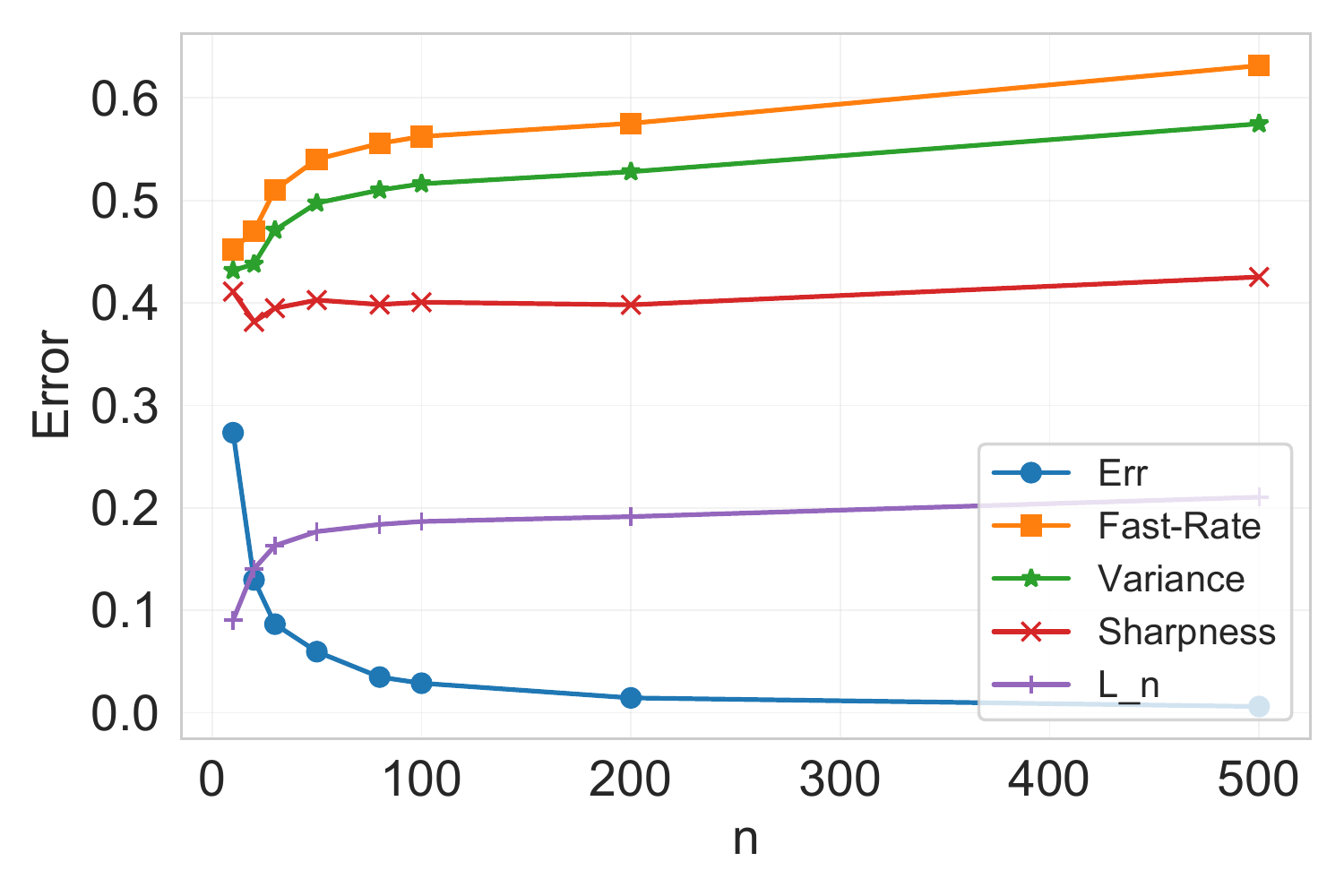}
\caption{$|\mathcal{Y}|=2$ (Large $L_n$)}
    \label{fig:binary-fast-rate-hard}
\end{subfigure}
 \begin{subfigure}[b]{0.24\textwidth}
\includegraphics[scale=0.28]{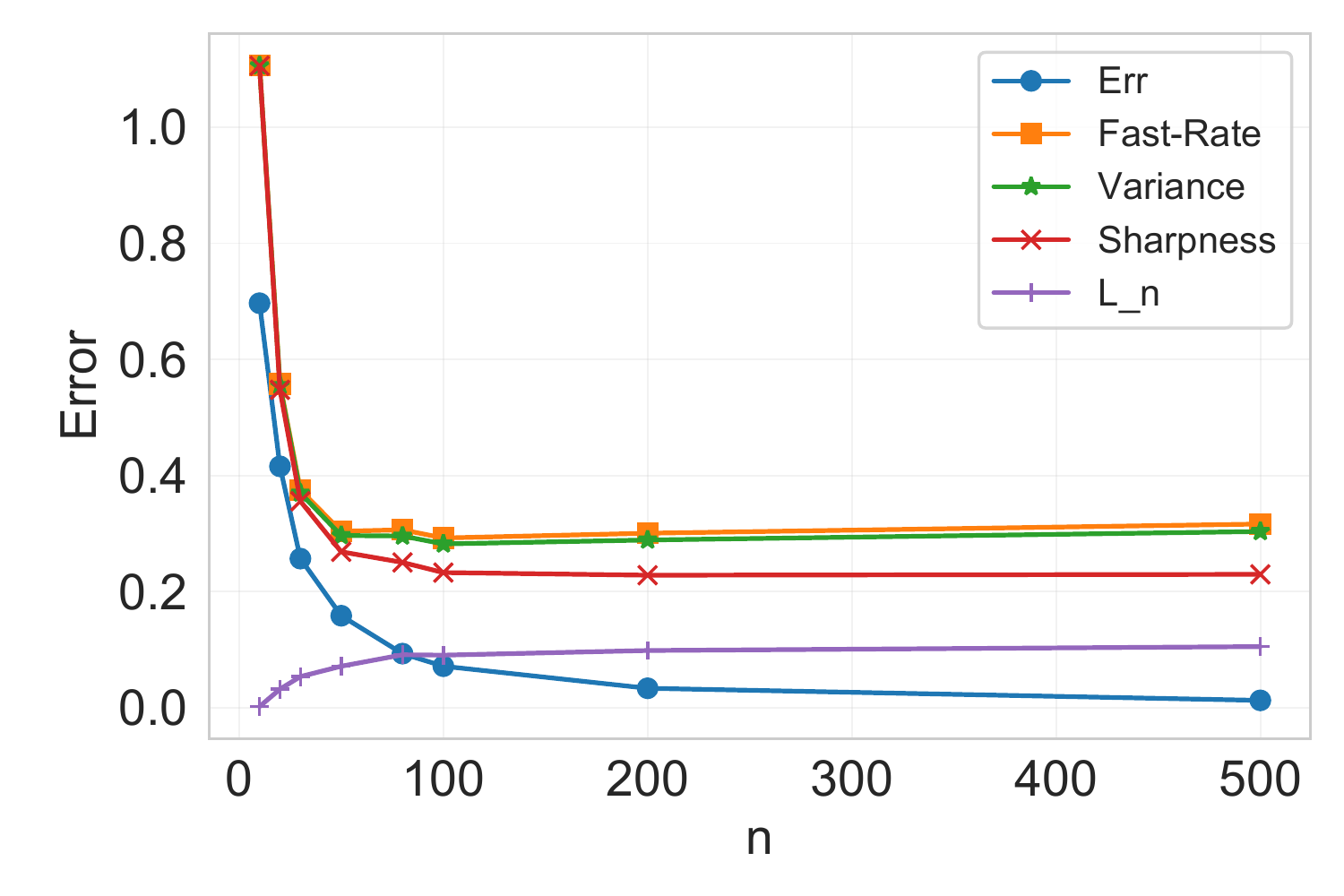}
\caption{$|\mathcal{Y}|=10$ (Small $L_n$)}
\label{fig:ten-fast-rate-easy}
    \end{subfigure}
\begin{subfigure}[b]{0.24\textwidth}
\includegraphics[scale=0.28]{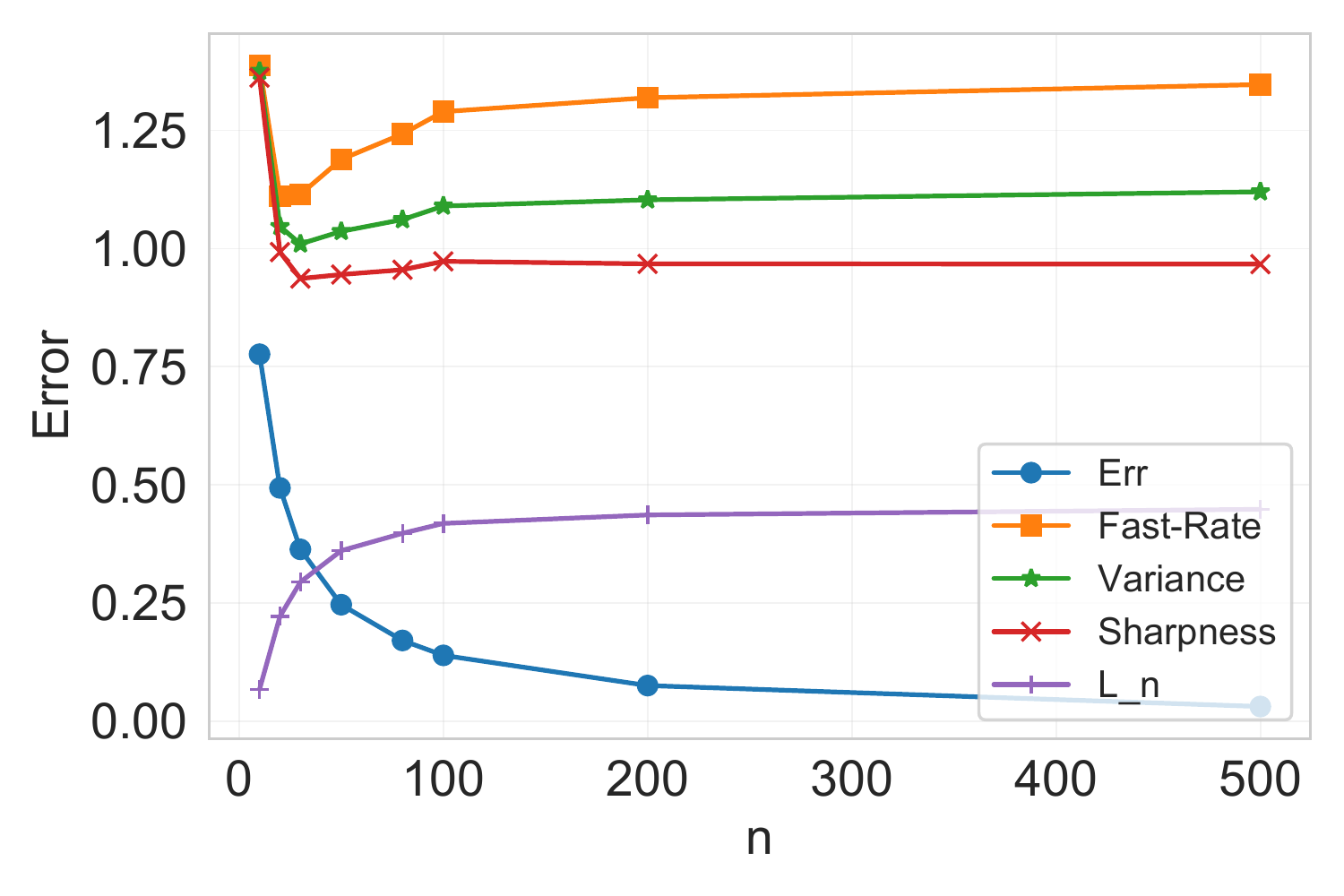}
\caption{$|\mathcal{Y}|=10$ (Large $L_n$)}
\label{fig:ten-fast-rate-hard}
\end{subfigure}
\caption{Comparison of three fast-rate bounds on the synthetic dataset. Here \emph{Fast-Rate} refers to the fast-rate bound of Eq.~(\ref{ineq:bound-fast-rate-general}) in Theorem~\ref{thm:bound-fast-rate-rademacher}.}\label{fig:fast-rate}
\end{figure*}

\subsection{Additional Numerical Results: Comparison of Fast-Rate Bounds}
We conduct a comparison of fast-rate bounds, including Eq.~(\ref{ineq:bound-fast-rate-general}) in Theorem~\ref{thm:bound-fast-rate-rademacher}, the variance bound in Theorem~\ref{thm:bound-variance}, and the sharpness bound in Theorem~\ref{thm:bound-flatness}, on the synthetic dataset with fixed values of $C_1$ and $C_2$. As mentioned in the main text, if $L_n \to 0$, both $V(\gamma)$ and $F(\lambda)$ become zero, resulting in the three bounds being equivalent. However, when $L_n \neq 0$, the variance bound and sharpness bound are always sharper than Eq.~(\ref{ineq:bound-fast-rate-general}), as discussed earlier.
In Figure~\ref{fig:fast-rate}, we compare these bounds with fixed values of $C_1=3$ and $C_2=0.3$. Figures~\ref{fig:binary-fast-rate-easy} and \ref{fig:ten-fast-rate-easy} demonstrate that when $L_n$ is small, the gap between the variance bound and Eq.~(\ref{ineq:bound-fast-rate-general}) is small, indicating that the loss variance in this case is also small. However, the sharpness bound clearly outperforms the other two bounds.
Furthermore, in Figures~\ref{fig:binary-fast-rate-hard} and \ref{fig:ten-fast-rate-hard}, when $L_n$ is large, both the sharpness bound and the variance bound significantly improve upon Eq.~(\ref{ineq:bound-fast-rate-general}). Notably, only the sharpness bound remains non-vacuous in Figure~\ref{fig:ten-fast-rate-hard}.



\end{appendices}

\end{document}